\documentclass[twoside,11pt]{article}

\usepackage{blindtext}

\usepackage{amssymb, amsmath, amsthm, url, tikz, xcolor, algorithm, algpseudocodex, caption, subcaption, float, mathtools}

\usepackage[nohyperref, abbrvbib, preprint]{jmlr2e}

\usepackage[shortlabels]{enumitem}
\usepackage[colorlinks=true,bookmarks=false,linkcolor=blue,urlcolor=violet,citecolor=blue,breaklinks=true]{hyperref}
\usepackage{cleveref}
\usepackage[margin=1in]{geometry}

\graphicspath{{fig}}

\usepackage{macros}

\ShortHeadings{Representation Learning via Manifold Manipulation}{Psenka, Pai, Raman, Sastry, Ma}
\firstpageno{1}

\usepackage{lastpage}

\begin{document}

\title{Representation Learning via Manifold Flattening and Reconstruction}

\author{\name Michael Psenka \email psenka@eecs.berkeley.edu \\
       \addr Department of Electrical Engineering and Computer Science \\
       University of California, Berkeley\\
       Berkeley, CA 94720-1776, USA
       \AND
       \name Druv Pai \email druvpai@berkeley.edu \\
       \addr Department of Electrical Engineering and Computer Science \\
       University of California, Berkeley\\
       Berkeley, CA 94720-1776, USA
       \AND
       \name Vishal Raman \email vraman@berkeley.edu \\
       \addr Department of Electrical Engineering and Computer Science \\
       University of California, Berkeley\\
       Berkeley, CA 94720-1776, USA
       \AND
       \name Shankar Sastry \email sastry@coe.berkeley.edu \\
       \addr Department of Electrical Engineering and Computer Science \\
       University of California, Berkeley\\
       Berkeley, CA 94720-1776, USA
       \AND
       \name Yi Ma \email yima@eecs.berkeley.edu \\
       \addr Department of Electrical Engineering and Computer Science \\
       University of California, Berkeley\\
       Berkeley, CA 94720-1776, USA}

\maketitle

\begin{abstract}
    This work proposes an algorithm for explicitly constructing a pair of neural networks that linearize and reconstruct an embedded submanifold, from finite samples of this manifold. Our such-generated neural networks, called \textit{Flattening Networks} (FlatNet), are theoretically interpretable, computationally feasible at scale, and generalize well to test data, a balance not typically found in manifold-based learning methods. We present empirical results and comparisons to other models on synthetic high-dimensional manifold data and 2D image data. Our code is publicly available.
\end{abstract}

\begin{keywords}
  Unsupervised learning, autoencoders, manifold learning, geometry, deep learning
\end{keywords}

\section{Introduction} \label{sec:intro}

\textit{Autoencoding} \citep{kramer1991nonlinear} remains an important framework in deep learning and representation learning \citep{bengio2013representation}, where one aims to both encode data into a more compact, lower-dimensional representation and then decode it back into the original data format. This framework is often used on top of deep learning applications to ease downstream learning and processing \citep{rombach2022high,turian2010word,he2022masked} as well as a design principle for the network itself \citep{ronneberger2015u,devlin2018bert}.

However, these models are typically so-called ``black-box models'': many important parameters of the model cannot be directly optimized or chosen from mathematical principle, and instead need to be chosen from either extensive trial-and-error or from previous knowledge likewise obtained from trial-and-error. When one is building autoencoder models in a novel domain, this issue can hinder the speed and ease of development.

To resolve this issue, we design a framework for automatically building up an autoencoder model given a \textit{geometric} model of the dataset; namely, we assume that the ``manifold hypothesis'' \citep{fefferman2016testing} holds for the dataset we wish to encode/decode. Given this assumption, along with some needed regularity and sampling conditions, we can greedily construct the layers of an encoder/decoder pair in a forward fashion, similarly in spirit to recent works \citep{chan2022redunet, hinton2022forward}, such that the learned encoding is as low-dimensional as possible, in as few layers as possible. In this manner, the architecture is automatically constructed to be optimal. While there are a few scalar parameters that need pre-specification by the network designer, these are all mathematically interpretable and in practice not sensitive for training results.

Our approach to this problem is through \textit{manifold flattening}, or \textit{manifold linearization}: if we can globally deform the ``data manifold'' into a linear structure, then we can apply more interpretable linear models on top of these constructed features. Our motivation for focusing on manifold linearization comes from common practice: any successfully trained deep learning model whose last layer is linear (e.g. multi-layer perceptrons) has ``linearized'' their problem, since the deep model has implicitly learned a map (all but the last layer) such that the target problem is solvable by a linear map (the last layer).

\subsection{Problem formulation} \label{sec:prob_formulation}

We now give a mathematical formulation for the autoencoding problem that we want to solve, and define notation used for the rest of the paper. We assume foundational knowledge of differential geometry; for an introduction and review, see \Cref{sec:diffgeo}.
\\

\noindent \textbf{Fundamental problem.} Consider a \textit{dataset} \(\X = \{x_{1}, \dots, x_{N}\} \subseteq \R^{D}\). We make the \textit{geometric} assumption that our data \(x_{i}\) are drawn near a low-dimensional embedded sub-manifold \(\M \subset \R^{D}\) which has intrinsic dimension \(d \leq D\); we say that \(\M\) is the \textit{data manifold}. We aim to construct an \textit{encoder} \(\f \colon \R^{D} \to \R^{p}\), and a \textit{decoder} \(\g \colon \R^{p} \to \R^{D}\), which are a \textit{minimal autoencoding pair} in the following sense.

\begin{definition}\label{def:autoencode}
    Given an embedded submanifold \(\M \subset \R^{D}\) and a scalar parameter \(\eps > 0\), we say that a pair of continuous functions \(\f \colon \R^{D} \to \R^{p}\) and \(\g \colon \R^{p} \to \R^{D}\) are an \textit{\(\eps\)-minimal autoencoding pair} for \(\M\) if both of the following conditions hold:
     \begin{enumerate}[(a)]
         \item (Autoencoding.) For all \(x \in \M\), we have \(\norm{\g(\f(x)) - x}_{2} \leq \eps\), and
         \item (Minimality.) \(p\) is the smallest integer such that there exist continuous \(\f, \g\) such that (a) holds.
     \end{enumerate}
\end{definition}

\noindent \textbf{Computational considerations.} We mainly concern ourselves with computational time, and the scalability requirements often needed for modern data requirements. Ideally, our autoencoder algorithm follows the following scaling laws:

\begin{enumerate}
    \item \textit{Linear time in the number of samples \(N\)}, \(O(N)\). Datasets have become incredibly large, easily reaching the millions for many modern tasks, so even quadratic time in \(N\) quickly becomes uncomputable in practice.
    \item \textit{Linear time in the ambient dimension \(D\)}, \(O(D)\). Similarly with the number of samples, the ambient dimension in many cases can easily reach the thousands or millions, so anything above linear time becomes practically uncomputable.
    \item \textit{Polynomial time in the intrinsic dimension \(d\)}, \(O(d^{k})\) for some \(k\in \mathbb{N}\). In contrast to the sample size and ambient dimension, we expect the intrinsic dimension of data manifolds to be relatively small \citep{wright2022high}; for example, MNIST \citep{deng2012mnist} has ambient dimension \(D = 784\), but the intrinsic dimension of a single class of digits is estimated to be around \(12 \leq d \leq 14\) \citep{hein2005intrinsic,costa2006determining,facco2017estimating}. Still, manifold learning in generality requires exponential time and samples in the intrinsic dimension, \(O(c^{d})\), which is infeasible for even relatively low \(d\).
\end{enumerate}

\noindent \textbf{Assumptions on the data manifold \(\M\).} An important area of interest is to determine the correct assumptions for the kinds of manifolds often found in real world data. For this paper, our assumptions are relatively minimal, but it is important to be clear about them. For more information about the motivation of the following assumptions, see \Cref{thm:conv_flat} and its proof in \Cref{sec:proofs}.

\begin{enumerate}
    \item We assume \(\M\) is smooth.  This is a standard regularity assumption.
    
    \item We assume \(\M\) is compact. This assumption is reasonable in that it is fulfilled for closed and bounded sub-manifolds of Euclidean space, which we encounter in many application contexts. For example, the manifold of \(28 \times 28\) greyscale images as embedded in \(\R^{784}\) (where \(784 = 28 \cdot 28\)) is bounded in \([0, 1]^{784}\). Adding closedness simply makes analysis easier.

    \item We assume \(\M\) is connected. In spirit, we focus on a single class of data in this work. While it is an important future direction to extend to multi-class settings, we focus on the single-class setting in this work to emphasize the role of compression.

    \item We assume \(\M\) is flattenable. Due to requiring differential geometry concepts to define, flattenability is more rigorously defined later in the work, in \Cref{def:flattenable}. For now, this is the most restrictive assumption of the four, but there is still reason to believe this assumption is commonly satisfied for commonly encountered data manifolds in neural network-based applications; see the end of \Cref{sec:manifold_flattening} for more discussion about this point.
\end{enumerate}

\noindent \textbf{Assumptions on the dataset \(\X\).} The main assumption we make is a \textit{geometric} property which quantifies how much the data and the manifold are representative of each other.
\begin{definition}\label{def:faithful}
    Given an embedded submanifold \(\M \subset \R^{D}\), a finite dataset \(\X = \{x_{1}, \dots, x_{N}\} \subseteq \R^{D}\), and scalar parameters \(\eps, \delta > 0\), we say that  is \textit{\((\eps, \delta)\)-faithful} to \(\M\) if both of the following hold:
    \begin{enumerate}
        \item (\(\M\) represents \(\X\).) \(\inf_{x \in \M}\norm{x - x_{i}}_{2} \leq \eps\) for all \(i \in [N]\).
        \item (\(\X\) represents \(\M\).) \(\min_{i \in [N]}\norm{x - x_{i}}_{2} \leq \delta\) for all \(x \in \M\).
    \end{enumerate}
    One may re-phrase these conditions as saying that \(\M\) is an \(\eps\)-cover for \(\X\) and \(\X\) is a \(\delta\)-cover for \(\M\).
\end{definition}
Because \(\M\) is compact, for every \(\eps, \delta > 0\), there exists a finite \((\eps, \delta)\)-faithful dataset for \(\M\).

We do not make explicit statistical assumptions of our data, such as the assumption that the data \(x_{i}\) are drawn i.i.d.~from a distribution on \(\R^{D}\). This distinguishes our work from other autoencoder analysis and, more generally, much of statistical learning theory.
\\

\noindent \textbf{Putting it all together.} Finally, we lay down the defining properties of our desired algorithm for efficient computation of a minimal autoencoding pair. Note that we drop parameters for brevity.

\begin{definition}\label{def:main_problem}
    Given an embedded submanifold \(\M \subset \R^{D}\) of intrinsic dimension \(d\) and an faithful dataset \(\X \subset \R^{D}\), an algorithm is said to \textit{efficiently compute a minimal autoencoding pair for \(\M\)} if it computes a minimal autoencoding pair in \(O(DNd^{k})\) flops for some \(k \in \mathbb{N}\), with access only to \(\X\) (and not \(\M\) or \(d\)).
\end{definition}

\section{Related work}

We now provide a brief overview of works in this domain. The first two, VAEs and manifold learning, are the most direct alternative algorithms to solve the problem given in \Cref{def:main_problem}. However, each of these have fundamental limitations, which we address with our work. We also include methods that match our problem and algorithm in spirit, but in reality address fundamentally different problems.

\subsection{Variational autoencoders}

Arguably the most well-known class of autoencoders at the time of writing is the class of \emph{variational autoencoders} (VAEs) \citep{kingma2013auto}, which have seen much empirical success, both traditionally in computer vision problems \citep{higgins2017beta,van2017neural,kim2018disentangling}, and recently as a black-box method for nonlinear dimensionality reduction within the framework of within so-called \textit{diffusion} models \citep{vahdat2021score,rombach2022high}. Despite their empirical success, VAEs have a few endemic shortcomings, including posterior collapse \citep{lucas2019understanding}. However, the most common issue, obeyed by all flavors of VAE thus far to our knowledge, is that the encoder and decoder networks are \textit{black-box} neural networks, about which comparatively little is understood. Important design choices, including hyperparameter selection, thus are either made through extensive trial-and-error or using a history of trial-and-error for the problem (e.g., 2D image autoencoding), rendering application of VAEs in a novel problem a potentially expensive challenge \citep{rezende2018taming}. In this work, we aim to provide an alternative \textit{white-box} model which ameliorates and linearizes the data geometry, essentially performing nonlinear dimensionality reduction via transforming the data structure to a lower-dimensional affine subspace, all with fewer hyperparameters and greater robustness to hyperparameter changes than the alternatives.

\subsection{Manifold learning}

Manifold learning methods are a well-known class of algorithms that seek to find low dimensional representations of originally high dimensional data while preserving geometric structures.  This includes methods that find subspaces by preserving local structure, such as local linear embedding (LLE) \citep{roweis2000nonlinear} and its variants, local tangent space alignment (LTSA) \citep{zhang2003nonlinear}, t-distributed stochastic neighbor embedding (t-SNE) \citep{van2008visualizing}, Laplacian eigenmaps (LE) \citep{belkin2003laplacian} etc. and methods that try to preserve global structures such as isometric mapping (ISOMAP) \citep{tenenbaum2000global} and uniform manifold approximation and projection (UMAP) \citep{mcinnes2018umap}. Although these have offered useful results on a broad class of manifolds, there are limitations when applying these to real-world data \citep{li2019le}, which we discuss now and aim to mitigate through our algorithm.  

The first limitation is that for all the methods presented above, there is no explicit projection map between the original data and the corresponding low dimensional representation. This means that in order to find a projection for a point in the original space outside of the training data, the entire method needs to be repeated on the original dataset with the new point appended.  This additional computational cost is especially problematic in the case of large-scale datasets. One possible extension maps the original data into a reproducing kernel Hilbert space (RKHS) using kernel functions \citep{xu2008reproducing}, avoiding this ``out-of-sample'' problem \citep{li2008klpp}.  Although this is robust, it requires good kernel function selection and parameter selection, which can be challenging.  Another approach which has been explored is parameterizing the features through a neural network architecture \citep{jansen2017scalable, schmidt1992feed, chen1996rapid}.  This presents similar training difficulties to VAEs due to the black-box structure of the corresponding architectures.

Another potential limitation is that for all of the aforementioned methods, it is required to specify the intrinsic dimension of the data as an input.  In practice, this is hardly ever known a priori, though there are a large class of intrinsic dimension estimators \citep{campadelli2015intrinsic, levinaMLE, carter2009local} that can be used at an additional computational cost.
 
\subsection{Distribution learners}

Three forms of models for distribution learning in recent years have been GANs \citep{goodfellow2014generative}, normalizing flows \citep{kobyzev2019normalizing}, and diffusion models \citep{sohl2015deep,ho2020denoising,song2019generative}. While such models are effective at learning the distribution of the data \citep{arora2018gans} and are able to sample from it \citep{chen2022sampling}, they do not learn \textit{representations} or \textit{features} for individual data points. Thus, they solve different problems than the one we choose to solve in this work, as per our discussion in \Cref{sec:prob_formulation}.

\section{Manifold flattening for data manifolds} \label{sec:manifold_flattening}

We now introduce our high-level approach for finding an efficient algorithm which generates a minimal autoencoding pair: \textit{manifold flattening}. 

As a warmup, let us suppose that \(\M\) were actually an affine subspace, with no nonlinearity or curvature. Then principal component analysis (PCA) would give an efficient autoencoding; we could estimate the dimension using the subspace structure in any of a variety of ways, then the encoder \(\f \colon \R^{D} \to \R^{d}\) would be the orthogonal projection onto the first \(d\) principal components of the data, and the decoder \(\g \colon \R^{d} \to \R^{D}\) would be the adjoint map of \(\f\).

Motivated by the ease of the problem when the data lies on an affine subspace, our high-level approach is to first remove the nonlinearities in the data manifold \(\M\), and then use the previously-discussed PCA-type algorithms once we have ``flattened'' \(\M\). 

In this section, we formalize the notion of flattening a manifold and the equivalence between manifold flattening and minimal autoencoding as described in \Cref{def:autoencode}.

\subsection{Manifold flattening creates minimal representations}

We first provide a classical definition of flatness for a manifold \(\M \subset \R^{D}\) using the second fundamental form (see \Cref{def:sff} in \Cref{sec:diffgeo}).

\begin{definition}
    Let \(\M \subset \R^{D}\) be a smooth embedded submanifold of dimension \(d\) and second fundamental form \(\sff\). We say that \(\M\) is \textit{flat} if and only if \(\sff_{\xz}(u, v) = 0\) for all \(\xz \in \M\) and \(u, v \in \T_{\xz}\M\).
\end{definition}

This definition says that a manifold is flat if it has no extrinsic curvature. We are not restricted to only studying the extrinsic curvature of the original data manifold \(\M\), but also of its image through a smooth map: \(\fl(\M) \doteq \{\fl(x) \mid x \in \M\}\), where \(\fl \colon \R^{D} \to \R^{D}\). Ideally, then, the extrinsic curvature gives a measure of when a manipulation \(\fl\) has successfully removed the nonlinearity of \(\M\), and one can thus safely use PCA on top of \(\fl\) to generate a minimal autoencoding. We formalize this idea in the following definition and theorem.

\begin{definition}\label{def:flattenable}
    Let \(\M \subset \R^{D}\) be a compact, connected, smooth embedded submanifold of dimension \(d\). We say that \(\M\) is \textit{flattenable} if there exist smooth maps \(\fl, \re \colon \R^{D} \to \R^{D}\) such that:
    \begin{enumerate}
        \item \(\re(\fl(\M)) = \M\); and
        \item \(\fl(\M)\) is flat.
    \end{enumerate}
    In this case, we also say that \((\fl, \re)\) \textit{flatten and reconstruct} \(\M\).
\end{definition}

\begin{theorem}\label{thm:conv_flat}
    Let \(\M \subset \R^{D}\) be a flattenable submanifold of dimension \(d\) which is flattened and reconstructed by the maps \(\fl, \re \colon \R^{D} \to \R^{D}\). Then we have:
    \begin{enumerate}
       \item \(\fl(\M)\) is a convex set.
       \item \(\fl(\M)\) is contained within a unique affine subspace of dimension \(d\).
       \item Let \(U \in \R^{D \times d}\) be an orthonormal basis for the aforementioned subspace, stacked into a matrix, and \(z_{0} \in \fl(\M)\). The encoder \(\f\) given by \(\f(x) = U^\top(\fl(x) - z_{0})\) and decoder \(\g\) given by \(\g(z) = \re(Uz + z_{0})\) form a minimal autoencoding pair satisfying \Cref{def:autoencode} for \(\eps = 0\).
    \end{enumerate}
\end{theorem}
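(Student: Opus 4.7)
The plan hinges on first extracting structural consequences of the reconstruction identity $\re \circ \fl|_{\M} = \mathrm{id}_{\M}$. Differentiating yields $d\re_{\fl(x)} \circ d\fl_{x} = \mathrm{id}_{T_{x}\M}$, so $d\fl_{x}$ is injective on $T_{x}\M$; hence $\fl|_{\M}$ is a smooth injective immersion out of compact $\M$ into $\R^{D}$, which is automatically a smooth embedding with inverse $\re|_{\fl(\M)}$. So $\fl(\M)$ is a compact, connected, smooth, embedded $d$-submanifold of $\R^{D}$ whose second fundamental form vanishes identically. For part (2), I would invoke the classical rigidity fact that vanishing second fundamental form makes the tangent spaces $T_{z}\fl(\M) \subset \R^{D}$ parallel along $\fl(\M)$: the Gauss map $z \mapsto T_{z}\fl(\M)$ has identically zero differential, so it is locally constant and, by connectedness, globally constant, equal to some $d$-plane $V$. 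Fixing $z_{0} \in \fl(\M)$ and integrating $\gamma' \in V$ along any smooth path $\gamma$ in $\fl(\M)$ from $z_{0}$ to another point $z$ yields $z - z_{0} \in V$, so $\fl(\M) \subset z_{0} + V$. Uniqueness of this $d$-dimensional affine subspace is immediate since any such subspace containing the $d$-submanifold $\fl(\M)$ must contain its common tangent plane $V$.

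For part (3), the autoencoding identity is a direct calculation: because $\fl(x) - z_{0} \in V = \mathrm{col}(U)$ and $U^{\top}U = I_{d}$, we obtain $U\f(x) + z_{0} = UU^{\top}(\fl(x) - z_{0}) + z_{0} = \fl(x)$, so $\g(\f(x)) = \re(\fl(x)) = x$ exactly. Minimality is a topological-dimension argument: for any continuous $\f' \colon \R^{D} \to \R^{p'}$ and $\g' \colon \R^{p'} \to \R^{D}$ with $\g' \circ \f'|_{\M} = \mathrm{id}_{\M}$, the map $\f'|_{\M}$ is a continuous injection of the compact $d$-manifold $\M$ into $\R^{p'}$, which by invariance of domain applied around any locally Euclidean patch of $\M$ forces $p' \geq d$.

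The main obstacle is part (1), convexity of $\fl(\M)$. What has been established so far shows only that $\fl(\M)$ is a compact, connected, codimension-$0$ smooth subset of the affine subspace $A \doteq z_{0} + V \cong \R^{d}$, which is not enough to yield convexity --- a smoothly bounded annular region, for instance, satisfies everything said above (assuming $\M$ is allowed to have boundary, as it must in order for any non-trivial case to arise, since a closed $d$-manifold without boundary cannot sit inside $\R^{d}$). I would expect the proof to exploit the global smoothness of $\fl$ and $\re$ beyond $\M$ itself --- most cleanly, by showing that $\re$ extends $\fl|_{\M}^{-1}$ to carry the entire affine hull $A$ (or at least a neighborhood of $\fl(\M)$ within $A$) back into $\M$, so that the convex hull of $\fl(\M)$ in $A$ must already lie in $\fl(\M)$. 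This global step is where I anticipate the principal difficulty, and where the flattenability and compactness hypotheses must be jointly leveraged in a non-trivial way.
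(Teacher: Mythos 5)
Your part (1) gap is genuine: nothing you established --- that \(\fl(\M)\) is a compact, connected, flat, embedded \(d\)-submanifold lying inside a \(d\)-plane \(A\) --- forces convexity, and the annular region you name is indeed the right obstruction. The paper closes the gap with a different tool than the global ``extend \(\re\) over \(\conv(\fl(\M))\)'' route you speculate about. It invokes the classical rigidity fact \cite[Proposition 8.12]{lee2018introduction}: a submanifold of \(\R^D\) has vanishing second fundamental form if and only if its geodesics are geodesics of \(\R^D\), i.e.\ straight-line segments. Since \(\fl\) is smooth and \(\M\) is compact and connected, so is \(\fl(\M)\); the paper then asserts that any \(z_1, z_2 \in \fl(\M)\) are joined by a minimizing geodesic of \(\fl(\M)\), which by the lemma is the straight segment from \(z_1\) to \(z_2\), yielding convexity directly with no reference to the behavior of \(\re\) off the image. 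Note, however, that the boundary worry you raised is never dispelled by the paper either: a flat compact \(d\)-manifold \emph{without} boundary cannot sit inside \(A \cong \R^d\), so \(\fl(\M)\) must have boundary, and then geodesic existence between arbitrary pairs of points (Hopf--Rinow) is no longer automatic --- the flat annulus violates exactly this. Your instinct here exposes an implicit regularity assumption the paper does not address.

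For part (2) your Gauss-map argument is correct and matches the paper's step 2(b) in substance, though the paper routes through the geodesic picture; your version is a bit more self-contained since it needs only path-connectedness, not geodesic existence. For part (3) your invariance-of-domain argument is correct and is in fact \emph{stronger} than the paper's: the paper differentiates \(\g' \circ \f' = \id_{\M}\) and bounds the rank of the composite differential, which tacitly requires \(\f'\) and \(\g'\) to be differentiable, whereas \Cref{def:autoencode} only demands continuity. Your purely topological argument covers the definition in full generality and is the one that should be preferred.
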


As the proof relies on some outside differential geometry, we leave the proof for \Cref{sec:proofs}. The main message for this theorem is a formalization of our previous intuition: \textit{finding flattening and reconstruction maps is equivalent to finding a minimal autoencoding}. Thus, in the rest of the work, we focus on constructing flattening and reconstruction maps \(\fl\) and \(\re\).

A first visualization of manifold flattening and reconstruction is depicted in \Cref{fig:idealized_interpolation}, along with the primary benefit outside of \Cref{def:main_problem} for manifold flattening: nonlinear interpolation. 
\\

\begin{figure}
    \centering
    \begin{tikzpicture}
        \draw (-0.1, -0.2) .. controls (1, 0.25) .. (2.1, -0.1);
        \draw (2.1, -0.1) .. controls (1.75, 1) .. (2.1, 2);
        \draw (2.1, 2) .. controls (1, 1.75) .. (-0.1, 2);
        \draw (-0.1, 2) .. controls (0.25, 1) .. (-0.1, -0.2);
        
        \draw (0.35, 0.5) .. controls (0.7, 0.75) .. (0.85, 1.2);
        \draw (0.85, 1.2) .. controls (1.0, 1.4) .. (1.15, 1.2);
        \draw (1.15, 1.2) .. controls (1.3, 0.75) .. (1.65, 0.5);
        
        \node[fill, red, circle, inner sep=0.75pt] at (1, 0.9) {};
        \node[fill, red, circle, inner sep=0.75pt] at (1.35, 0.45) {};
        
        \node at (1, -0.5) {\(x\)};
        
        \draw[->] (2.25, 1) -- (3.75, 1);
        \node at (3, 1.25) {\(\fl\)};
        
        \draw (4.1, 0.1) -- (5.9, 0.1);
        \draw (5.9, 0.1) -- (5.9, 1.9);
        \draw (5.9, 1.9) -- (4.1, 1.9);
        \draw (4.1, 1.9) -- (4.1, 0.1);
        
        \node[fill, red, circle, inner sep=0.75pt] at (5, 1) {};
        \node[fill, red, circle, inner sep=0.75pt] at (5.5, 0.5) {};
        
        \draw[red] (5, 1) -- (5.5, 0.5);
        
        \node at (5, -0.5) {\(z\)};
        
        \draw[->] (6.25, 1) -- (7.75, 1);
        \node at (7, 1.25) {\(\re\)};
        
        \draw (7.9, -0.2) .. controls (9, 0.25) .. (10.1, -0.1);
        \draw (10.1, -0.1) .. controls (9.75, 1) .. (10.1, 2);
        \draw (10.1, 2) .. controls (9, 1.75) .. (7.9, 2);
        \draw (7.9, 2) .. controls (8.25, 1) .. (7.9, -0.2);
        
        \draw (8.35, 0.5) .. controls (8.7, 0.75) .. (8.85, 1.2);
        \draw (8.85, 1.2) .. controls (9.0, 1.4) .. (9.15, 1.2);
        \draw (9.15, 1.2) .. controls (9.3, 0.75) .. (9.65, 0.5);
        
        \node[fill, red, circle, inner sep=0.75pt] at (9, 0.9) {};
        \node[fill, red, circle, inner sep=0.75pt] at (9.35, 0.45) {};
        
        \draw[red] (9, 0.9) .. controls (9.1, 0.55) .. (9.35, 0.45);
        
        \node at (9, -0.5) {\(\hat{x}\)};
    \end{tikzpicture}
    \caption{A depiction of interpolation through manifold flattening on a manifold in \(\R^{3}\) of dimension \(d = 2\). To interpolate two points on the data manifold, map them through the flattening map \(\fl\) to the flattened space, take their convex interpolants, and then map them back to the data manifold through the reconstruction map \(\re\).}
    \label{fig:idealized_interpolation}
\end{figure}
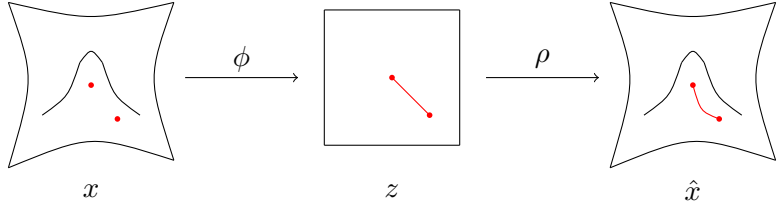

\noindent \textbf{A note on the flattenable assumption for \(\M\)}.
Under the assumption that \(\M\) is flattenable, the converse of \Cref{thm:conv_flat} also holds: any lossless, smooth autoencoding pair generates a flattening. We argue that this is a reasonable assumption for data manifolds, as for any continuous autoencoding pair \(\f, \g\) (e.g. neural network encoder/decoder pair), exactly one of the following must occur:
\begin{enumerate}
    \item \(\f(\M)\) is a convex set, in which case \(\M\) is flattenable; or
    \item \(\f(\M)\) is not convex, in which case samples generated via \(\g(z)\) for any normally distributed \(z\) will fall outside \(\M\) with high probability. 
\end{enumerate}
This sampling method is a common technique within the VAE framework, so manifolds which are amenable to autoencoding via VAEs should generally be flattenable.

\section{Flattening Networks: an algorithmic approach to manifold flattening}

In this section we discuss a computational method for constructing globally-defined flattening and reconstruction maps. In our method, such maps will be a composition of multiple simpler functions, akin to layers in neural networks, as depicted in \Cref{fig:ccnet_layers}. Accordingly, we call our method the \textit{Flattening Networks}, or \textit{FlatNet}. 

However, a large difference between Flattening Networks and traditional neural networks is that our ``layers'' are constructed iteratively in a white-box process with no back-propagation. This design is qualitatively similar to other white-box networks such as described in \cite{chan2022redunet}, but is very different in the details.

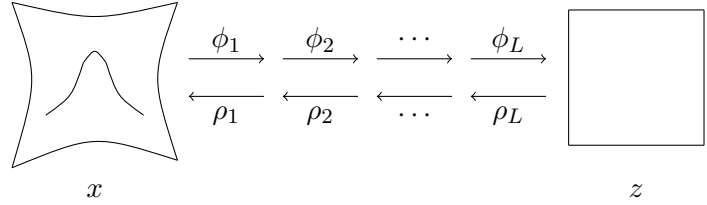
\begin{figure}
    \centering
    \begin{tikzpicture}
        \draw (-0.1, -0.2) .. controls (1, 0.25) .. (2.1, -0.1);
        \draw (2.1, -0.1) .. controls (1.75, 1) .. (2.1, 2);
        \draw (2.1, 2) .. controls (1, 1.75) .. (-0.1, 2);
        \draw (-0.1, 2) .. controls (0.25, 1) .. (-0.1, -0.2);
        
        \draw (0.35, 0.5) .. controls (0.7, 0.75) .. (0.85, 1.2);
        \draw (0.85, 1.2) .. controls (1.0, 1.4) .. (1.15, 1.2);
        \draw (1.15, 1.2) .. controls (1.3, 0.75) .. (1.65, 0.5);
        
        \node at (1, -0.5) {\(x\)};
        
        \draw[->] (2.25, 1.25) -- (3.25, 1.25);
        \node at (2.75, 1.5) {\(\fl_{1}\)};

        \draw[->] (3.5, 1.25) -- (4.5, 1.25);
        \node at (4, 1.5) {\(\fl_{2}\)};

        \draw[->] (4.75, 1.25) -- (5.75, 1.25);
        \node at (5.25, 1.5) {\(\cdots\)};

        \draw[->] (6, 1.25) -- (7, 1.25);
        \node at (6.5, 1.5) {\(\fl_{L}\)};

        \draw[->] (7, 0.75) -- (6, 0.75);
        \node at (6.5, 0.5) {\(\re_{L}\)};
        
        \draw[->] (5.75, 0.75) -- (4.75, 0.75);
        \node at (5.25, 0.5) {\(\cdots\)};
        
        \draw[->] (4.5, 0.75) -- (3.5, 0.75);
        \node at (4, 0.5) {\(\re_{2}\)};
        
        \draw[->] (3.25, 0.75) -- (2.25, 0.75);
        \node at (2.75, 0.5) {\(\re_{1}\)};
        
        \draw (7.3, 0.1) -- (9.1, 0.1);
        \draw (9.1, 0.1) -- (9.1, 1.9);
        \draw (9.1, 1.9) -- (7.3, 1.9);
        \draw (7.3, 1.9) -- (7.3, 0.1);

        \node at (8.2, -0.5) {\(z\)};
        
    \end{tikzpicture}
    \caption{A depiction of the construction process of the flattening and reconstruction pair \((\fl_{\CC}, \re_{\CC})\), where the encoder \(\fl_{\CC} = \fl_{L} \circ \fl_{L - 1} \circ \cdots \circ \fl_{1}\) is constructed from composing flattening layers, and the decoder \(\re_{\CC} = \re_{1} \circ \re_{2} \circ \cdots \circ \re_{L}\) is composed of inversions of each \(\fl_{\ell}\).}
    \label{fig:ccnet_layers}
\end{figure}

We summarize the overall construction of our network now, and then give a more detailed description and motivation in the following sections.
\begin{enumerate}  
    \item Initialize composite flattening and reconstruction maps \(\fl_{\CC} = \re_{\CC} = \id_{\R^{D}} \colon \R^{D} \to \R^{D}\).
    \item For each layer \(\ell \in [L]\):
    \begin{enumerate}
        \item Sample a point \(\xz\) uniformly at random from \(\X\).
        \item Fit a local quadratic model to a neighborhood \(\Nz\) of \(\xz\), and compute a local flattening map \(\fl_{\loc} \colon \R^{D} \to \R^{D}\).
        \item Use a so-called \textit{partition of unity} and the local flattening map \(\fl_{\loc}\) to form a global flattening map \(\fl \colon \R^{D} \to \R^{D}\).
        \item Compute an analytic global approximate inverse, i.e., a global reconstruction map \(\re \colon \R^{D} \to \R^{D}\) such that \(\re \circ \fl \approx \id_{\R^{D}}\).
        \item Compose the constructed layer with existing layers: set \(\fl_{\CC} \gets \fl \circ \fl_{\CC}\) and \(\re_{\CC} \gets \re_{\CC} \circ \re\).
    \end{enumerate}
    \item Return the composite maps \(\fl_{\CC}\) and \(\re_{\CC}\).
\end{enumerate}

In the following sections, we will discuss the motivation and implementation of each of these steps. \\

\begin{figure}[H]
\begin{minipage}{.5\linewidth}
\centering
\subfloat[]{\label{main:a}\includegraphics[width = 0.7\textwidth]{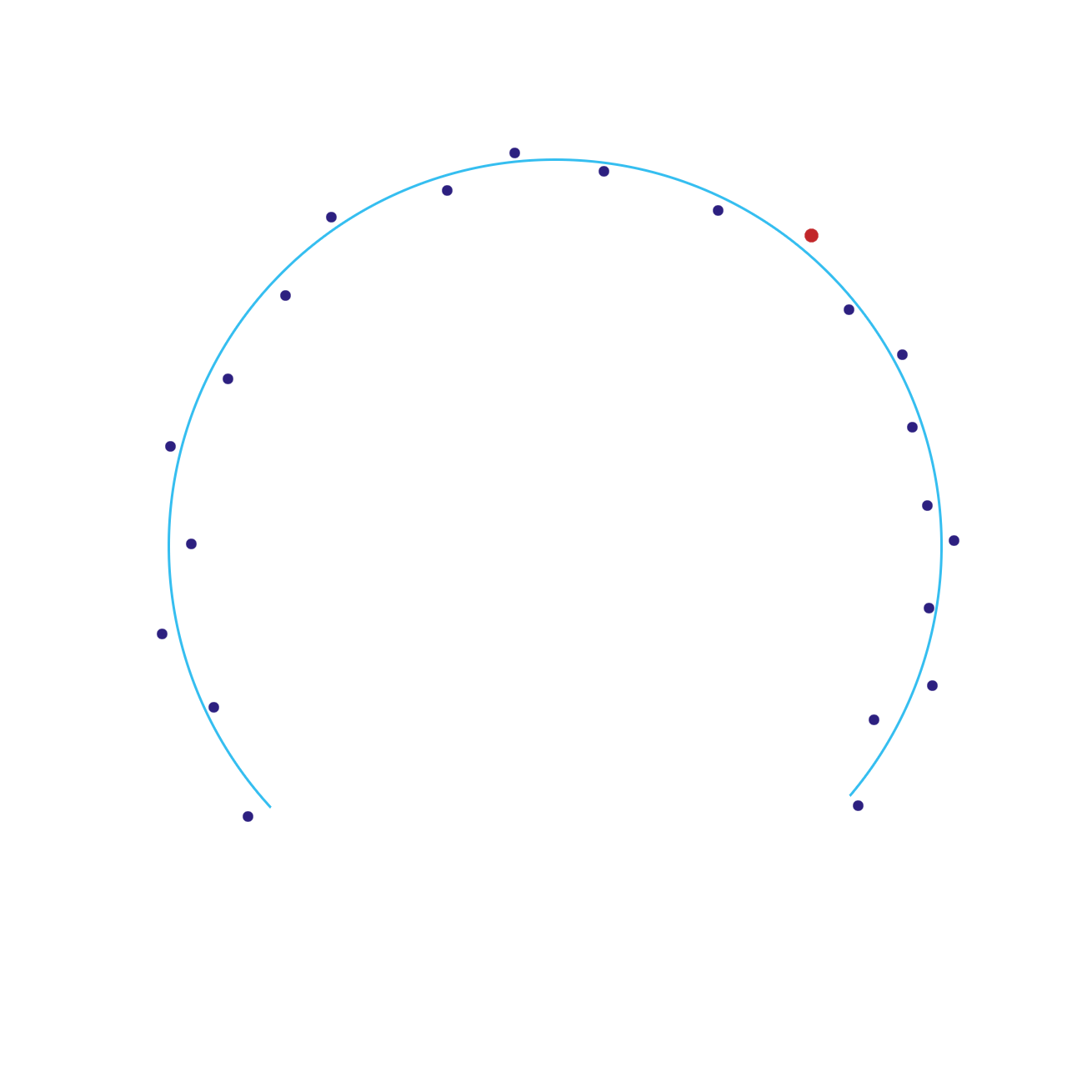}}
\end{minipage}%
\begin{minipage}{.5\linewidth}
\centering
\subfloat[]{\label{main:b}\includegraphics[width = 0.7\textwidth]{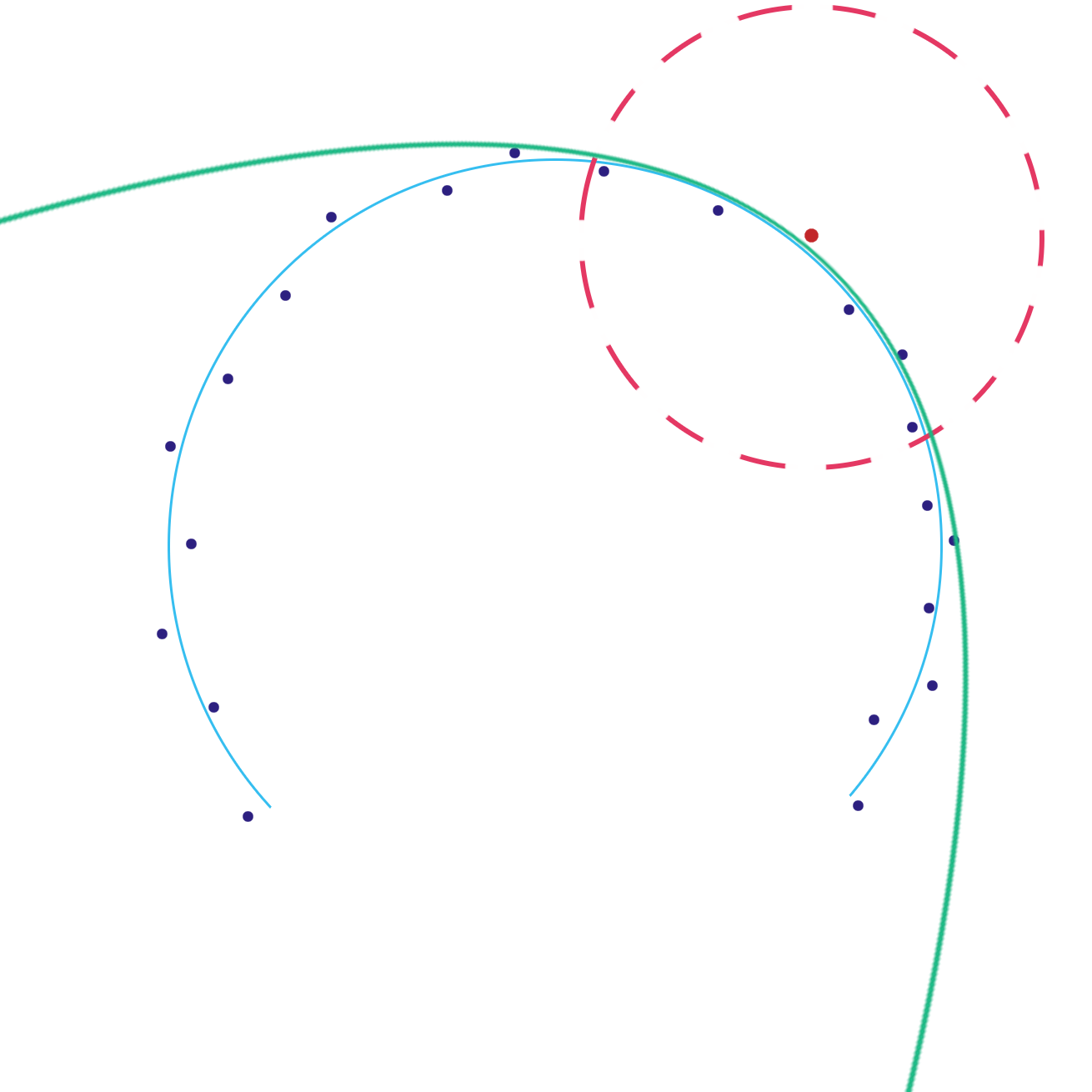}}
\end{minipage}\par\medskip
\centering
\vspace{-18mm}
\subfloat[]{\label{main:c}\includegraphics[width = 0.4\textwidth]{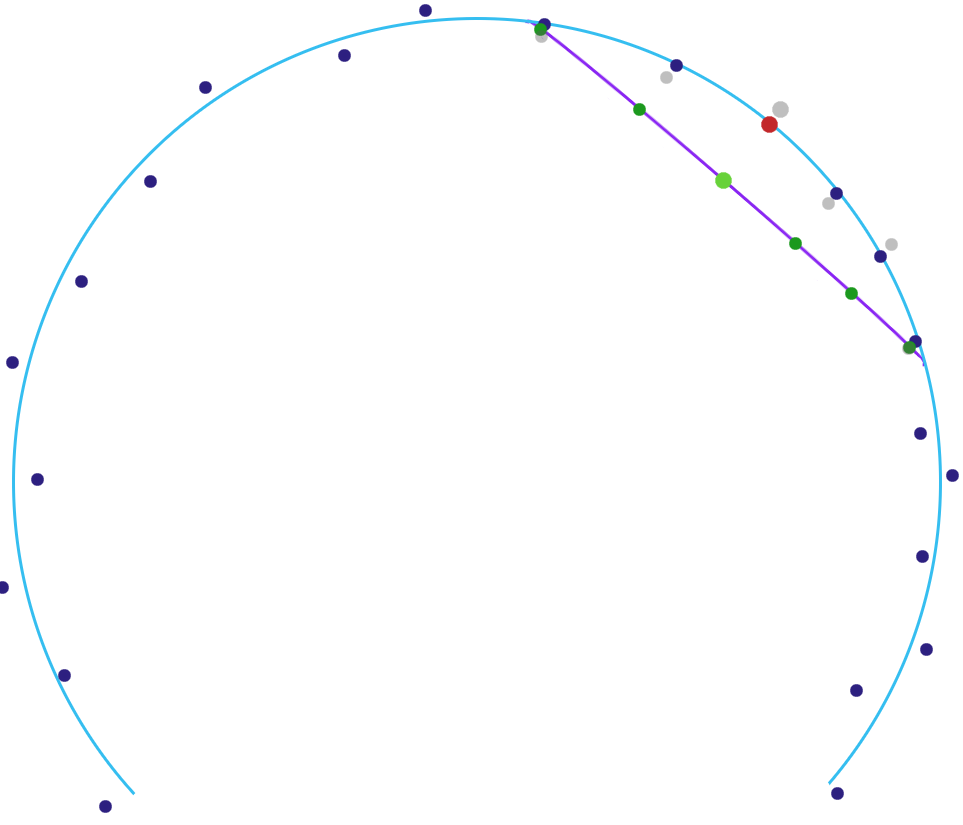}}

\caption{A graphical depiction of one step for our manifold flattening and reconstruction algorithm. \textbf{(a)} From a dataset \(\X\) (dots) drawn near a manifold, randomly choose one data point \(\xz\) (red). \textbf{(b)} We then find the largest radius around the selected point where our simplified model (green) fits the data well, and \textbf{(c)} We use this model to both flatten this patch of the data and reconstruct onto our local model. This process is repeated on the resulting manifold until the manifold is flattened.}
\label{fig:main_algo}
\end{figure}

\subsection{Local quadratic models for data manifolds}

Since we aim to manipulate and reconstruct a dense manifold \(\M\) from finite samples \(\X\), we need some way to model manifolds to fit to the dataset \(\X\). Commonly, the structures of manifolds exploited for manifold learning are local. Even beyond the classical ``locally Euclidean'' definition for manifolds, many manifold processing methods use the fact that, near any fixed point \(\xz \in \M\), the manifold \(\M\) will be approximately linear for a small enough ball around \(\xz\).

While linear models will not be sufficient to solve our problem, as we need to reconstruct the nonlinearities that our method flattens out, we still build our method from the above core principle: \textit{while a data manifold \(\M\) may be complicated and hard to model globally, restricting our attention to one neighborhood at a time allows us to fit a much simpler model accurately to that patch of the manifold.} Such a local model not only tells us how to reconstruct the original manifold after flattening, but also the ``best'' way to flatten the manifold. Concretely, this allows us to construct a map \(\fl_{\loc}\colon \R^{D} \to \R^{D}\) which flattens points in a chosen neighborhood \(\Nz\) around \(\xz\), but whose behavior is necessarily ill-defined far from \(\Nz\). We now discuss the construction of such a map.

The \textit{exponential map} at \(\xz\), denoted \(\exp_{\xz} \colon \TxzM \to \M\), is a local bijection from the \(d\)-dimensional subspace \(\TxzM\) to the full manifold \(\M\). Hence, \(\exp_{\xz}\) perfectly models the local structure of the manifold \(\M\). However, not only is \(\exp_{\xz}\) hard to estimate from finite data, the exponential map is typically only analytically known for very structured manifolds. However, we can estimate approximations to \(\exp_{\xz}\), and restrict our attention to neighborhoods small enough such that this approximation holds. A well-studied approximation scheme are Taylor approximations. A linear approximation is too restrictive, as this removes all local nonlinear information about \(\M\), thus making it impossible to reconstruct after flattening. Thus, we use the next simplest Taylor series model, the second-order Taylor approximation \citep{monera2014taylor}:
\begin{equation}\label{eq:exp_taylor}
    \exp_{\xz}(v) \approx \xz + v + \frac{1}{2}\sff_{\xz}(v, v),
\end{equation}
where \(\sff_{\xz} \colon \TxzM \times \TxzM \to \NxzM\) is the second fundamental form of \(\M\) at \(\xz\) (see \Cref{def:sff}). Based on this expansion we will construct principled local flattening and reconstruction maps \(\fl_{\loc}\) and \(\re_{\loc}\). \\

\noindent \textbf{Proposed local flattening and reconstruction maps.} We propose a local flattening map \(\fl_{\loc} \colon \R^{D} \to \R^{D}\) which is an affine projection operator:
\begin{equation}
    \fl_{\loc}(x) = \P_{S + \xc}\{x\}
\end{equation}
where \(S\) is a linear subspace and \(\xc \in \R^{D}\) is a fixed base point (both to be defined next), and the sum is taken in the usual sense, i.e., \(S + \xc = \{s + \xc \mid s \in S\}\). There are then two variables we need to decide: the subspace \(S\) and the offset \(\xc\). The choices we make here are not claimed to be optimal in any sense, but their design serves two important roles: \(S\) is chosen to maintain local invertibility of \(\fl_{\loc}\), and \(\xc\) is chosen to make the overall flattening process more likely to converge.

First, we choose \(S = \TxzM\). If the approximation in \Cref{eq:exp_taylor} holds, then for any \(v \in \TxzM\) we have
\begin{align}
    \fl_{\loc}(\exp_{\xz}(v))
    &= \P_{\TxzM + \xc}\{\exp_{\xz}(v)\} \\
    &\approx \P_{\TxzM + \xc}\left\{\xz + v + \frac{1}{2}\sff_{\xz}(v, v)\right\} \\
    &= \P_{\TxzM}\left\{\xz + v + \frac{1}{2}\sff_{\xz}(v, v) - \xc\right\} + \xc \\
    &= \underbrace{\P_{\TxzM}\{v\}}_{= v} + \frac{1}{2}\underbrace{\P_{\TxzM}\{\sff_{\xz}(v, v)\}}_{= 0} + \underbrace{\P_{\TxzM}\{\xz - \xc\} + \xc}_{= \P_{\TxzM + \xc}\{\xz\}} \\
    &= v + \P_{\TxzM + \xc}\{\xz\} \\
    &= v + \fl_{\loc}(\xz).
\end{align}
Here \(\P_{\TxzM}\{v\} = v\) because \(v \in \TxzM\) and \(\P_{\TxzM}\{\sff_{\xz}(v, v)\} = 0\) because \(\sff_{\xz}(v, v) \in \NxzM\). Thus the local flattening map
\begin{equation}\label{eq:local_flattening}
    \fl_{\loc}(x) = \P_{\TxzM + \xc}\{x\}
\end{equation}
is (approximately) invertible in the neighborhood \(\Nz\) of \(\xz\) for which the quadratic approximation in \Cref{eq:exp_taylor} holds, say by the local reconstruction map
\begin{equation}\label{eq:local_recon}
    \re_{\loc}(z) = \xz + z - \fl_{\loc}(\xz) + \frac{1}{2}\sff_{\xz}(z - \fl_{\loc}(\xz), z - \fl_{\loc}(\xz)).
\end{equation}
This local invertibility property follows from the below calculation:
\begin{align}
    \re_{\loc}(\fl_{\loc}(\exp_{\xz}(v)))
    &\approx \re_{\loc}(v + \fl_{\loc}(\xz)) \\
    &= \xz + v + \frac{1}{2}\sff_{\xz}(v, v) \\
    &\approx \exp_{\xz}(v).
\end{align}
This choosing of the subspace \(S = \TxzM\) ensures that the local flattening map is (approximately) invertible in the local neighborhood \(\Nz\) of \(\xz\) for which the quadratic approximation in \Cref{eq:exp_taylor} holds. While the choice of \(S\) as the tangent space is clearly optimal in preserving the metric at \(\xz\), the intrinsic volume within the neighborhood can compress at arbitrary ratios depending on the curvature at \(\xz\).
Nonetheless, flattening to the tangent space gives an easy verification that the flattening is invertible.

The offset \(\xc\) is an important design choice, as it directs the global direction of the process induced from repeated local flattenings, and thus controls the convergence of the process. In order to make sure the process converges, we need to design an \(\xc\) that is both (a) pushing the output manifold closer to a converged, globally flattened state, but (b) close enough to our dataset \(\X\) to make the eventual global flattening map \(h\) reasonably smooth. 

Our certificate of convergence is the \textit{convex hull} of the manifold. We use the connection between the flatness and convexity of \(\M\) furnished by \Cref{thm:conv_flat}. Indeed, such an embedded submanifold of Euclidean space is fully flat if and only if it is convex. Thus, any difference between the convex hull of \(\M\), denoted \(\conv(\M)\), and itself, gives an indication of where to flatten the manifold into. This idea is depicted in \Cref{fig:conv_hull_manifold}.

\begin{figure}
    \centering
    \begin{tikzpicture}[domain=0.25:4]
        \draw[draw=white, fill=blue!5] (0.33, 0) -- (1, -1.48) -- (6.05, 1.05) -- (3.5, 1.45) -- (2.8, 1.5) -- (0.6, 1.5);
        \draw[scale=1.5, smooth, variable=\x] plot ({\x}, {sin(180*(1/\x))});
        \node at (1, -1.8) {\(\M\)};
        \node at (3.5, -0.75) {\color{blue!60} \(\conv(\M)\)};
    \end{tikzpicture}
    \caption{A depiction of the convex hull of a manifold \(\M\) versus the manifold itself. If there is no difference between \(\conv(\M)\) and \(\M\) itself, the resulting manifold is convex and thus flattened. We thus design local flattenings to push \(\M\) into its convex hull.}
    \label{fig:conv_hull_manifold}
\end{figure}
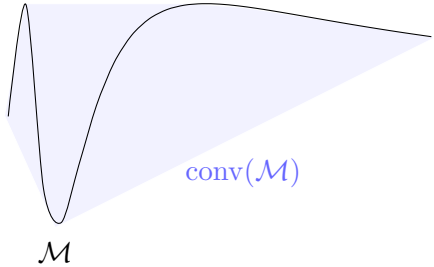

Note that for any probability measure \(P\) supported on \(\M\), we have that the extrinsic average 
\begin{equation}
    \bar{x} \doteq \int_{\M}x\ \mathrm{d}P(x)
\end{equation}
lies within the convex hull. Thus, if we chose our distribution \(P\) to have density w.r.t.~the Haar measure on \(\M\) be proportional to a smooth function \(\pi \colon \R^{D} \to [0, 1]\) such that \(\pi(x) \approx 1\) for all \(x \in \Nz\) and \(\pi\) decays rapidly outside \(\Nz\), to be determined later, then the weighted local average given by the following:
\begin{equation}\label{eq:local_offset_def}
    \xc = \bar{x} \doteq \frac{\int_{\M}x\pi(x)\ \mathrm{d}x}{\int_{\M} \pi(x)\ \mathrm{d}x},
\end{equation}
lies in the convex hull, \(\xc \in \conv(\M)\), while \(\xc\) is still relatively close to \(x\) due to the imposed structure on \(\pi\). Thus, to meet the desired criteria for the local flattening, we choose \(S\) to be the tangent space \(\TxzM\) and the offset \(\xc\) to be the local average \(\xc = \frac{\int_{\M}x\pi(x)\ \mathrm{d}x}{\int_{\M} \pi(x)\ \mathrm{d}x}\).

\subsection{From local to global}

Once we find a local flattening map \(\fl_{\loc} \colon \R^{D} \to \R^{D}\), the next step to make it globally well-defined is to ``glue'' it with a well-behaved globally-defined map outside of the chosen neighborhood \(\Nz\) of \(\xz\). Classically in differential geometry, this is done through a \textit{partition of unity} \citep{dieudonne1937fonctions,lee2012smoothmanifolds}, a smooth scalar function \(\pou \colon \R^{D} \to [0, 1]\) such that \(\pou(x) \approx 1\) for all \(x \in \Nz\) and \(\pou(x) \approx 0\) for all \(x \notin \Nz\). Given the local flattening map \(\fl_{\loc} \colon \R^{D} \to \R^{D}\) computed with respect to the neighborhood \(\Nz\), we can then define a globally-defined flattening map \(\fl \colon \R^{D} \to \R^{D}\) by:
\begin{equation}\label{eq:part_unity}
    \fl(x) \doteq \pou(x)\fl_{\loc}(x) + (1 - \pou(x))x,
\end{equation}
so that \(\fl(x) \approx \fl_{\loc}(x)\) if \(x \in \Nz\) and \(\fl(x) \approx x\) otherwise. 

Of course, there are many choices for \(\pou\). However, as we design a practical algorithm satisfying \Cref{def:main_problem}, our choices become more limited. The following are our two main concerns that will limit the design choice for \(\pou\):
\begin{enumerate}
    \item \(\pou\) should lead to relatively smooth maps constructed as in \Cref{eq:part_unity}. Once we flatten a portion of the manifold, we will need to both further flatten and reconstruct the manifold, and if we create cusps as in \Cref{main:c}, the downstream flattening and reconstruction tasks will be unnecessarily difficult.

    \item \(\pou\) should allow for computable inverses of maps constructed as in \Cref{eq:part_unity}. Even if we are able to invert the local flattening function \(\fl_{\loc}\) locally around \(\xz\) using the local reconstruction function \(\re_{\loc}\), it's not always trivial (or even possible) to invert the globally well-defined function \(\fl\). Nonetheless, this task is crucial for constructing a true autoencoder.
\end{enumerate}

To motivate our choice of \(\pou\), we attempt to compute an approximate inverse of \(\fl\), i.e., a function \(\re\) such that \(\re \circ \fl \approx \id_{\R^{D}}\). To do this we first simplify the expression for \(\fl\):
\begin{align}
    \fl(x)
    &= \pou(x)\fl_{\loc}(x) + (1 - \pou(x))x \\
    &= \pou(x)(\P_{\TxzM + \xc}\{x\}) + (1 - \pou(x))x \\
    &= \pou(x)(\P_{\TxzM}\{x - \xc\} + \xc) + (1 - \pou(x))x \\
    &= \pou(x)\P_{\TxzM}\{x\} - \pou(x)\P_{\TxzM}\{\xc\} + \pou(x)\xc + x - \pou(x)x \\
    &= \pou(x)\P_{\TxzM}\{x\} - \pou(x)\P_{\TxzM}\{\xc\} + \pou(x)\P_{\TxzM}\{\xc\} + \pou(x)\P_{\NxzM}\{\xc\} \\
    &\quad + \P_{\TxzM}\{x\} + \P_{\NxzM}\{x\} - \pou(x)\P_{\TxzM}\{x\} - \pou(x)\P_{\NxzM}\{x\} \nonumber \\
    &= \P_{\TxzM}\{\pou(x)x - \pou(x)\xc + \pou(x)\xc + x - \pou(x)x\} + \P_{\NxzM}\{\pou(x)\xc + x - \pou(x)x\} \\
    &= \P_{\TxzM}\{x\} + \P_{\NxzM}\{\pou(x)\xc + (1 - \pou(x))x\}.
\end{align}
This simplification reveals a geometric interpretation of the flattening map \(\fl\); the component of \(x\) towards the tangent space \(\TxzM\) is preserved, while the component of \(x\) towards the normal space \(\NxzM\) is replaced by a convex combination of \(\xc\) and \(x\) whose weights are given by the partition of unity \(\pou\). The following computation now motivates an approximate inverse of \(\fl\):
\begin{align}
    &\fl(x) + \pou(x)\left(\xz - \P_{\TxzM + \xc}\{\xz\} + \frac{1}{2}\sff_{\xz}(\P_{\TxzM}\{\fl(x) - \xz\}, \P_{\TxzM}\{\fl(x) - \xz\})\right) \\
    &= \fl(x) + \pou(x)\left(\xz - \P_{\TxzM + \xc}\{\xz\} + \frac{1}{2}\sff_{\xz}(\P_{\TxzM}\{x - \xz\}, \P_{\TxzM}\{x - \xz\})\right) \\
    &= \fl(x) + \pou(x)\left(\xz - \P_{\TxzM}\{\xz\} - \P_{\NxzM}\{\xc\} + \frac{1}{2}\sff_{\xz}(\P_{\TxzM}\{x - \xz\}, \P_{\TxzM}\{x - \xz\})\right) \\
    &= \P_{\TxzM}\{x\} + \P_{\NxzM}\{\pou(x)\xc + (1 - \pou(x))x\} \\
    &\quad + \pou(x)\left(\xz - \P_{\TxzM}\{\xz\} - \P_{\NxzM}\{\xc\} + \frac{1}{2}\sff_{\xz}(\P_{\TxzM}\{x - \xz\}, \P_{\TxzM}\{x - \xz\})\right) \nonumber \\
    &= \P_{\TxzM}\{x\} + \pou(x)\P_{\NxzM}\{\xc\} + \P_{\NxzM}\{x\} - \pou(x)\P_{\NxzM}\{x\} \\
    &\quad + \pou(x)\left(\scriptstyle \P_{\TxzM}\{\xz\} + \P_{\NxzM}\{\xz\} - \P_{\TxzM}\{\xz\} - \P_{\NxzM}\{\xc\} + \frac{1}{2}\sff_{\xz}(\P_{\TxzM}\{x - \xz\}, \P_{\TxzM}\{x - \xz\})\right) \nonumber \\
    &= \P_{\TxzM}\{x\} + \P_{\NxzM}\{x\} + \pou(x)\left(-\P_{\NxzM}\{x - \xz\} + \frac{1}{2}\sff_{\xz}(\P_{\TxzM}\{x - \xz\}, \P_{\TxzM}\{x - \xz\})\right) \\
    &= x + \pou(x)\underbrace{\left(-\P_{\NxzM}\{x - \xz\} + \frac{1}{2}\sff_{\xz}(\P_{\TxzM}\{x - \xz\}, \P_{\TxzM}\{x - \xz\})\right)}_{\approx 0} \label{eq:approx_recon} \\
    &\approx x.
\end{align}
Here we make the approximation \(\P_{\NxzM}\{x - \xz\} \approx \frac{1}{2}\sff_{\xz}(\P_{\TxzM}\{x - \xz\}, \P_{\TxzM}\{x - \xz\})\) which follows from the definition of the second fundamental form.

Thus, we are motivated to set \(z = \fl(x)\) and obtain the reconstruction map
\begin{equation}
    \re(z) \doteq z + \pou(x)\left(\xz - \P_{\TxzM + \xc}\{\xz\} + \frac{1}{2}\sff_{\xz}(\P_{\TxzM}\{z - \xz\}, \P_{\TxzM}\{z - \xz\})\right).
\end{equation}
However, this is not a well-defined global map because the coefficient \(\pou(x)\) depends on \(x \approx \re(z)\). To solve this problem we use the following lemma:
\begin{lemma}\label{lem:local_inv}
    Let \(\lambda > 0\), let \(\pou(x) = e^{-\lambda\norm{x - \xz}_{2}^{2}}\), and \(\fl\) be defined as in \Cref{eq:part_unity}. There exists a unique smooth function \(\pouinv \colon \R^{D} \to \R\) such that \(\pouinv(\fl(x)) = \pou(x)\) for all \(x \in \R^{D}\).
\end{lemma}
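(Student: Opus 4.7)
The plan is to reduce the constraint $\pouinv(\fl(x)) = \pou(x)$ to a scalar self-consistency equation whose unique root, viewed as a function of $y := \fl(x)$, defines $\pouinv$.

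First I would simplify $\fl$. Using that $\P_{\TxzM + \xc}\{x\} - x = -\P_{\NxzM}\{x - \xc\}$, the partition-of-unity definition collapses to
\[
\fl(x) = x - \pou(x)\,\P_{\NxzM}\{x - \xc\}.
\]
Projecting onto $\TxzM$ gives $\P_{\TxzM}\{\fl(x)\} = \P_{\TxzM}\{x\}$ (the tangential part is preserved exactly), and projecting onto $\NxzM$ after subtracting $\xc$ gives $\P_{\NxzM}\{\fl(x) - \xc\} = (1 - \pou(x))\,\P_{\NxzM}\{x - \xc\}$ (the normal part relative to $\xc$ is rescaled by $1 - \pou(x)$). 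Thus $\fl$ preserves the tangential coordinate and acts as an explicit scalar rescaling in the normal direction.

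Next, introduce $u := \P_{\TxzM}\{y - \xz\}$, $v := \P_{\NxzM}\{y - \xz\}$, $\tilde{w}_0 := \P_{\NxzM}\{\xc - \xz\}$, $\alpha := e^{-\lambda\|u\|^2}$, and $t := \pou(x)$. Inverting the normal identity gives $\P_{\NxzM}\{x - \xz\} = (v - t\tilde{w}_0)/(1 - t)$, so $\|x - \xz\|^2 = \|u\|^2 + \|v - t\tilde{w}_0\|^2/(1 - t)^2$. Substituting into $t = e^{-\lambda\|x - \xz\|^2}$ and taking logarithms yields the scalar self-consistency equation
\[
F(t; y) := (1 - t)^2 \ln(\alpha/t) - \lambda\,\|v - t\tilde{w}_0\|^2 = 0.
\]
I would then show that for every $y$ this equation has a unique solution $t(y) \in (0, \alpha]$ and set $\pouinv(y) := t(y)$. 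Existence follows from the intermediate value theorem: the function $L(t) := (1-t)^2\ln(\alpha/t)$ is strictly decreasing from $+\infty$ to $0$ on $(0, \alpha]$, while $R(t) := \lambda\|v - t\tilde{w}_0\|^2$ is a bounded, nonnegative, convex quadratic, so $L - R$ changes sign. Smoothness of $t(y)$ follows from the implicit function theorem applied to $F$, after verifying that $\partial_t F \neq 0$ at the root. The identity $\pouinv(\fl(x)) = \pou(x)$ is then automatic, and uniqueness of $\pouinv$ as a function $\R^D \to \R$ follows from the surjectivity of $\fl$, which is itself a byproduct of the unique solvability of the self-consistency equation.

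The main obstacle is uniqueness of the root of $F(\cdot\,; y)$ when $\tilde{w}_0 \neq 0$. In that case $R$ is a non-constant convex quadratic that can decrease on part of $(0, \alpha)$, so $L - R$ is not evidently monotone. The most promising route is to combine the blowup $L'(t) \to -\infty$ as $t \to 0^+$ with the strict convexity of $L$ (established by direct computation of $L''(t) > 0$) and the fixed quadratic shape of $R$ to rule out a second crossing; equivalently, one can exploit the fiberwise structure of $\fl$, which preserves each affine slice $\P_{\TxzM}\{y\} + \NxzM$ and reduces the question to a one-dimensional analysis along the direction spanned by $v$ and $\tilde{w}_0$. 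Once uniqueness is settled, the implicit function theorem delivers smoothness of $t(y)$, completing the construction.
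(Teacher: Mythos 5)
Your approach is conceptually the same as the paper's: separate the tangential part of \(y := \fl(x)\) (preserved exactly by \(\fl\)) from the normal part (rescaled by \(\fl\)), and reduce the lemma to unique solvability of a scalar equation. The paper works with the variable \(\nu_2 := \|\P_{\NxzM}\{x - \xc\}\|_2^2\) rather than your \(t := \pou(x)\), arriving at the equation \(\beta_{\eta_1}(\nu_2) = \eta_2\) where \(\beta_z(s) := (1 - e^{-\lambda z}e^{-\lambda s})^2 s\) is strictly increasing on \([0,\infty)\), which disposes of existence and uniqueness simultaneously.

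There is, however, a genuine gap, and you name it yourself: uniqueness of the root of \(F(\cdot\,;y)\), which is what makes \(\pouinv\) well-defined and (via the implicit function theorem) smooth, is never established. Your suggested routes do not close it as stated: \(L\) strictly convex and \(R\) convex only make \(F = L - R\) a difference of convex functions, which can have multiple zeros, and the blowup of \(L'\) at \(0^+\) controls only the left endpoint. The concern you flag is real. If \(\tilde{w}_0 := \P_{\NxzM}\{\xc - \xz\} \neq 0\), the normal-slice map can genuinely fail to be injective: with \(D = 2\), \(d = 1\), \(\xz = 0\), \(\lambda = 1\), tangential coordinate \(0\) (so \(\alpha = 1\)), and normal offset \(\tilde{w}_0 = 1\), the slice map \(\phi(w) = w - e^{-w^2}(w - 1)\) satisfies \(\phi(0) = \phi(1) = 1\) while \(\pou\) takes the distinct values \(1\) and \(e^{-1}\) at the corresponding preimages, so no \(\pouinv\) can exist there. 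Thus the statement only goes through when \(\xc - \xz\) is tangential (\(\tilde{w}_0 = 0\)) or under an implicit smallness condition. The paper's argument quietly presupposes this: the step \(\pou(x) = e^{-\lambda(\nu_1 + \nu_2)}\) with \(\nu_1, \nu_2\) defined relative to \(\xc\) holds only when \(\xc - \xz\) has no normal component. Your bookkeeping of \(\tilde{w}_0\) is in fact more careful than the paper's, but precisely because of that care, the uniqueness you leave open is the actual content of the lemma. Once \(\tilde{w}_0 = 0\) is imposed, \(R(t) = \lambda\|v\|_2^2\) is constant, \(L\) is strictly decreasing, and both your argument and the paper's close immediately.
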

The proof of this lemma is deferred to \Cref{sec:proofs}. It is not apparent from the theorem statement, but clear from the proof, is that \(\pouinv(z)\) is efficiently computable, requiring only the inversion of a globally invertible smooth scalar function, which we use the secant method to compute.

This lemma motivates our choice of \(\pou(x) = e^{-\lambda \norm{x - \xz}_{2}^{2}}\) for some \(\lambda > 0\), and thus our global reconstruction map is well-defined as 
\begin{equation}\label{eq:global_recon}
    \re(z) \doteq z + \pouinv(z)\left(\xz - \P_{\TxzM + \xc}\{\xz\} + \frac{1}{2}\sff_{\xz}(\P_{\TxzM}\{z - \xz\}, \P_{\TxzM}\{z - \xz\})\right).
\end{equation}

One important interpretation of the \(\lambda\) parameter in the partition of unity is that it controls the ``radius'' of the partition. In particular, the function \(r \mapsto e^{-\lambda r^{2}}\) is monotonically decreasing in \(r \geq 0\). If we want to find the radius \(r\) for which \(e^{-\lambda r^{2}} \leq \eps\), where \(\eps > 0\) is some small constant, we obtain
\begin{equation}
    r \geq \sqrt{\frac{\log(1/\eps)}{\lambda}} = O(\lambda^{-1/2}).
\end{equation}
Thus we can think of the ``radius'' of \(\pou\) as proportional to \(\lambda^{-1/2}\).

Finally, recall that we defined the local average \(x_{c} = \frac{\int_{\M}x\pi(x)\ \mathrm{d}x}{\int_{\M}\pi(x)\ \mathrm{d}x}\) for some smooth function \(\pi \colon \R^{D} \to [0, 1]\) such that \(\pi(x) \approx 1\) for all \(x \in \Nz\) and \(\pi\) decays rapidly outside \(\Nz\). In order to achieve these two criteria while making sure that \(\pi\) is smooth and \(\xc\) is close to \(x\), we simply choose \(\pi = \pou\), so that \(\xc = \frac{\int_{\M}x\pou(x)\ \mathrm{d}x}{\int_{\M}\pou(x)\ \mathrm{d}x}\).

\subsection{Estimation from finite samples}

In this section we discuss estimation and computation of the various quantities discussed in the prior sub-sections. Generic estimates are labeled with a tilde, i.e., \(\txc\) or \(\tTxzM\), while optimal estimates are labeled with a hat, i.e., \(\hxc\) or \(\hTxzM\). \\

\noindent \textbf{Estimating the local average, tangent space and second fundamental form.} Suppose that we already know the intrinsic dimension \(d\) of the manifold, and that we already know the parameters for the partition of unity \(\pou\) (these will be estimated automatically, as we describe momentarily). Then the only things left to estimate are the local average \(\xc\), the tangent space \(\TxzM\) (and technically also the normal space \(\NxzM = (\TxzM)^{\perp}\)), and the second fundamental form \(\sff_{\xz}\). 

The local average \(\xc\) with respect to a particular partition of unity \(\psi\) can be efficiently approximated by the sample average: 
\begin{equation}\label{eq:local_mean_approx}
    \xc \approx \hxc \doteq \frac{\sum_{i = 1}^{N}x_{i}\pou(x_{i})}{\sum_{i = 1}^{N}\pou(x_{i})}.
\end{equation}
However, the local tangent space \(\TxzM\) is harder to approximate from a finite set of samples. A popular method to estimate \(\TxzM\) from finite data is local PCA \citep{hoppe1993mesh,oue1996asymptotics}; however, these methods rely on an assumption that the manifold \(\M\) is close to linear around the base point \(\xz\), which holds less strongly if \(\M\) has non-negligible curvature. While improvements have been made past local PCA \citep{zhang2011improved}, there are still issues when the data has both extrinsic curvature and extrinsic noise.

We may estimate the tangent space \(\TxzM\) from finite samples by minimizing the autoencoding loss, which is a sum of terms of the form 
\begin{align}
    \norm{x_{i} - \re(\fl(x_{i}))}_{2}^{2}
    &= \norm{\pou(x_{i})\left(-\P_{\NxzM}\{x_{i} - \xz\} + \frac{1}{2}\sff_{\xz}(\P_{\TxzM}\{x_{i} - \xz\}, \P_{\TxzM}\{x_{i} - \xz\})\right)}_{2}^{2} \\
    &= \pou(x_{i})^{2}\norm{\frac{1}{2}\sff_{\xz}(\P_{\TxzM}\{x_{i} - \xz\}, \P_{\TxzM}\{x_{i} - \xz\}) - \P_{\NxzM}\{x_{i} - \xz\}}_{2}^{2},
\end{align}
where the first equality is from \Cref{eq:approx_recon}. For the sake of optimization, we parameterize \(\TxzM\) by an orthogonal matrix \(U \in \mathsf{O}(D, d)\) such that \(UU^{\top} = \P_{\TxzM}\). This gives 
\begin{equation}
    \norm{x_{i} - \re(\fl(x_{i}))}_{2}^{2} = \pou(x_{i})^{2}\norm{\frac{1}{2}\sff_{\xz}(UU^{\top}(x_{i} - \xz), UU^{\top}(x_{i} - \xz)) - (I - UU^{\top})(x_{i} - \xz)}_{2}^{2}.
\end{equation}
Finally, we parameterize the second fundamental form (or more specifically the quantity \(\frac{1}{2}\sff_{\xz}\)) by its coordinates with respect to the basis defined by the columns of \(U\), represented as a multi-array \(V \in \R^{D \times d \times d}\) in the following way:
\begin{equation}\label{eq:def_V_bilinear_form}
    \frac{1}{2}\sff_{\xz}(w_{1}, w_{2}) = V(w_{1}, w_{2}) \doteq \sum_{j = 1}^{d}\sum_{k = 1}^{d}v_{jk}\ip{u_{j}}{w_{1}}\ip{u_{k}}{w_{2}}
\end{equation}
where \(v_{jk} \in \R^{D}\) are the slices through the first coordinate of \(V\). Thus, we can estimate \(U\) and \(V\) through a computationally feasible autoencoding loss:
\begin{equation}\label{eq:autoencoding_loss}
    \mathcal{L}_{\pou}(\tU, \tV) \doteq \frac{1}{N}\sum_{i = 1}^{N}\pou(x_{i})^{2}\norm{\tV(\tU\tU^{\top}(x_{i} - \xz), \tU\tU^{\top}(x_{i} - \xz)) - (I - \tU\tU^{\top})(x_{i} - \xz)}_{2}^{2},
\end{equation}
This yields a global reconstruction problem, whose value we label \(\mathsf{Recon}(d, \pou)\):
\begin{equation}\label{eq:autoencoding_problem}
    \mathsf{Recon}(d, \pou) \doteq \inf_{\substack{U \in \mathsf{O}(D, d) \\ V \in \R^{D \times d \times d}}} \mathcal{L}_{\pou}(U, V).  \end{equation}
The solutions to this problem are our estimates for the tangent space basis \(\hU\) and second fundamental form matrix \(\hV\), which translate into estimates for the \textit{actual} tangent space \(\hTxzM\) and second fundamental form \(\hsff_{\xz}\) in a straightforward way.

While the above problem \eqref{eq:autoencoding_problem} may look daunting, note that for a fixed \(\tU \in \mathsf{O}(D, d)\), solving for the entries of \(\hV\) is a least-squares problem and may be efficiently solved in closed-form. The remaining component of the optimization is to solve for \(\hU\), which is an \(O(Dd)\)-dimensional optimization over the Stiefel manifold \(\mathsf{O}(D, d)\). While the Stiefel manifold is a non-convex set, optimization over the Steifel manifold is well studied \citep{fraikin2007optimization,li2020efficient,qu2020geometric,zhai2020complete,zhai2020understanding}. We reiterate that all associated complexities fulfill our complexity requirement of \(O(NDd^{k})\) for some positive integer \(k\).

There is one final complexity that we have swept under the rug: the second fundamental form \(\sff_{\xz}\) has codomain \(\NxzM\), which is a \(D - d\) dimensional subspace, but our bilinear \(V\) map has codomain \(\R^{D}\) in general, and at first glance is not restricted to any \(D - d\) dimensional subspace. The resolution to this dilemma is that if we fix a \(\tU \in \mathsf{O}(D, d)\) and solve for \(\hV\) using least squares, the resulting bilinear \(\hV\) map will always have codomain equal to \(\mathsf{Im}(\tU)^{\perp}\) due to the structure of the problem. In particular, if \(\mathsf{Im}(\tU) = \TxzM\) then \(\hV\) will have codomain \(\NxzM\). We now formally state this result; the proof can be found in Appendix \ref{sec:proofs}.

\begin{proposition}\label{prop:V_normal_space}
    Fix \(\tU \in \mathsf{O}(D, d)\). Further suppose that $\pou(x_i) > 0$ for all $x_i \in \X$. Then any solution \(\hV\) to the problem
    \begin{equation}
        \inf_{\tV \in \R^{D \times d \times d}}\mathcal{L}_{\psi}(\tU, \tV)
    \end{equation}
    has the following properties:
    \begin{enumerate}
        \item $\tU^\top \hV(x_i - x_0, x_i - x_0) = 0$ for all $x_i \in X$
    \end{enumerate}
    For the following parts: define the matrix $A \in \R^{N\times \frac{1}{2}(d^2 + d)}$ in the following way: let $B \in \R^{N\times d\times d}$ be the 3-tensor with entry values $B_{ijk} = \ip{u_j}{x_i - x_0}\ip{u_k}{x_i - x_0}$. Let $B_i \in \R^{d\times d}$ be the $i$\textsuperscript{th} slice of the tensor $B$, fixing the first dimension. Finally, let the $i$\textsuperscript{th} row of $A$ be the vectorized upper-diagonal component of $B_i$.

    \begin{enumerate}
        \item [2.] If $A$ has full column-rank, i.e. $\mathrm{rank}(A) = \frac{1}{2}(d^2 - d)$, then $\hV$ is unique.
        \item [3.] Further, if $A$ has full column-rank, then \(\tU^{\top}\hv_{jk} = 0\) for all \(1 \leq j, k \leq d\), therefore $\tU^\top \hV(w_1, w_2) = 0$ for all $w_1, w_2 \in \R^D$.
    \end{enumerate}
\end{proposition}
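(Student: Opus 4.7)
The core observation is that the loss $\mathcal{L}_\psi(\tU, \tV)$ decomposes orthogonally along $\mathrm{Im}(\tU)$ versus $\mathrm{Im}(\tU)^\perp$, and once this is seen the three claims become transparent statements about a weighted linear least-squares problem whose design matrix is exactly $A$. The plan is to write $y_i \doteq x_i - \xz$ and note that since $u_j \in \mathrm{Im}(\tU)$, we have $\ip{u_j}{\tU\tU^\top y_i} = \ip{u_j}{y_i}$, so
\begin{equation*}
\tV(\tU\tU^\top y_i, \tU\tU^\top y_i) \;=\; \sum_{j,k} v_{jk}\, \ip{u_j}{y_i}\ip{u_k}{y_i}.
\end{equation*}
Since $(I - \tU\tU^\top) y_i$ lies in $\mathrm{Im}(\tU)^\perp$, the squared residual in $\mathcal{L}_\psi$ splits as
\begin{equation*}
\bigl\|\tU\tU^\top \tV(\tU\tU^\top y_i, \tU\tU^\top y_i)\bigr\|_2^2 \;+\; \bigl\|(I - \tU\tU^\top)\bigl[\tV(\tU\tU^\top y_i, \tU\tU^\top y_i) - (I - \tU\tU^\top) y_i\bigr]\bigr\|_2^2,
\end{equation*}
and the first piece depends only on $\{\tU^\top v_{jk}\}$ while the second depends only on $\{(I - \tU\tU^\top) v_{jk}\}$. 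This is the key observation and essentially all the geometry is in it.

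\paragraph{Deriving the three conclusions.}
First I would establish Property 1. The $\mathrm{Im}(\tU)$-piece of the loss is a weighted sum of non-negative terms with strictly positive weights $\pou(x_i)^2$, so it is globally minimized iff every term is zero; the choice $\tU^\top v_{jk} = 0$ achieves this, so at any minimizer $\hV$ we have $\tU^\top \hV(y_i,y_i) = 0$ for every $x_i \in \X$, which is exactly Property 1. Next, writing $V$ in its inherent symmetric bilinear form and fixing any direction $e \in \R^D$, the scalar coefficients $\beta_{jk}^{(e)} = \ip{e}{v_{jk}}$ (with the standard $j \le k$ convention) satisfy a weighted linear least-squares problem whose design matrix is precisely $A$ and whose target vector is either $\{\ip{e}{(I - \tU\tU^\top) y_i}\}_i$ (for $e \in \mathrm{Im}(\tU)^\perp$) or $0$ (for $e \in \mathrm{Im}(\tU)$). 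Writing $D = \mathrm{diag}(\pou(x_i)^2)$, the normal equations are $A^\top D A\, \beta^{(e)} = A^\top D c^{(e)}$; if $A$ has full column rank then $A^\top D A$ is positive definite because $\psi(x_i)>0$, so $\beta^{(e)}$ is uniquely determined for every $e$, establishing Property 2. Finally, for $e \in \mathrm{Im}(\tU)$ the right-hand side is $0$, so the unique solution is $\beta^{(e)} = 0$; running this over $e = u_1,\ldots,u_d$ gives $\tU^\top \hv_{jk} = 0$ for all $j,k$ and thus $\tU^\top \hV(w_1,w_2) = 0$ for all $w_1,w_2$, which is Property 3.

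\paragraph{Anticipated obstacles.}
The main subtlety is bookkeeping rather than mathematics: one must be careful that $V(w_1, w_2)$ only ever sees the symmetric part of the array $\{v_{jk}\}$, so strictly speaking uniqueness of $\hV$ in Property 2 must be interpreted after symmetrizing (or, equivalently, parameterizing via the $\binom{d+1}{2}$ upper-triangular entries that define $A$). With the symmetric parameterization the off-diagonal rescaling by $2$ that shows up in $V(y_i,y_i) = \sum_{j\le k}(1 + \mathbf{1}\{j<k\}) v_{jk}\ip{u_j}{y_i}\ip{u_k}{y_i}$ only rescales the columns of $A$ and leaves its column rank unchanged, so the proof above goes through verbatim. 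No deeper issue arises; the whole argument is linear algebra driven by the single geometric fact that the target lies entirely in $\mathrm{Im}(\tU)^\perp$.
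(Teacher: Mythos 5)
Your proof is correct and takes essentially the same route as the paper's: part 1 rests on the orthogonal decomposition of the residual into its $\mathrm{Im}(\tU)$ and $\mathrm{Im}(\tU)^\perp$ components (and the observation that these are controlled by independent coordinates of $V$), and parts 2--3 reduce to a weighted linear least-squares problem whose design matrix is exactly $A$, with the same handling of the symmetric/upper-triangular parameterization. The only cosmetic difference is that you solve the least-squares problem one coordinate direction $e \in \R^D$ at a time while the paper stacks them into a single Frobenius-norm problem; these are equivalent.
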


There are still a few parameters which the problem in \Cref{eq:autoencoding_problem} relies on: namely, the intrinsic dimension \(d\) of the manifold and the partition of unity \(\pou\). \\

\noindent \textbf{Estimating the local dimension.} One such parameter is the intrinsic dimension \(d\). Since \(d\) is equivalently the number of basis vectors used to locally represent the data: 
\begin{center}
    \textit{The local dimension \(d\) is chosen such that each encoder layer is as sparse and efficient as possible.} 
\end{center}
This directly translates to finding the smallest \(d\) such that we can still locally reconstruct the data up to a desired precision \(\eps_{\dim} > 0\):
\begin{align}
    \hd = \min_{\td \geq 0} \quad
    & \td \label{eq:auto_dim} \\
    \text{s.t.} \quad
    & \textsf{Recon}(\td, \pou_{0}) \leq \eps_{\dim}. \label{eq:auto_dim_constraints}
\end{align}
Here, since we have not configured a good partition of unity \(\pou\) yet, we use another, ``default'' partition of unity \(\pou_{0} \colon \R^{D} \to \R\) given by
\begin{equation}\label{eq:pou_dim}
    \pou_{0}(x) = e^{-\lambda_0\norm{x - \xz}_{2}^{2}}
\end{equation}
where \(\lambda_0 > 0\) is a scale parameter which is set as a hyperparameter to the algorithm. Heuristically it controls any set radius about \(\xz\) for which we expect the dataset \(\X\) to at least have samples in every intrinsic direction; we should think of this radius as proportional to \(\lambda_0^{-1/2}\), as per our previous discussion.

This optimization is in similar spirit to fixed-rank approaches to low-rank matrix completion, such as ALS \citep{takacs2012alternating} and Riemannian methods \citep{vandereycken2013low}, where we can adaptively choose the dimension \(\hd\) at each point \(\xz\) by finding the minimal \(\td\) such that the optimization problem in \Cref{eq:autoencoding_problem} achieves some desired threshold loss \(\eps_{\dim}\).

Indeed, \(\hd\) is designed to model the intrinsic dimension of \(\M\), since any local flattening of lower dimension would collapse a direction in the tangent space and thus not be locally invertible. Since in practice we expect \(d \ll D\), we can afford to solve \Cref{eq:auto_dim} by iteratively solving \Cref{eq:autoencoding_problem} from \(d = 1\) upwards. Further, since theoretically \(d\) should be the same at every \(\xz \in \M\), we can start the search at the next iterate from the previous estimate \(\hd\) to save time. \\ 

\noindent \textbf{Learning a good partition of unity.} Once we estimate the dimension \(\hd\), the only parameter left is the choice of \(\lambda\) in the partition of unity, i.e., 
\begin{equation}\label{eq:parameterized_pou}
    \pou(x) = \pou_{\lambda}(x) = e^{-\lambda \norm{x - \xz}_{2}^{2}}.
\end{equation}
Again, we know that \(\lambda^{-1/2}\) is proportional to the radius of our partition, which is really the radius of the neighborhood in which our quadratic model approximately holds.

In similar spirit to \(d\), we choose \(\lambda\) to make the individual layer as effective as possible, by maximizing the radius of the partition and thus the affected radius of the layer:
\begin{center}
    \textit{The radius parameter \(\lambda^{-1/2}\) is chosen such that each encoder layer is as productive as possible.} 
\end{center}
While \(d\) is optimized to use the fewest number of parameters within an individual layer (analogous to \textit{network width}), \(\lambda\) is chosen to minimize the number of needed layers (analogous to \textit{network depth}) by ensuring each layer makes as much progress towards a flattening as possible. Analogously to \(d\), then, we can formalize the above statement into a constrained optimization problem:
\begin{align}
    \hat{\lambda} = \inf_{\tilde{\lambda} > 0} \quad 
    &\tilde{\lambda} \\
    \text{s.t.} \quad 
    &  \textsf{Recon}(\hd, \pou_{\tilde{\lambda}}) \leq \eps_{\mathrm{POU}}.
\end{align}
Since \(\textsf{Recon}(\hd, \pou_{\tilde{\lambda}})\) is a monotonic function with respect to \(\tilde{\lambda}\) (and the monotonicity is non-strict only if \(\textsf{Recon}(\hd, \pou_{\tilde{\lambda}}) = 0\)), if we define 
\begin{equation}\label{eq:ell_def}
    \ell(\tilde{\lambda}) \doteq \textsf{Recon}(\hd, \pou_{\tilde{\lambda}})
\end{equation}
then the above optimization problem amounts to computing 
\begin{equation}\label{eq:lambda_hat_choice}
    \hat{\lambda} = \ell^{-1}(\eps_{\mathrm{POU}}).
\end{equation}
Since \(\ell\) is a function from \(\R\) to \(\R\), its inversion can be efficiently computed by e.g.~the secant method.

One issue with naively implementing this optimization is that we only have finite samples \(\X\) from \(\M\). Thus there is a smallest radius (largest \(\tilde{\lambda}\)) such that, for any smaller radii, there are not enough points in \(\X\) which are at most this distance to \(\xz\) to make the problem in \Cref{eq:autoencoding_problem} well-conditioned. Thus, we introduce another hyperparameter \(\lambda_{\max}\) which controls the maximum permissible value of \(\tilde{\lambda}\). The optimization problem becomes 
\begin{align}
    \hat{\lambda} = \inf_{\tilde{\lambda} \in (0, \lambda_{\max})} \quad 
    &\tilde{\lambda} \\
    \text{s.t.} \quad 
    & \textsf{Recon}(\hd, \pou_{\tilde{\lambda}}) \leq \eps_{\mathrm{POU}}.
\end{align}
Similarly to before, we can exploit the monotonicity property of \(\pou_{\tilde{\lambda}}\) to obtain
\begin{equation}
    \hat{\lambda} = \min\{\lambda_{\max}, \ell^{-1}(\eps_{\mathrm{POU}})\}.
\end{equation}
The issue is that if \(\ell^{-1}(\eps_{\mathrm{POU}}) > \lambda_{\max} = \hat{\lambda}\) then we do \textit{not} have \(\textsf{Recon}(\hd, \pou_{\hat{\lambda}}) \leq \eps_{\mathrm{POU}}\). To fix this, we amend the partition of unity with a multiplicative constant:
\begin{equation}\label{eq:full_pou_definition}
    \pou(x) \doteq \pou_{\hat{\lambda}}(x) = \min\left\{\alpha_{\max}, \sqrt{\frac{\eps_{\mathrm{POU}}}{\ell(\hat{\lambda})}}\right\} \cdot e^{-\hat{\lambda}\norm{x - \xz}_{2}^{2}}
\end{equation}
where \(\alpha_{\max} \in (0, 1]\) is a user-set hyperparameter which controls the smoothness of the boundary of the partition of unity created by the flattening procedure in each step; in particular, if \(\alpha_{\max} = 1\), then non-smooth cusps appear in the flattening procedure, so we usually take \(\alpha_{\max} < 1\).
\\

\noindent \textbf{A full description of the algorithm.} With all these details, we are ready to compose a full description of the algorithm. At each iteration, we estimate the local dimension, compute a good partition of unity, estimate the local mean, tangent space, and second fundamental form, and compose our local flattening map, then we glue it with a partition of unity to get a global flattening map. Finally, we compute its inverse, the global reconstruction map, and append them to our multi-layer flattening and reconstruction maps, iteratively forming the Flattening Network.

More formally, we propose the following algorithm for training Flattening Networks (FlatNets):
\begin{algorithm}[H]
    \caption{Construction of FlatNet.}
    \label{alg:ccnet_construction}
    \begin{algorithmic}[1]
        \Function{FlatNetConstruction}{$\X, L, \eps_{\dim}, \lambda_0, \eps_{\mathrm{POU}}, \lambda_{\max}, \alpha_{\max}$}
            \State{Initialize \(\fl_{\CC}, \re_{\CC} \gets \id_{\R^{D}}\)}
            \For{\(\ell \in [L]\)}
                \State{Sample a random point \(\xz \in \X\)} 

                \State{{\color{gray} \texttt{\% Estimate the local dimension}}}
                \State{Define \(\pou_{0} \colon x \mapsto e^{-\lambda_0\norm{x - \xz}_{2}^{2}}\)} \Comment{\Cref{eq:pou_dim}}
                \State{Compute \(\hd \gets \min\{\td \in [D] \colon \textsf{Recon}(\td, \pou_{0}) \leq \eps_{\dim}\}\)} \Comment{\Cref{eq:auto_dim,eq:auto_dim_constraints}}

                \State{{\color{gray} \texttt{\% Compute the partition of unity}}}
                \State{For each \(\tilde{\lambda}\), define \(\pou_{\tilde{\lambda}} \colon x \mapsto e^{-\tilde{\lambda}\norm{x - \xz}_{2}^{2}}\)} \Comment{\Cref{eq:parameterized_pou}}
                \State{Define \(\ell \colon \tilde{\lambda} \mapsto \textsf{Recon}(\hd, \pou_{\tilde{\lambda}})\)} \Comment{\Cref{eq:ell_def}}
                \State{Compute \(\hat{\lambda} \gets \min\{\lambda_{\max}, \ell^{-1}(\eps_{\mathrm{POU}})\}\)} \Comment{\Cref{eq:lambda_hat_choice}}
                \State{Define \(\pou \colon x \mapsto \min\Big\{\alpha_{\max}, \sqrt{\eps_{\mathrm{POU}}/\ell(\hat{\lambda})}\Big\} \cdot e^{-\hat{\lambda}\norm{x - \xz}_{2}^{2}}\)} \Comment{\Cref{eq:full_pou_definition}} 

                \State{\color{gray} \texttt{\% Estimate the local average, tangent space, and second fundamental form}}
                \State{Compute \(\hxc \gets \displaystyle \frac{\sum_{i = 1}^{N}x_{i}\pou(x_{i})}{\sum_{i = 1}^{N}\pou(x_{i})}\)} \Comment{\Cref{eq:local_mean_approx}}
                \State{Compute \(\hU, \hV \gets \) solutions of \(\textsf{Recon}(\hd, \pou)\)} \Comment{\Cref{eq:autoencoding_problem}}

                \State{{\color{gray} \texttt{\% Compute local and global flattening maps and and global reconstruction map}}}
                \State{Define \(\fl_{\loc} \colon x \mapsto \hU\hU^{\top}(x - \hxc) + \hxc\)} \Comment{\Cref{eq:local_flattening}}
                \State{Define \(\fl \colon x \mapsto \pou(x)\fl_{\loc}(x) + (1 - \pou(x))x\)} \Comment{\Cref{eq:part_unity}}
                \State{Compute \(\pouinv\) such that \(\pouinv(\fl(x)) = \pou(x)\) for all \(x \in \R^{D}\)} \Comment{\Cref{lem:local_inv}}
                \State{Define \(\re \colon z \mapsto z + \pouinv(z)\left(\xz - \hU\hU^{\top}(\xz - \hxc) + \hxc + \hV(\hU\hU^{\top}(z - \xz), \hU\hU^{\top}(z - \xz))\right)\)} \Comment{\Cref{eq:global_recon}}

                \State{{\color{gray} \texttt{\% Compose with previous layers}}}
                \State{Redefine \(\fl_{\CC} \gets \fl \circ \fl_{\CC}\)}
                \State{Redefine \(\re_{\CC} \gets \re_{\CC} \circ \re\)}
            \EndFor
            \State{\Return{\(\fl_{\CC}, \re_{\CC}\)}}
        \EndFunction
    \end{algorithmic}
\end{algorithm}
In \Cref{sec:time_complexity}, we show that Flattening Networks meet the scalability requirements detailed in \Cref{sec:prob_formulation}.

\section{Algorithm intuition and convergence analysis}
In this section, we provide the reader with some understandings and interpretations of the presented Algorithm \ref{alg:ccnet_construction}.

\subsection{Network diagram}

Recall that our method builds a flattening pair \(\fl_{\CC}, \re_{\CC}\) by repeatedly constructing and composing forward and backwards maps: \(\fl_{\CC} = \fl_{L} \circ \cdots \circ \fl_{1}\) and \(\re_{\CC} = \re_{1} \circ \cdots \circ \re_{L}\), where each layer pair \(\fl_{\ell}, \re_{\ell}\) takes the following form:
\begin{align}
    \fl_{\ell}(x) 
    &= \pou(x)(UU^\top(x - x_c) + x_c) + (1 - \pou(x))x,\\
    &= \pou(x)(U U^\top x + (I - U U^\top)x_c) + (1-\pou(x))x, \\
    \re_\ell(z) &= z + \pouinv(z)\left(\xz - \P_{\TxzM + \xc}\{\xz\} + \frac{1}{2}V(\P_{\TxzM}\{z - \xz\}, \P_{\TxzM}\{z - \xz\})\right),\\
    &= z + \pouinv(z)\left((I - UU^\top)(x_{0} - x_c) + \frac{1}{2}\sum_{j = 1}^{d}\sum_{k = 1}^{d}v_{jk}\ip{u_{j}}{z - x_0}\ip{u_{k}}{z - x_0}\right),
\end{align}
where \(U \in \R^{D\times d}\) with \(U^\top U = I\), \(V \in \R^{D\times d \times d}\), \(\xz \in \X\), \(\pou(x) = \alpha e^{-\lambda\norm{x - \xz}_{2}^{2}}\) for algorithmically chosen \(\alpha, \lambda > 0\), \(\xc = \frac{\sum_{i = 1}^{N}x_{i}\pou(x_{i})}{\sum_{i = 1}^{N}\pou(x_{i})}\), and \(\pouinv(z)\) is the computed scalar function such that \(\pouinv(\psi(x)) = \pou(x)\). These maps are graphically depicted in \Cref{fig:layer_diagram}.

\begin{figure}
    \centering
    \begin{tikzpicture}[scale=7]
        

        \node[scale=1.5] at (-0.07, 0.2) {\(x\)};
        \draw[line width = 1, black] (-0.015, 0) -- (-0.015, 0.4);
        
        \draw[draw=white, fill={rgb,255:red,217;green,252;blue,222}] (0, 0) -- (0, 0.4) -- (0.25, 0.25) -- (0.25, 0.15);
        \node[scale=1.5] at (0.125, 0.2) {\(U^\top\)};

        \draw[line width = 1, black] (0.265, 0.25) -- (0.265, 0.15);

        \draw[draw=white, fill={rgb,255:red,217;green,252;blue,222}] (0.28, 0.15) -- (0.53, 0) -- (0.53, 0.4) -- (0.28, 0.25);
        \node[scale=1.5] at (0.405, 0.2) {\(U\)};

        \draw[-stealth] (-0.015, -0.01) .. controls (0.2, -0.23) and (0.55, -0.23) .. (0.7, 0.17);

        \draw[-stealth] (0.565, 0.2) -- (0.675, 0.2);

        \node at (0.7, 0.2) {\(\bigoplus\)};

        \draw[-stealth] (0.73, 0.2) -- (0.835, 0.2);

        \draw[line width = 1, black] (0.865, 0) -- (0.865, 0.4);
        \node[scale=1.5] at (1.05, 0.2) {\(\fl(x) = z\)};

        \node[scale=1.5] at (-0.07, -0.4) {\(z\)};
        \draw[line width = 1, black] (-0.015, -0.2) -- (-0.015, -0.6);

        \draw[-stealth] (0, -0.4) -- (0.15, -0.4);
        
        \draw[draw=black] (0.17, -0.3) -- (0.37, -0.3) -- (0.37, -0.5) -- (0.17, -0.5) -- (0.17, -0.3);
        \node[scale=1.5] at (0.27, -0.4) {\(\sff_{U, V}\)};
        
        \draw[-stealth] (0.4, -0.4) -- (0.675, -0.4);
        \draw[-stealth] (-0.015, -0.61) .. controls (0.2, -0.83) and (0.55, -0.83) .. (0.7, -0.43);

        \node at (0.538, -0.37) {\(\pouinv(z)\)};
        \node at (0.7, -0.4) {\(\bigoplus\)};
        
        \draw[-stealth] (0.73, -0.4) -- (0.835, -0.4);

        \draw[line width = 1, black] (0.865, -0.2) -- (0.865, -0.6);
        \node[scale=1.5] at (1.05, -0.4) {\(\re(z) \approx x\)};

    \end{tikzpicture}
    \caption{Network diagrams for each individual layer pair, with the encoder $\fl_\ell : \R^D\to \R^D$ depicted on the top and the decoder $\re_\ell : \R^D \to \R^D$ depicted on the bottom. Note the encoder's structure resembles a multi-layer percepton layer with a residual connection; here though, the depicted sum is not a plain sum, but a weighed sum based on the Boltzmann distribution $\pou(x)$. We have omitted the bias terms that arise from constants in the expressions of $\fl, \re$. Note that in generalizing this framework, $\sff_{U, V}$ can be replaced with any local, interpretable model of the data.}
    \label{fig:layer_diagram}
\end{figure}

\subsection{What does each iteration look like?}

To build some intuition for what each iteration of the network construction algorithm is doing, in \Cref{fig:flow_prog} we provide an example run of the algorithm on data sampled from a three-quarters-circle curve, and graph the output of the flattening map \(\fl_{\mathrm{CC}}\) after \(0\), \(20\), \(40\),  \(60\), and \(80\) layers have been constructed. More formally, let \(\fl_{1:\ell}\) be the flattening map composed of the first \(\ell\) layers of \(\fl_{\mathrm{CC}}\). Then define
\begin{align}
    \X_{\ell} &= \fl_{1:\ell}(\X) \\
    \M_{\ell} &= \fl_{1:\ell}(\M).
\end{align}
Note that we build layers of a FlatNet in a ``closed-loop fashion'' \citep{ma2022principles}: the directions to flatten the data manifold \(\M\), i.e., the estimator of the tangent space used by each individual layer of \(\fl_{\mathrm{CC}}\), are only determined by repeatedly going back to the data through the corresponding reconstruction map and finding the optimal flattening direction.
\begin{figure}[H]
    \centering
    \subfloat[\centering \(\M\)]{{\includegraphics[width=2.61cm]{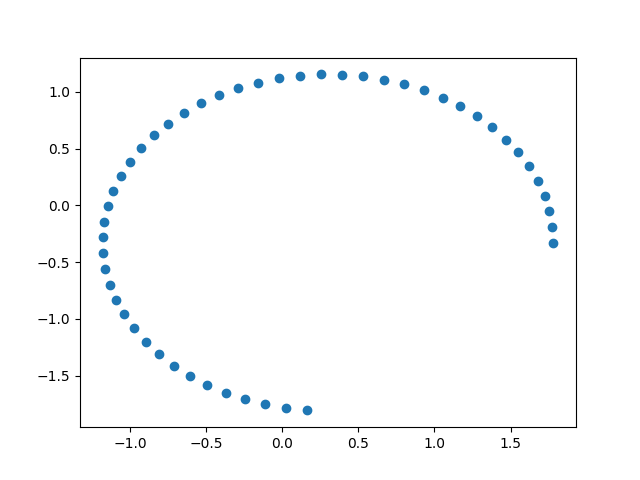} }}%
    \quad
    \subfloat[\centering \(\M_{20}\)]{{\includegraphics[width=2.61cm]{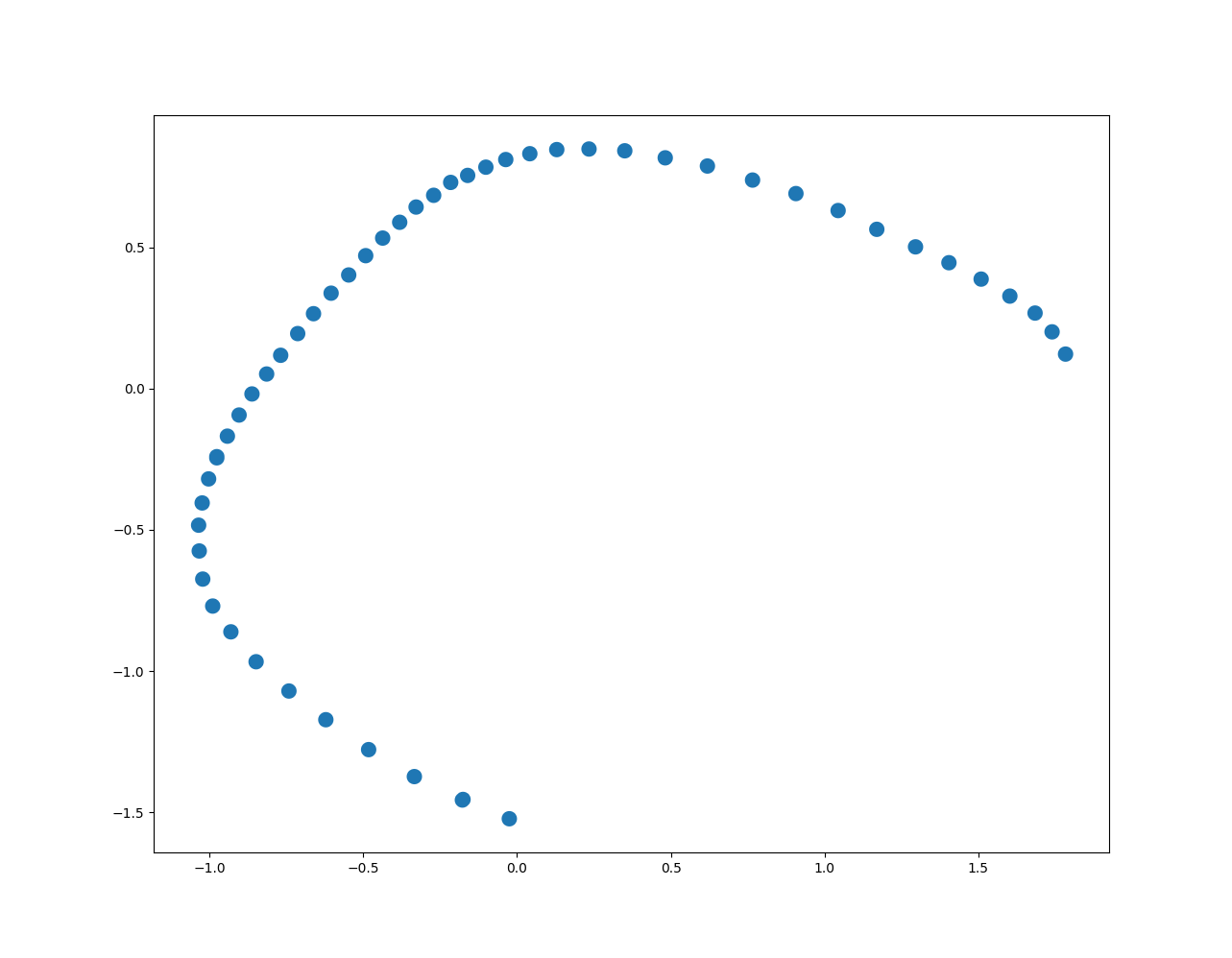} }}%
    \quad
    \subfloat[\centering \(\M_{40}\)]{{\includegraphics[width=2.61cm]{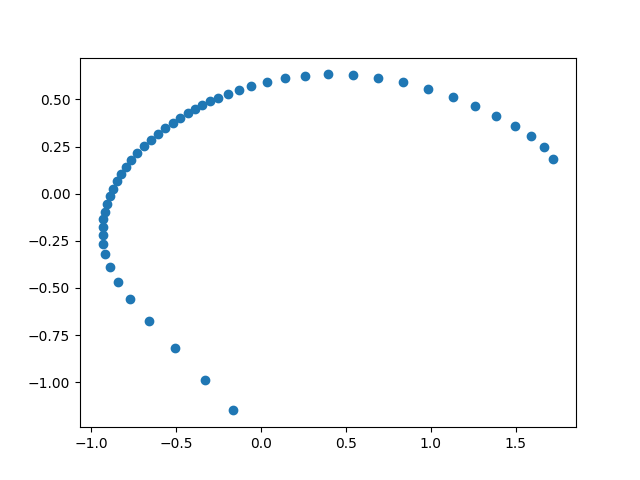} }}%
    \quad
    \subfloat[\centering \(\M_{60}\)]{{\includegraphics[width=2.61cm]{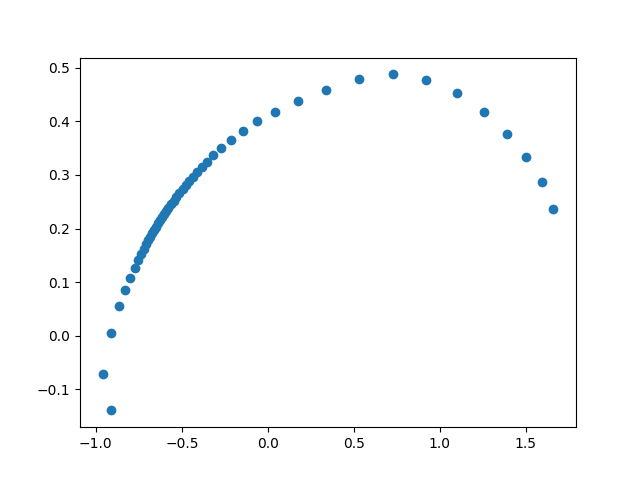} }}%
    \quad
    \subfloat[\centering \(\M_{80}\)]{{\includegraphics[width=2.61cm]{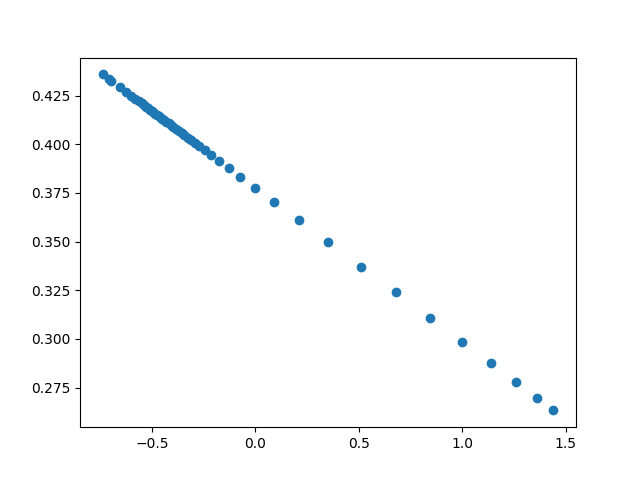} }}%
    
    \caption{Plots of samples from the manifolds \(\M_{k}\) after a given number of layers \(k\) have been constructed. Note that the flattening is not isometric, so some intrinsic distortion is expected. However, this distortion is controlled by the requirement of each flattening to be invertible.}%
    \label{fig:flow_prog}
\end{figure}

\subsection{Convergence analysis}

A complete convergence proof is outside of the scope of this paper, as proving a full theorem for convergence would require utilizing structure from the sampling size and density, noise level, structure of \(\M\) (something akin to low-curvature), and more. We leave a complete theorem to future work.

However, there are still many theoretical ideas for convergence that both illuminate the design choices for our architecture and characterize typical optimization behaviors: namely, what does hitting bad/good local minima look like. To this end, we give theoretical formality to the convergence of our main algorithm. We first detail the proper ``norm'' to use when detailing how close a manifold along the algorithm's process \(\M_{k}\) to converging. We then give an outline for a convergence theorem using the proposed norm, characterizing both why we expect our main algorithm to converge to a good minimum and what happens when a bad local minimum is hit.
\\

\noindent \textbf{A potential function for convergence}. Recall from \Cref{fig:conv_hull_manifold} that we use the difference between the convex hull \(\conv(\M)\) and the manifold \(\M\) itself to heuristically measure how close \(\M\) is to being flattened. Thus, we can first design a scalar-valued \textit{potential function} \(\H\) that measures the difference between \(\conv(\M)\) and \(\M\). For the sake of simplicity, we use the well-studied \textit{Hausdorff metric} \citep{birsan2005one} between \(\M\) and \(\conv(\M)\), which, since \(\M \subseteq \conv(\M)\) and \(\M\) is compact, reduces to the following:
\begin{equation}
    \H(\M) \doteq \max_{x^{\prime} \in \conv(\M)}\min_{x \in \M}\norm{x - x^{\prime}}_{2}.
\end{equation}
Now the following are equivalent:
\begin{itemize}
    \item \(\H(\M) = 0\);
    \item \(\M = \conv(\M)\);
    \item the second fundamental form of \(\M\) is identically \(0\);
\end{itemize}
and the last definition shows that \(\H\) is a metric determining how non-flat a manifold is. However, such a potential function does not reveal anything about the convergence of \Cref{alg:ccnet_construction}, since there exist manifolds such that \(\M \subseteq \T_{x}\M\) at all \(x \in \M\) but still have extrinsic curvature. Such tangent space-contained manifolds \(\M\) are ``fixed points'' of \Cref{alg:ccnet_construction} in the sense that any constructed flattening maps will evaluate to the identity, but \(\M\) is not convex or flat. We then shift our potential function in a manner that more closely resembles \Cref{alg:ccnet_construction}, i.e., measuring convexity of local subsets of a fixed radius \(\eta > 0\):
\begin{align}
    &\Omega_{\eta}(\M) = \int_{\interior[\eta]{\M}} \H(\M \cap B_{\eta}(x))\ \mathrm{d}x, \\
    \text{where} \quad 
    &\interior[\eta]{\M} \doteq \{x \in \M \colon \boundary{\M} \cap B_{\eta}(x) = \emptyset\},
\end{align}
where $\boundary \M$ is the boundary of $\M$. If \(\H(\M) = 0\), then \(\H(\M \cap B_{\eta}(x)) = 0\) at each \(x\) ensures that the tangent space \(\T_{x}\M\) doesn't change locally, and thus \(\T_{x}\M\) cannot change globally, but \(\M\) does not necessarily have to be convex, since the above integral is only taken in the interior of $\M$. We can finally discretize the integral into a finite sum over the dataset \(\X\):
\begin{equation}\label{eq:convergence_loss_main}
    \tilde{\Omega}_{\eta}(\M) = \sum_{i = 1}^{n} \H(\M \cap B_{\eta}(x_{i})).
\end{equation}
The role of \(\eta\) becomes more apparent in the above formulation: if \(\X\) is an \(\eta\)-cover for \(\M\) (see \Cref{def:faithful}), then \(\tilde{\Omega}_{\eta}(\M) = 0\) implies \(\M \subseteq \T_{x}\M\) for all \(x \in \M\), and thus implies a good convergence property for \Cref{alg:ccnet_construction}. As neighborhoods in eq. \eqref{eq:convergence_loss_main} may contain boundary points, some case-work is needed in order to handle behavior at boundary points, where Algorithm \ref{alg:ccnet_construction} may converge while the local loss remains nonzero.
\\

\noindent \textbf{Convergence analysis via modelable radius}. While \(\X_{\ell}\) and \(\M_{\ell}\) are obviously important quantities to track for convergence, we present one more quantity of interest. Let \(r_{Q, \ell, \eps}(x) \geq 0\) be the maximum radius \(r\) around \(x\) such that \(\M_{\ell}\) is modelable by a quadratic around \(x\) of radius \(r\) with at most \(\eps\) error: 
\begin{equation}
    \max_{\substack{y \in \M \\ \norm{y - x}_{2} \leq r}}\max_{\substack{z \in \hat{Q} \\ \norm{z - x}_{2} \leq r}}\norm{y - z}_{2} \leq \eps
\end{equation}
where \(\hat{Q}\) is a local quadratic model given by \Cref{eq:exp_taylor} for some proposed tangent space \(\hat{\T}_{x}\M_{k} \subseteq \R^{D}\) of dimension \(d\) and second fundamental form \(\hat{\sff}_{x} \colon \T_{x}\M \times \T_{x}\M \to \N_{x}\M\). If no such upper bound exists, we say \(r_{Q, \ell, \eps}(x) = \infty\).

In particular, \(r_{Q, \ell, \eps}\) plays a crucial role in studying the convergence of \Cref{alg:ccnet_construction}, as it completely characterizes whether or not we converge to a good or bad fixed point. This idea can be expressed concretely by the following dichotomy:

\begin{enumerate}
    \item Suppose that \(\M_{\ell} \subseteq \T_{x}\M_{\ell}\) for some \(\ell \in \mathbb{N}\), thus halting \Cref{alg:ccnet_construction} at a good fixed point. Then \(r_{Q, \ell, \eps}(\fl_{\ell}(x_{i})) = \infty\) for all \(i \in [N]\) by setting \(\hat{\sff}_{x_{i}} = 0\), and thus the sequence \((r_{Q, \ell, \eps}(\fl_{\ell}(x_{i})))_{\ell \in \mathbb{N}}\) has no upper bound for any \(i \in [N]\). 
    \item Suppose on the contrary there exists some point \(x_{i} \in \X\) such that the sequence \((r_{Q, \ell, \eps}(\fl_{\ell}(x_{i})))_{\ell \in \mathbb{N}}\) has an upper bound. Thus, \(\M_{\ell} \nsubseteq \T_{x_{j}}\M_{\ell}\) for all \(\ell \in [L]\), and \(x_{j} \in \X\). Then if \(\M_{t}\) reaches a fixed point at some timestep \(t\), there must exist some \(x_{j}\) such that \(r_{Q, t, \eps}(\fl_{t}(x_{j})) < \eta\), since otherwise there exists an atlas of \(\M_{t}\) where each chart/neighborhood is completely flat, contradicting the statement that  \(\M_{\ell} \nsubseteq \T_{x}\M_{\ell}\).
\end{enumerate}
From the above dichotomy, we see the role of \(r_{Q, \ell, \eps}\): this quantity will diverge if \Cref{alg:ccnet_construction} converges to a good local minimum, and converge below the ``threshold level'' \(\eta\) if a bad local minima is reached. However, the above analysis still \textit{assumes} that \Cref{alg:ccnet_construction} converges. Extending beyond this assumption is feasible, but requires detailed numerical analysis to account for the specific problem structure. The following is a dichotomy on the trajectory \((\M_{\ell})_{\ell \in \mathbb{N}}\) that does not require assumptions on convergence:
\begin{enumerate}
    \item Suppose that for some \(\ell \in \Z\) and \(x_{i} \in \X\), we have \(r_{Q, t, \eps}(\fl_{t}(x_{i})) < \eta\) for all \(t > \ell\). Then \Cref{alg:ccnet_construction} either does not converge or converges to a local minima where \(\M_{\ell} \nsubseteq \T_{x}\M_{\ell}\).

    \item Suppose that for all \(\ell \in \Z\) and \(x_{i} \in \X\), there exists \(t > \ell\) such that \(r_{Q, t, \eps}(\fl_{t}(x_{i})) \geq \eta\). Then \Cref{alg:ccnet_construction} can always make some progress on every \(x_{i}\) at iteration \(t\).
\end{enumerate}

Formalizing the above ``progress'' into positive progress via \(\lim_{\ell \to \infty}\tilde{\Omega}(\M_{\ell}) = 0\) is the main missing ingredient for a full convergence theorem. While it is clear that an iteration of \Cref{alg:ccnet_construction} at point \(x_{i}\) decreases the component of \(\tilde{\Omega}_{\eta}(\M_{\ell})\) coming from \(x_{i}\) in \Cref{eq:convergence_loss_main}, namely \(\H(\M_{\ell} \cap B_{\eta}(\fl_{\ell}(x_{i})))\), it's not clear that the amount this iteration inadvertently increases \(\H(\M_{\ell} \cap B_{\eta}(\fl_{\ell}(x_{j})))\) for other \(x_{j} \in \X\) would not lead to an overall increase in \(\tilde{\Omega}_{\eta}(\M_{\ell})\). Proving this relation requires careful analytical work on the global structure of \(\M\) and \(\X\), which we leave for future work.

\section{Experiments}

To evaluate our method, we test our Flattening Networks (FlatNet) on synthetic low-dimensional manifold data, randomly generated high-dimensional manifolds generated by Gaussian processes \citep{lahiri2016random,lawrence2005probabilistic}, and popular real-world imagery datasets. The following are experimental observations that set our method apart from what is currently available:

\begin{enumerate}
    \item \textit{There is no need to select the intrinsic dimension $d$}, as this is automatically learned from the data using \eqref{eq:auto_dim}. This is in contrast to neural network-based representation learners, e.g. variational autoencoders \citep{kingma2013auto}, which need a feature dimension selection beforehand, or many manifold learning methods that need an intrinsic dimension specified. This is important for practical data manifolds, since the intrinsic dimension of the underlying data manifold is hardly ever known.
    \item \textit{There is no need to select a stopping time}, as our convexifying geometric flow converges once the learned representation is already flat.
    \item \textit{Learned manifolds from noisy data are smooth}, as depicted in figures in Section \ref{sec:lowdim}. This is likewise important for practical data manifolds, as samples have off-manifold noise. 
\end{enumerate}

Code for both FlatNet itself and the below experiments is publicly available\footnote{\url{https://github.com/michael-psenka/manifold-linearization}}.

\subsection{Low-dimensional manifold data}\label{sec:lowdim}

We test on three types of low-dimensional manifolds: 
\begin{itemize}
    \item Data sampled from a (noisy) sine wave,
    \item Data sampled from a Gaussian process manifold \citep{lahiri2016random,lawrence2005probabilistic} with intrinsic dimension \(d = 1\) in Euclidean space of extrinsic dimension \(D = 2\),
    \item and data sampled from a Gaussian process manifold with \(d = 2\) and \(D = 3\).
\end{itemize}
Since the Gaussian process manifold data generation process is not clear at first glance, we describe it here. We first lay out \(N\) vectors of intrinsic coordinates in a matrix \(C \in \R^{d \times N}\); in our experiments, these are uniformly generated. Then for each \(i \in \{1, \dots, D\}\), we set up a correlation matrix \(\Sigma_{i} \in \R^{N \times N}\) whose coordinates \((\Sigma_{i})_{pq} = \frac{L_{i}}{D}e^{-\rho_{pq}/2}\), where \(L_{i} > 0\) is a hyperparameter (set to be \(1\) in our quantitative experiments) and \(\rho_{pq} = \|c_{p} - c_{q}\|_{2}^{2}\), where \(c_{p}\) and \(c_{q}\) are the columns of \(C\). Then the \(i^{\mathrm{th}}\) row of \(X\) (i.e. the \(i^{\mathrm{th}}\) coordinate of all datapoints, a vector in \(\R^{N}\)) is sampled (independently of all other rows) from the Gaussian \(\mathcal{N}(0_{N}, \Sigma_{i})\). Repeating this for all \(i \in \{1, \ldots, D\}\) obtains the full data matrix \(X = \mat{x_{1} & \cdots & x_{N}} \in \R^{D \times N}\).

We measure the performance of our algorithm in a few ways.
\begin{itemize}
    \item Given data \(X \in \R^{D \times N}\), we can plot the features \(Z \doteq \fl_{\CC}(X) \in \R^{D \times N}\) and reconstructions \(\hat{X} \doteq \re_{\CC}(Z) \in \R^{D \times N}\). Then the features \(Z\) should form an affine subspace and the reconstructions \(\hat{X}\) should be close to \(X\) sample-wise. In order to understand how our method generalizes, we may linearly interpolate between pairs of features in \(Z\) to get a large new matrix \(Z_{\mathrm{test}} \in \R^{D \times M}\) for \(M \gg N\), then reconstruct that as \(\hat{X}_{\mathrm{test}} \coloneqq \re_{\CC}(Z_{\mathrm{test}}) \in \R^{D \times M}\). Ultimately, we plot \(X\), \(Z\), and \(\hat{X}_{\mathrm{test}}\).
    \item If we know an explicit formula for the ground truth curve that generates the data, we may compute the tangent space at each point. We may thus evaluate how close our tangent space estimator used in the algorithm is to the ground truth tangent space. We do this by plotting both the estimated tangent space and the true tangent space; they should greatly overlap.
\end{itemize}

In \Cref{fig:sine_example}, we demonstrate the performance of our algorithm FlatNet on data generated from the graph of a (noisy) sine wave. More precisely, set \(N = 50\), \(D = 2\), \(d = 1\), and \(M = 5000\). In the left figure we test linearization, reconstruction, and generalization, while in the right figure we demonstrate tangent space estimation. 

\begin{figure}%
    \centering
    \subfloat[\centering Converged result of our algorithm on noisy sine wave data]{{\includegraphics[width=0.4\textwidth]{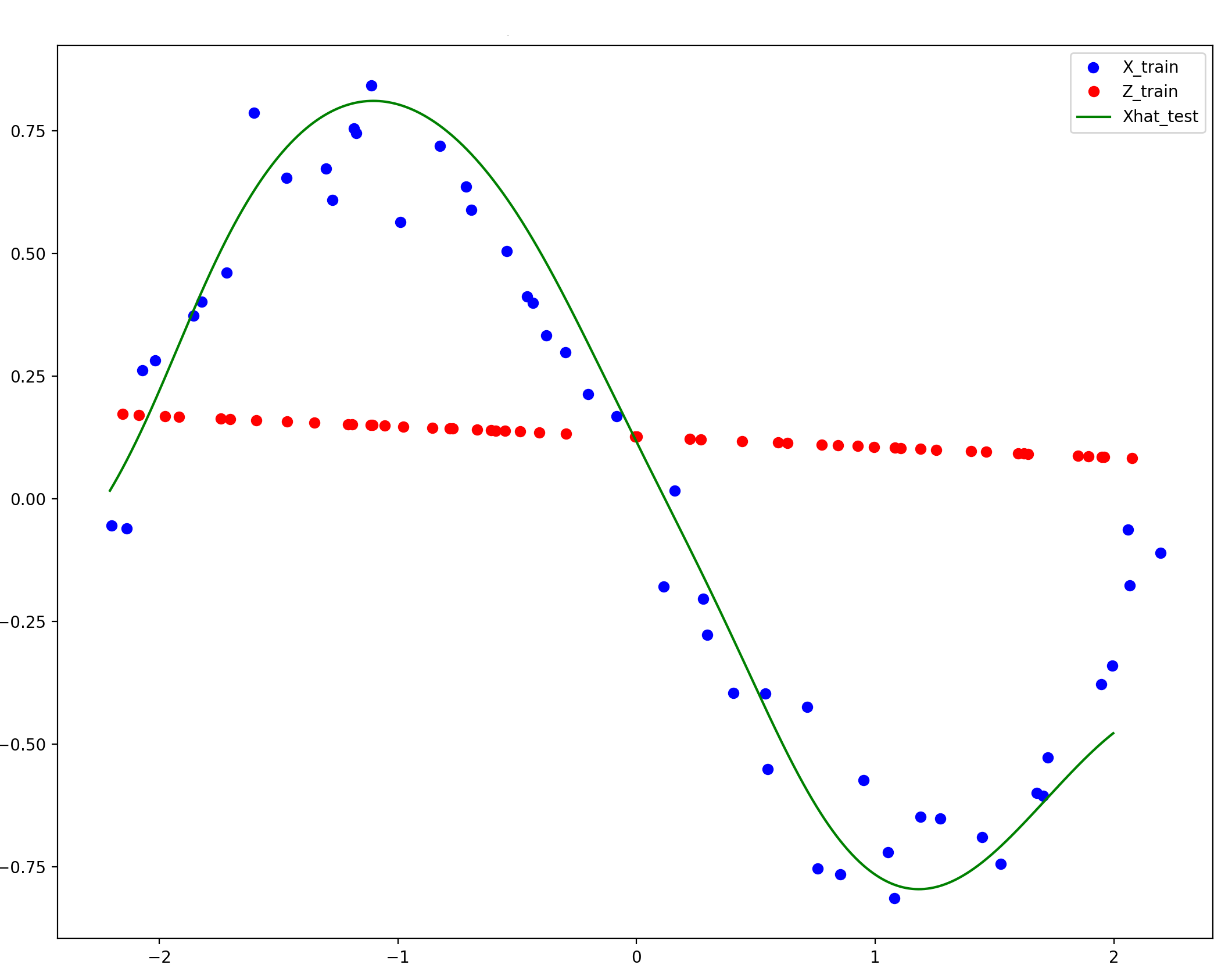} \label{fig:interpolate}}}%
    \qquad
    \subfloat[\centering Tangent space estimation performance compared to local PCA on sine wave data]{{\includegraphics[width=0.4\textwidth]{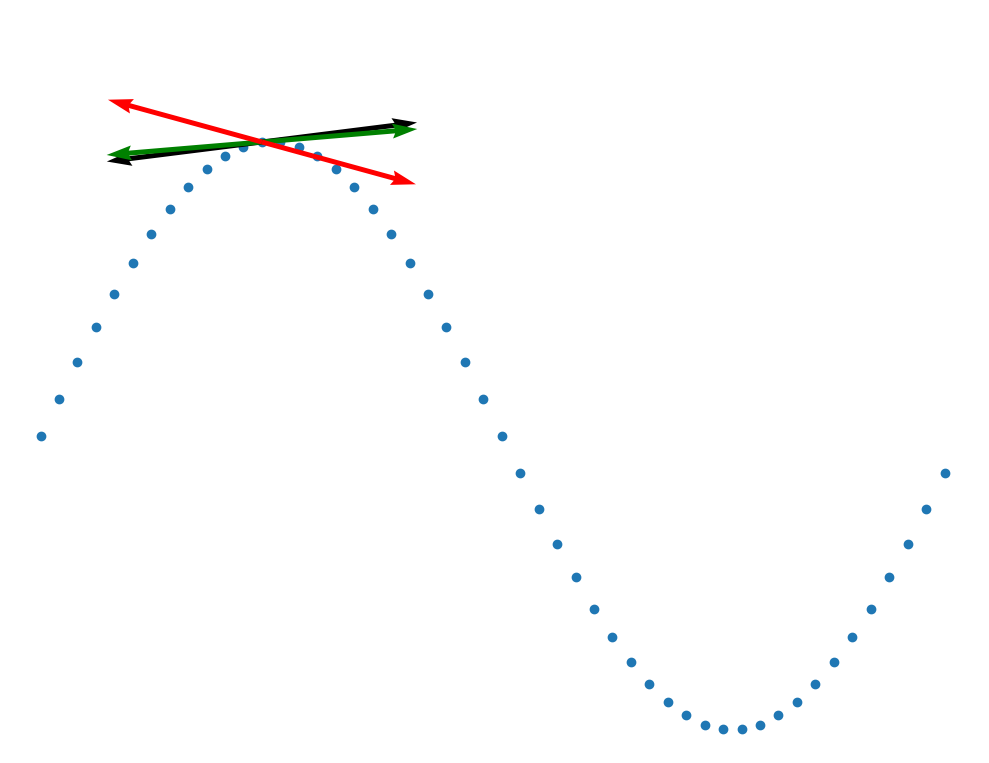} }}%
    \caption{Results of FlatNet on data sampled uniformly from the graph of a sine wave embedded in \(\R^{2}\). In the left figure: the blue points are the training data \(X\), the red points are the flattened data \(Z\), and the green line is the reconstruction of interpolated points \(\hat{X}_{\mathrm{test}}\). On the right, each double sided arrow is a tangent space: black is ground truth, green is our estimation method, and red is local PCA.}%
    \label{fig:sine_example}%
\end{figure}

In \Cref{fig:example_gp_D2_d1}, we demonstrate the performance of our algorithm FlatNet on data generated on a manifold constructed via Gaussian processes \citep{lahiri2016random,lawrence2005probabilistic}. Note that for such manifolds, we do not know ground truth tangent spaces, so we constrain our experiments to demonstrating reconstruction and generalization performance. We set \(N = 50\), \(D = 2\), \(d = 1\), and \(M = 5000\) like before. We compare the reconstruction and generalization performance of FlatNet against three types of variational autoencoders (VAEs): (1) the vanilla VAE \citep{kingma2013auto}, (2) \(\beta\)-VAE (\(\beta = 4\)) \citep{higgins2017beta}, and (3) FactorVAE (\(\gamma  = 30\)) \citep{kim2018disentangling}. Each VAE encoder and decoder is a 5 layer multi-layer perceptron, with dimension \(100\) at each layer, and latent dimension \(d = 1\). Each VAE is trained for \(100\) epochs with Adam optimizer and \(10^{-3}\) learning rate. The experiment demonstrates that FlatNet is convincingly better at learning to reconstruct low-dimensional structures, and has better generalization performance, than the VAEs we compare against.

\begin{figure}%
    \centering
    \subfloat[\centering FlatNet.]{{\includegraphics[width=0.45\textwidth]{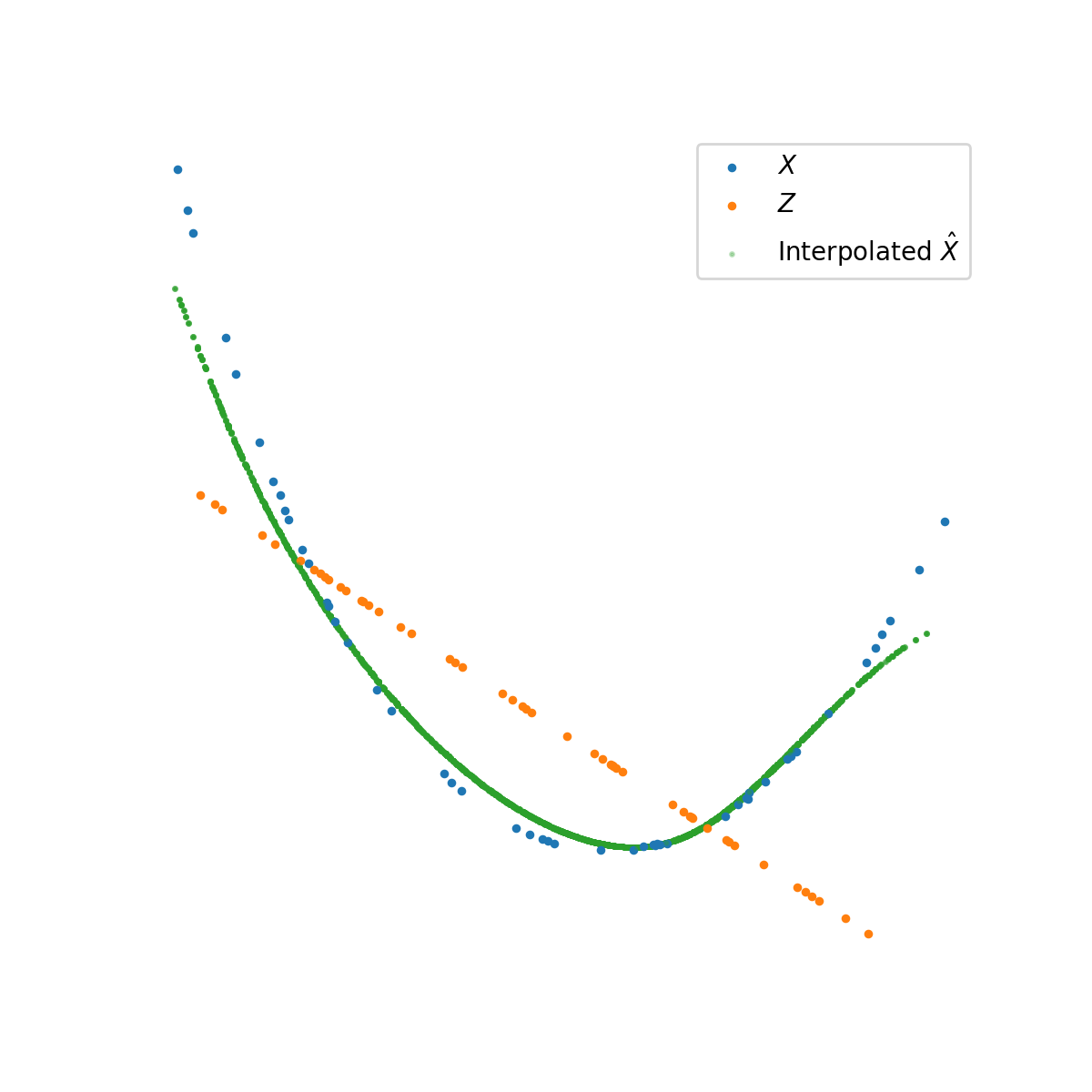} \label{fig:example_gp_D2_d1_cc}}}%
    \qquad
    \subfloat[\centering VAE.]{{\includegraphics[width=0.45\textwidth]{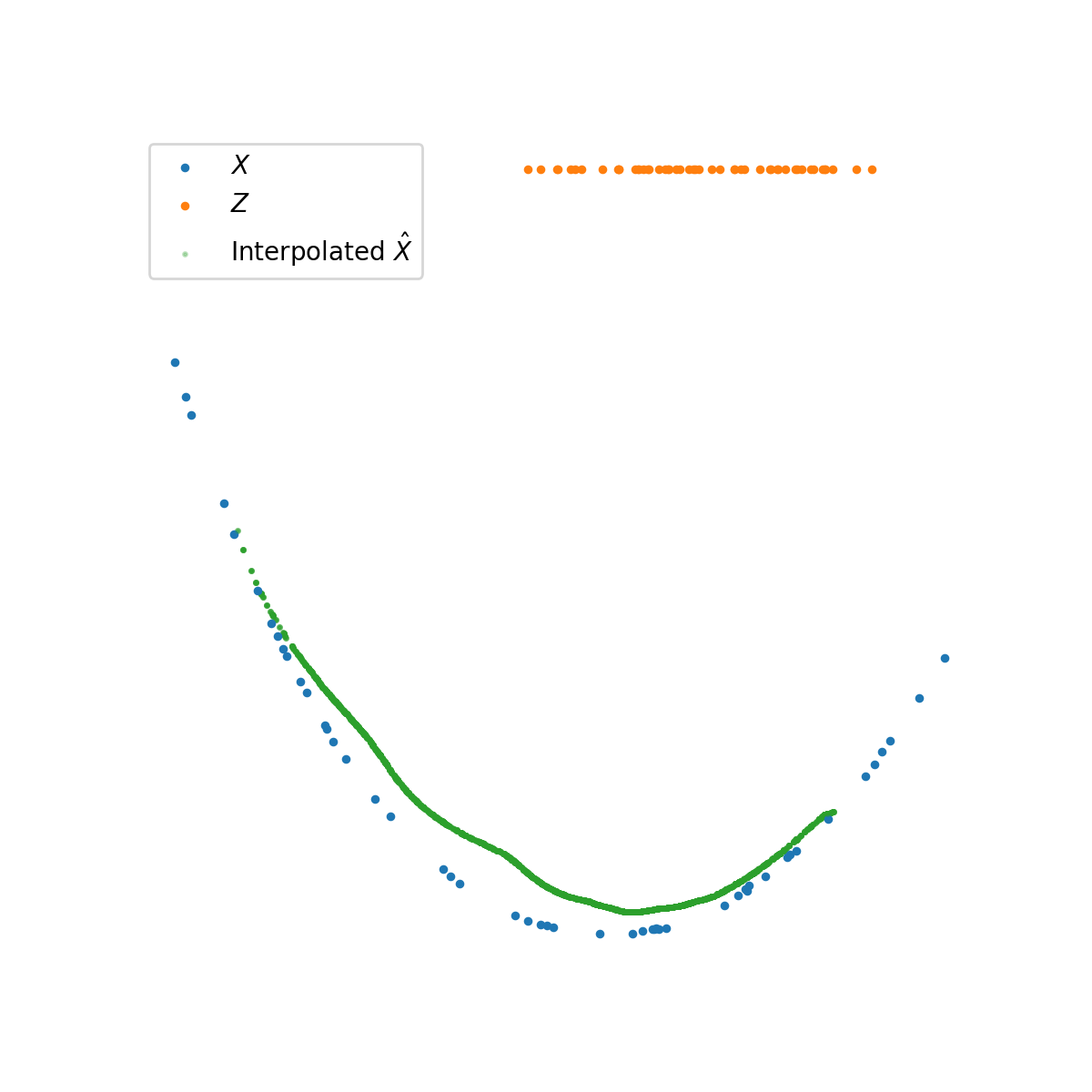} \label{fig:example_gp_D2_d1_vae} }}%
    \qquad
    \subfloat[\centering \(\beta\)-VAE.]{{\includegraphics[width=0.45\textwidth]{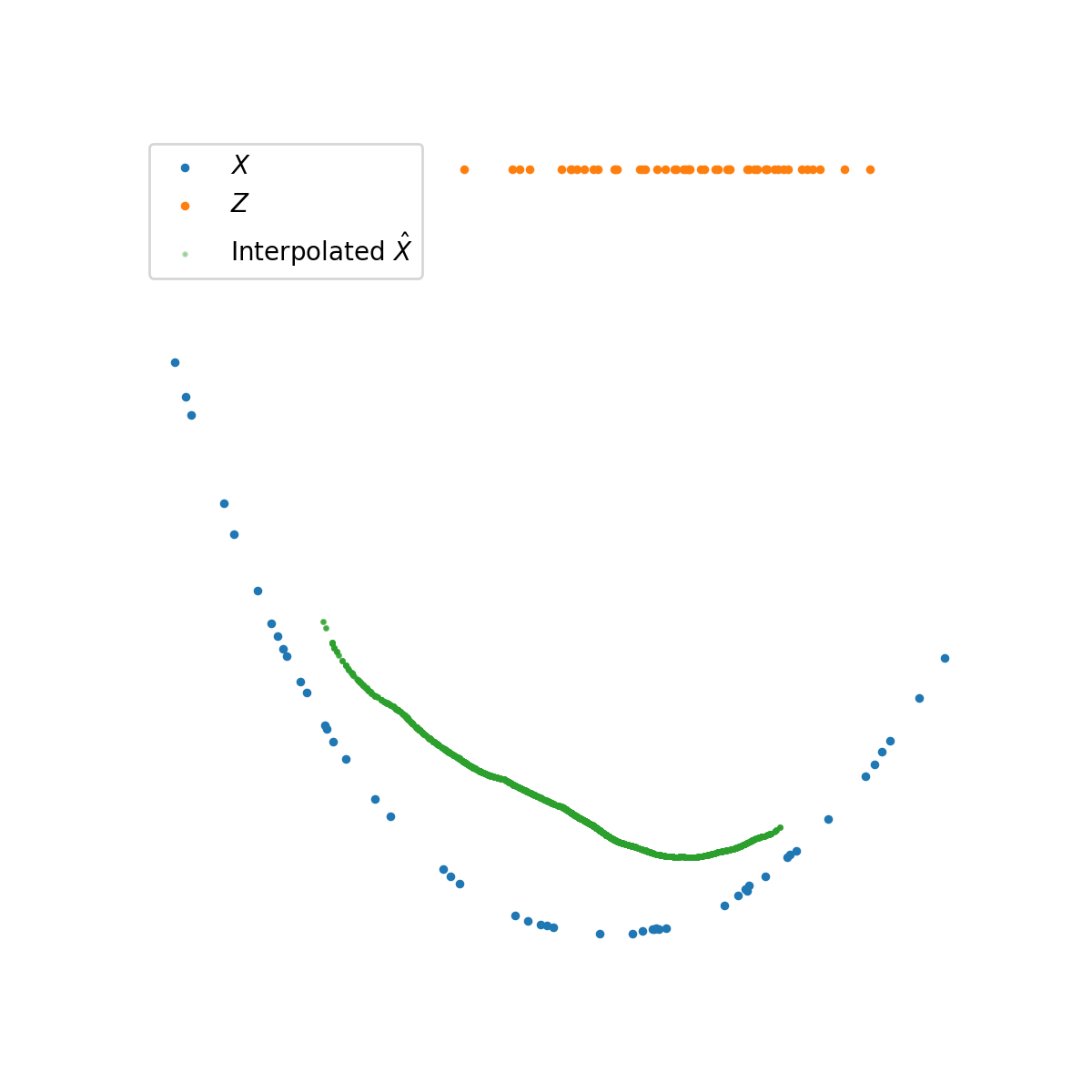} \label{fig:example_gp_D2_d1_bvae} }}%
    \qquad
    \subfloat[\centering FactorVAE.]{{\includegraphics[width=0.45\textwidth]{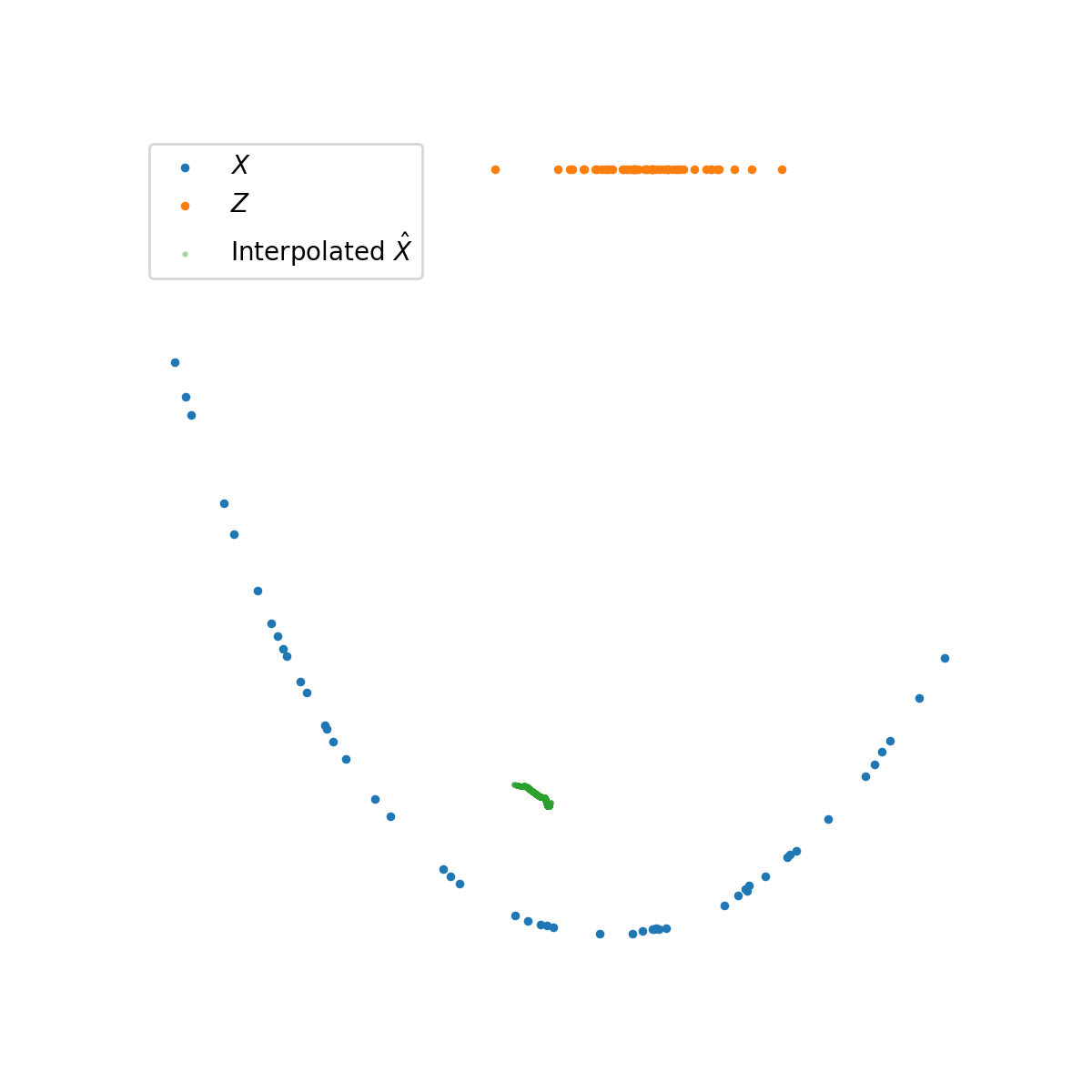} \label{fig:example_gp_D2_d1_bvae} \label{fig:example_gp_D2_d1_fvae} }}%
    \caption{Results of FlatNet contrasted with VAE, \(\beta\)-VAE, and FactorVAE. Data is \(N = 50\) points sampled from a random \(d = 1\) dimensional Gaussian process manifold in \(\R^{D} = \R^{2}\). Note that FactorVAE completely degenerates on the low-dimensional data structure, while the other two VAEs clearly do not perform as well as FlatNet.}%
    \label{fig:example_gp_D2_d1}%
\end{figure}

In \Cref{fig:example_gp_D3_d2}, we demonstrate the performance of FlatNet on data generated on a manifold constructed via Gaussian processes as before. This time, we set \(N = 50\), \(D = 3\), \(d = 2\), and \(M = 5000\). We do the same comparisons to VAEs as before; this time the VAEs have latent dimension \(d = 2\). The experiment again demonstrates that FlatNet is better at learning to reconstruct low-dimensional structures and has better generalization performance.

\begin{figure}%
    \centering
    \subfloat[\centering FlatNet.]{{\includegraphics[width=\textwidth]{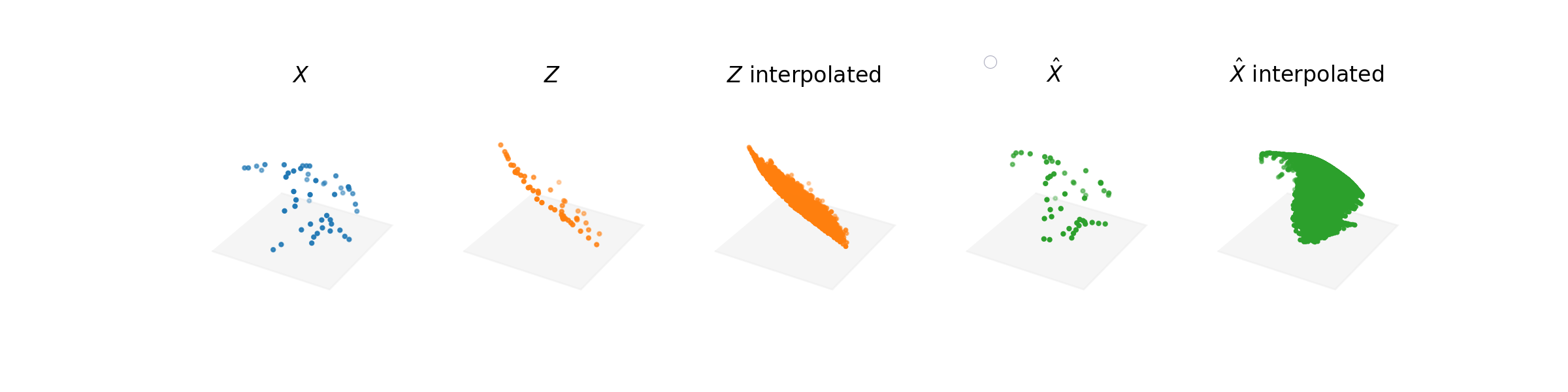} \label{fig:example_gp_D3_d2_cc}}}%
    \qquad
    \subfloat[\centering VAE.]{{\includegraphics[width=\textwidth]{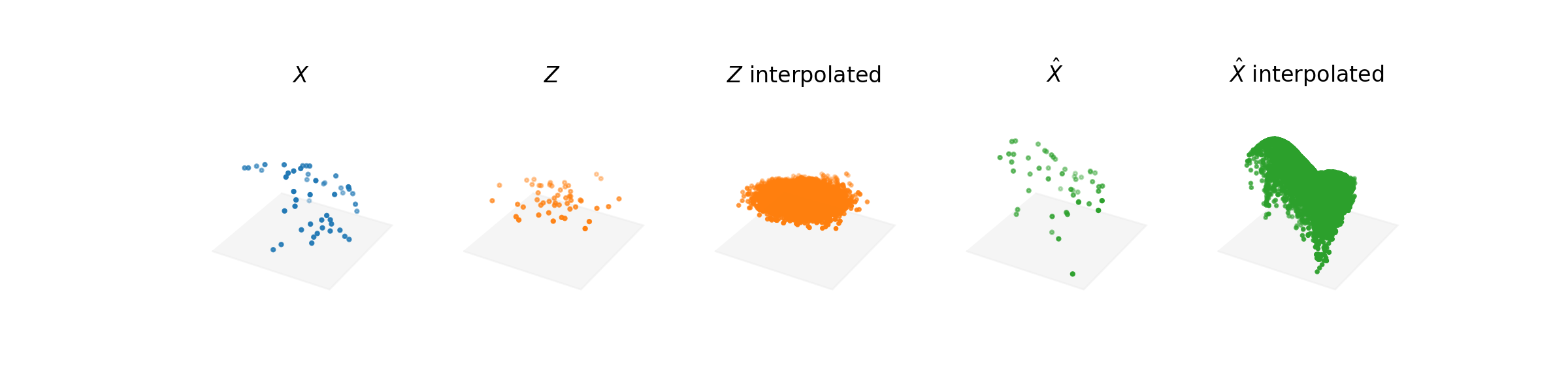} \label{fig:example_gp_D3_d2_vae}}}%
    \qquad
    \subfloat[\centering \(\beta\)-VAE.]{{\includegraphics[width=\textwidth]{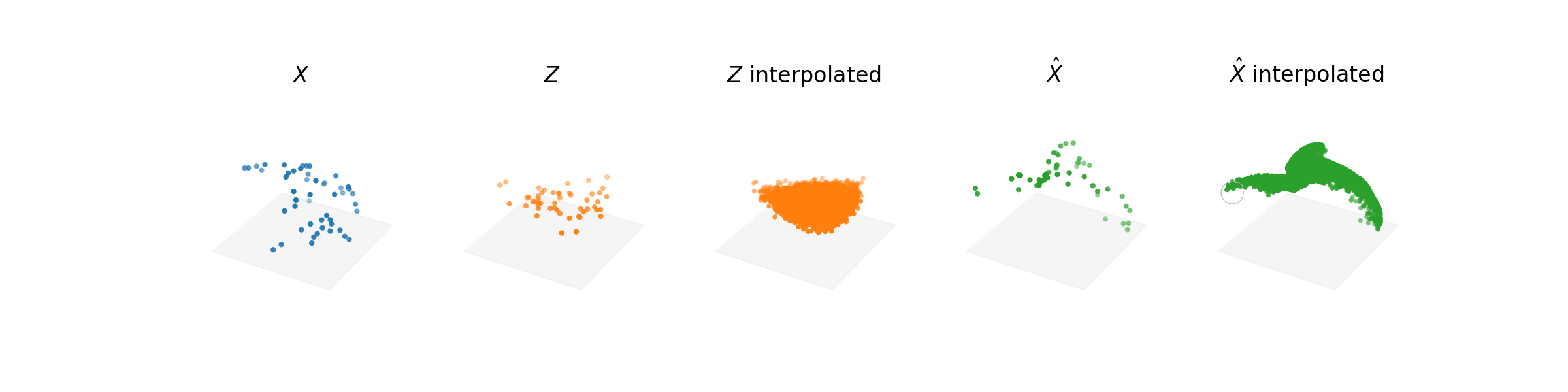} \label{fig:example_gp_D3_d2_bvae}}}%
    \qquad
    \subfloat[\centering FactorVAE.]{{\includegraphics[width=\textwidth]{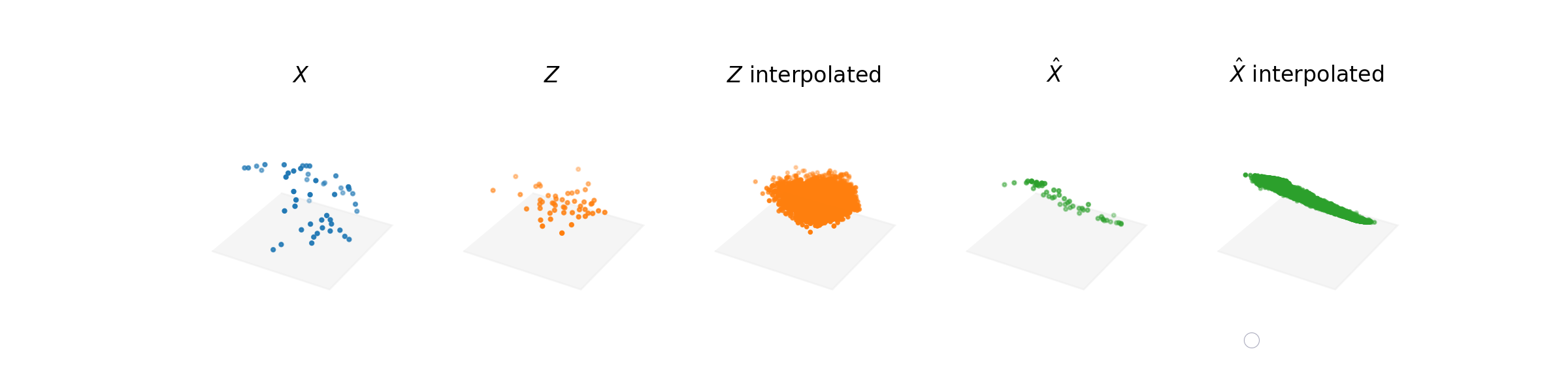} \label{fig:example_gp_D3_d2_fvae}}}%
    \caption{Results of FlatNet contrasted with VAE, \(\beta\)-VAE, and FactorVAE. Data is \(N = 50\) points sampled from a random \(d = 2\)-dimensional Gaussian process manifold in \(\R^{D} = \R^{3}\). Again, FlatNet learns to reconstruct and generalize much better than the VAEs we test againt. Also, the feature space for FlatNet is really a \(2\)-dimensional affine subspace, as desired; this would be clear after rotating the figure. Tools to perform interactive visualization are present in the attached code.}%
    \label{fig:example_gp_D3_d2}%
\end{figure}

\subsection{High-dimensional manifold data}

In this section we test our algorithm on random Gaussian process manifolds \citep{lahiri2016random,lawrence2005probabilistic} of varying intrinsic dimension \(d\) in extrinsic dimension \(D = 100\) Euclidean space. We typically use \(N = 1000\).

There are two quantitative ways we measure the performance of FlatNet in high dimensions: reconstruction quality and ability to estimate the dimension of the manifold. Note that the latter is essentially trivial in the low dimensional regime, yet in the high-dimensional regime it can be thought of as a rough certificate for the accuracy of each local flattening iteration.

In \Cref{fig:gp_manifold_reconstruction}, we empirically evaluate the reconstruction error on our finite samples \(\{x_{1}, \dots, x_{N}\}\) as 
\begin{equation}
    \text{reconstruction error} \approx \frac{1}{N}\sum_{i = 1}^{N}\|x_{i} - \hat{x}_{i}\|_{2}^{2}, \qquad \text{where} \qquad \hat{x}_{i} \coloneqq \re_{\CC}(\fl_{\CC}(x_{i})).
\end{equation}
We estimate the error across \(3\) trials of the experiment, and plot the mean curve and standard deviation, for each \(d \in \{5, 10, 15, 20\}\). We compare against FactorVAE, which has the same configuration as previously discussed except for the latent dimension which is set to \(d\).\footnote{We also compared against the vanilla VAE and \(\beta\)-VAE, but those had nearly identical performance to FactorVAE and so are omitted from the chart for visual clarity.} Overall, the experiment demonstrates again that FlatNet is convincingly better at learning to reconstruct low-dimensional structures than the VAEs we compare against.

\begin{figure}
    \centering
    \includegraphics[width=0.6\textwidth]{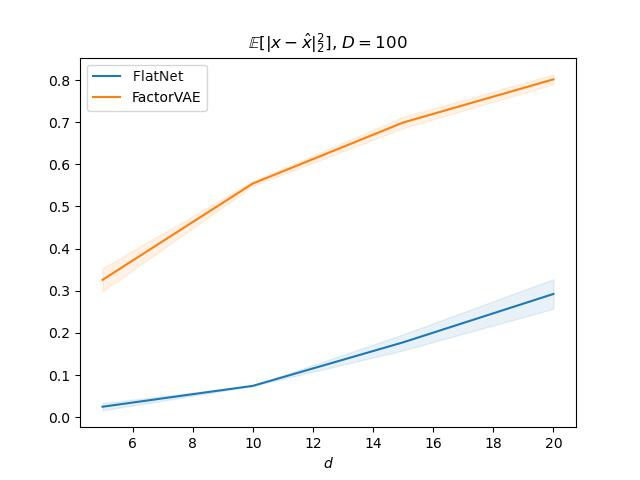}
    \caption{Reconstruction error of FlatNet and FactorVAE. Data is \(N = 1000\) points sampled from a random \(d \in \{5, 10, 15, 20\}\) dimensional Gaussian process manifold in \(\R^{D} = \R^{100}\). Error margins are \(\pm 1\) standard deviation across \(3\) trials.}
    \label{fig:gp_manifold_reconstruction}
\end{figure}

In Figure~\ref{fig:dim_estimation}, we empirically evaluate the dimension estimation capability. Given a trained FlatNet encoder and decoder, we estimate the global intrinsic dimension of \(\M\) using the following procedure. Each iteration \(\ell\) of the training process estimated an intrinsic local dimension \(\hd_{k}\) for some neighborhood of the manifold; to estimate the global dimension we simply compute \(\hd \doteq \operatorname{mode}(\hd_{\ell} \colon \ell \in [L])\), i.e., the most commonly estimated local dimension. We estimate the intrinsic dimension across \(3\) trials of the experiment, and plot the mean curve and standard deviation, for each \(d \in \{5, 10, 15, 20\}\). We compare with the popular intrinsic dimension estimation algorithms MLE \citep{levina2004maximum} and TwoNN \citep{facco2017estimating}. Overall, the experiment demonstrates that FlatNet is competitive with the state of the art at dimension estimation, even when not explicitly designed for this task.

\begin{figure}
    \centering
    \includegraphics[width=0.7\textwidth]{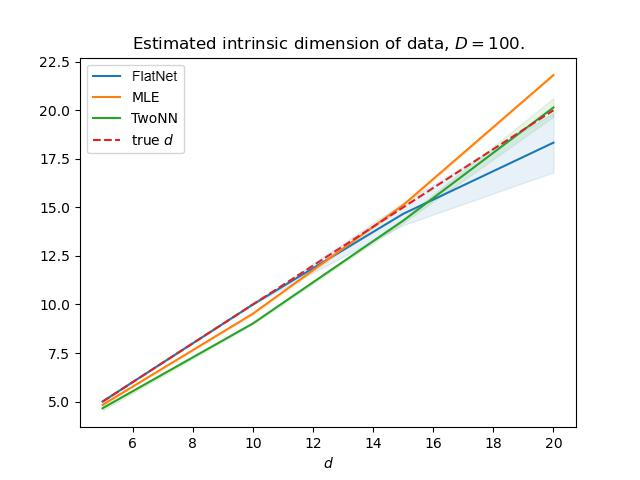}
    \caption{Global intrinsic dimension estimation of FlatNet, MLE, and TwoNN. Data is \(N = 1000\) points sampled from a random \(d \in \{5, 10, 15, 20\}\) dimensional Gaussian process manifold in \(\R^{D} = \R^{100}\). Error margins are \(\pm 1\) standard deviation across \(3\) trials.}
    \label{fig:dim_estimation}
\end{figure}

\subsection{Real-world imagery data}

Real-world imagery data such as MNIST tend to lie on high-curvature or non-differentiable pathological manifolds \citep{wakin2005multiscale}. In order to make such data tractable for the use of FlatNet, we use the (vectorized) Fourier transform of the image, say \(\F\{x_{i}\}\), instead of the image \(x_{i}\) itself, as the input to FlatNet, where it becomes easier to smooth out domain transformations (e.g. convolution with a Gaussian kernel).

We begin with a constructed manifold from MNIST to visualize how well FlatNet recovers intrinsic features from 2D vision data. To this end, we take a single image from MNIST, and both rotate it by various angles and translate it along the $y$-axis by various amounts. The result is a 2-dimensional manifold embedded in pixel space (here, \(D = 32\times 32 = 1024\)). In \Cref{fig:mnist_synthetic}, we present the results of running FlatNet on the constructed manifold, with both the features it recovers and reconstruction accuracy.

\begin{figure}[]%
    \centering
    \subfloat[\centering Original rotated/translated images (top) compared to reconstruction \(\re_{\CC}(\fl_{\CC}(x))\) from FlatNet (bottom)]{{\includegraphics[width=0.5\textwidth]{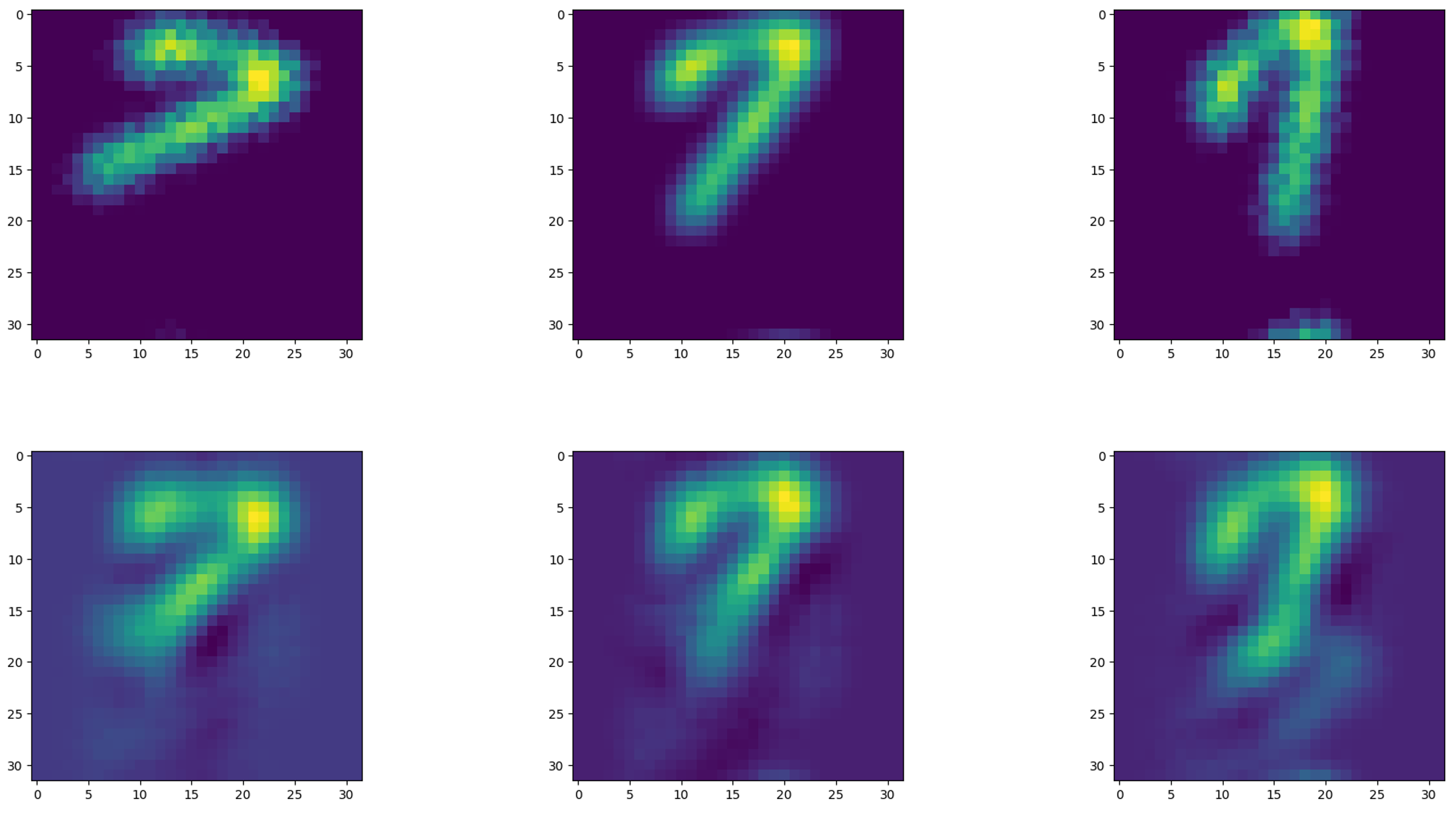} }}%
    \qquad
    \subfloat[\centering Intrinsic coordinates ($y$, $\theta$)]{{\includegraphics[width=0.3\textwidth]{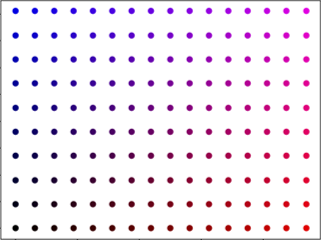} }}%
    \subfloat[\centering Learned representation from FlatNet]{{\includegraphics[width=0.3\textwidth]{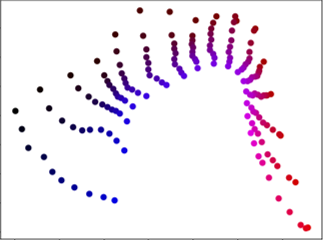} }}%
    \subfloat[\centering Representation from PCA]{{\includegraphics[width=0.3\textwidth]{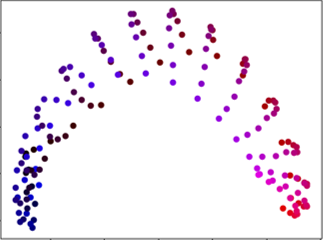} }}%
    \caption{Results of FlatNet on the following manifold: a base point is selected from MNIST (here a ``7''), and various $y$-transations between [-7, 7] pixels and rotations between [-30, 30] degrees. The representation from FlatNet was run until convergence: it determined the dataset was of dimension 2 automatically. Note that while the features look distorted compared to the intrinsic features, FlatNet learns a representation that is relatively smoothly deformable into the canonical coordinates. However, the PCA representation obtained from taking the top two principal components (which requires knowledge of $d=2$ a-priori), yields a ``tangled'' representation: mapping the PCA representation back into the intrinsic coordinates would require a more complex function.}
    \label{fig:mnist_synthetic}%
\end{figure}

We now move to the original MNIST dataset. In Figure~\ref{fig:mnist_recon}, we demonstrate the empirical reconstruction performance of FlatNet on MNIST data. Notice that the reconstruction is not pixel-wise perfectly accurate; the reconstructed samples appear closer to an archetype of the digit than the original image. This example hints that the encoder function \(\fl_{\CC}\) naturally compresses semantically non-meaningful aspects of the input image, which is a useful property for encoders.

\begin{figure}
    \centering
    \includegraphics[width=0.9\textwidth]{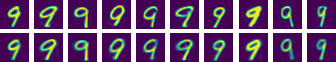} 
    \caption{Original \(x_{i}\) (top) vs. reconstruction \(\hat{x}_{i} \doteq \F^{-1}\{\re_{\CC}(\fl_{\CC}(\F\{x_{i}\}))\}\) (bottom), using FlatNet flattening \(\fl_{\CC}\) and reconstruction \(\re_{\CC}\) on the MNIST dataset.}
    \label{fig:mnist_recon}
\end{figure}

In \Cref{fig:mnist_interpolation}, we demonstrate that the feature space of FlatNet is linear in that it corresponds to an affine subspace. In particular, we demonstrate that linear interpolation in feature space corresponds to semantic interpolation in image space.

\begin{figure}
    \centering
    \includegraphics[width=0.9\textwidth]{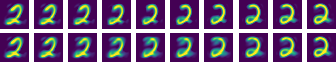}
    \caption{Comparison of interpolation: FlatNet (top), i.e., \(\hat{x} \doteq \F^{-1}\{\re_{\CC}(\theta \fl_{\CC}(\F\{x_{1}\}) + (1 - \theta)\fl_{\CC}(\F\{x_{2}\}))\}\), versus linear interpolation (bottom), i.e., \(\hat{x} \doteq \theta x_{1} + (1 - \theta)x_{2}\).}
    \label{fig:mnist_interpolation}
\end{figure}

In \Cref{fig:mnist_generation}, we demonstrate that we can sample from the linear (structured) feature space to generate semantically meaningful data in image space.

\begin{figure}[H]
    \centering
    \includegraphics[width=0.9\textwidth]{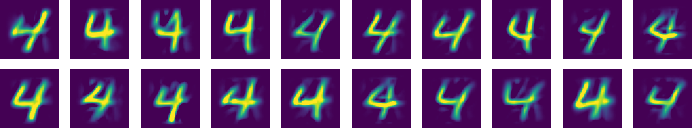}
    \caption{Sampling using FlatNet: \(\hat{x} = \F^{-1}\{\re_{\CC}(z)\}\) where \(z\) is a Gaussian random variable supported on the affine subspace \(\fl_{\CC}(\F\{\M\})\).}
    \label{fig:mnist_generation}
\end{figure}

\subsection{Low-dimensional counterexamples}

We present two low-dimensional counterexamples to show potential limitations of the method, as well as what the algorithm looks like when it hits a failure mode. We present two illustrative results.

\noindent \textbf{Closed circle}. The first is data sampled from a closed curve, a circle, with noise. This represents a manifold that is not even theoretically flattenable. While this represents a theoretical boundary for FlatNet, note that any continuous autoencoding pair, regardless of the model, likewise cannot flatten these manifolds. See \Cref{fig:circle} for a visualization on how FlatNet behaves on such manifolds: it will start to form flat patches until it cusps and is not flattenable or modelable anymore.

\noindent \textbf{Unaugmented Swiss roll}. The second is data sampled from the ``Swiss roll'' dataset. While this manifold is theoretically flattenable, it is quite ``crammed'' in its ambient space: not only is it a hypersurface (\(d = D - 1\)), its self-inward curling makes compression-based flattening methods like FlatNet fold the manifold into itself before. If FlatNet is run on the pure Swiss roll, it runs into the same problems as the closed circle: it quickly cannot make any invertible flattening progress, and thus halts without changing the manifold much at all.

However, if a simple nonlinear feature is appended (\((x_{1}, x_{2}, x_{3}) \mapsto (x_{1}, x_{2}, x_{3}, x_{1}^{2} + x_{3}^{2})\)), then FlatNet is able to both efficiently flatten the manifold and give an approximate reconstruction. While the approximate reconstruction has noticeable differences from the original manifold\footnote{As we are also forcing the reconstruction to fit the artificially constructed feature $x_4 = x+1^2 + x_3^2$, we can expect a decrease in the reconstruction quality as a cost for easier flattening.}, the learned features correspond closely to the intrinsic coordinates that generated the Swiss roll dataset. See \Cref{fig:swiss_roll} for plots of the results.

\begin{figure}[]%
    \centering
    \subfloat[\centering \(X\)]{{\includegraphics[width=0.2\textwidth]{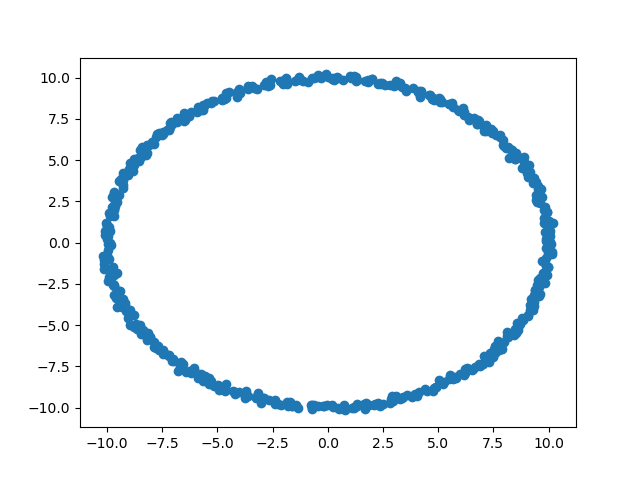} }}%
    \subfloat[\centering \(\fl_{1:20}(X)\)]{{\includegraphics[width=0.2\textwidth]{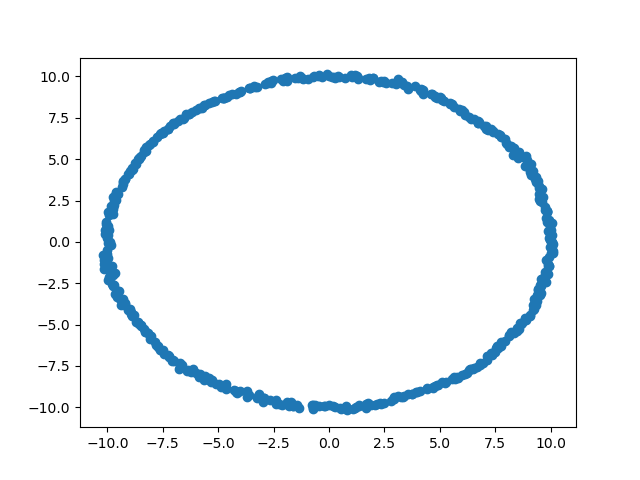} }}%
    \subfloat[\centering \(\fl_{1:40}(X)\)]{{\includegraphics[width=0.2\textwidth]{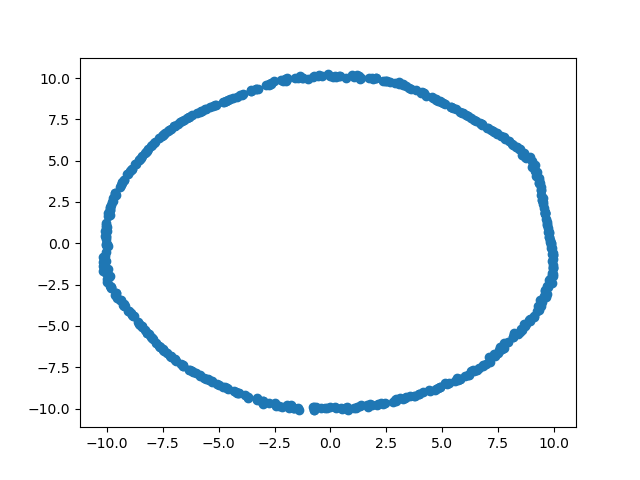} }}%
    \qquad
    \subfloat[\centering \(\fl_{1:60}(X)\)]{{\includegraphics[width=0.2\textwidth]{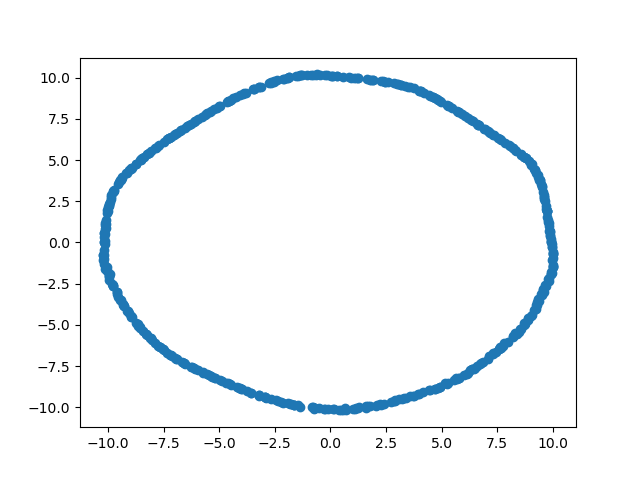} }}%
    \subfloat[\centering \(\fl_{1:80}(X)\)]{{\includegraphics[width=0.2\textwidth]{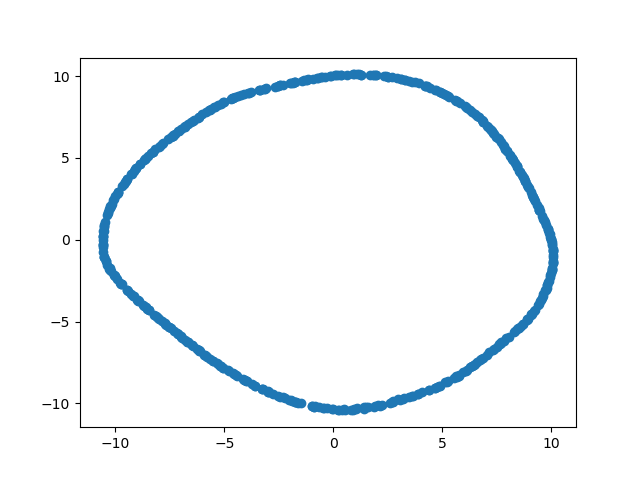} }}%
    \caption{Plots of resulting features \(\fl_{1:\ell}(X)\) through the first \(\ell\) layers of the flattening map \(\fl_{\CC}\), for data sampled from a noisy circle. We observe denoising of the dataset onto a closed curve which oscillates but ultimately does not converge to a linear representation.}
    \label{fig:circle}%
\end{figure}

\begin{figure}[]%
    \centering
    \subfloat[\centering \(X\)]{{\includegraphics[width=0.3\textwidth]{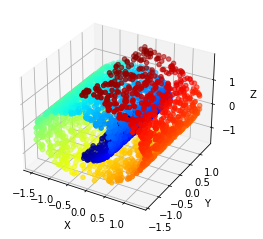} }}%
    \subfloat[\centering \(\fl_{\CC}(X)\) (features)]{{\includegraphics[width=0.3\textwidth]{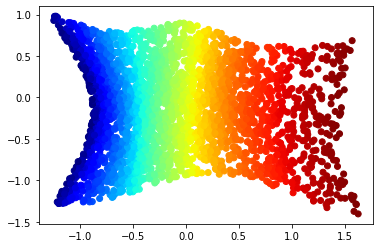} }}%
    \subfloat[\centering \(\re_{\CC}(\fl_{\CC}(X))\) (reconstruction)]{{\includegraphics[width=0.3\textwidth]{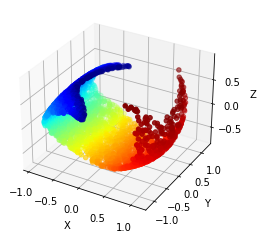} }}%
    \caption{Plots of (a) the Swiss roll dataset (\(N=3000\)), (b) the learned features after a converged FlatNet flattening, and (c) the learned reconstruction from the FlatNet. Note the reconstruction suffers from the same compounding errors problem near the boundary of the manifold, akin to the sine wave example depicted in Figure \ref{fig:sine_example}. However, the representations learned by FlatNet are noticeably close to the original generating coordinates.}
    \label{fig:swiss_roll}%
\end{figure}

\section{Conclusion and future work}

In this work, we propose a computationally tractable algorithm for flattening data manifolds, and in particular generating an autoencoding pair in a forward fashion. Of primary benefit for this methodology is the automation of network design: the network's width and height for example are both chosen automatically to be as minimal as possible. For researchers training a new autoencoder model from scratch, this can pose a large practical benefit.

There is still a lot of room for future work. Our work focuses on the local autoencoding problem for simplicity, but we see the problems of this approach experimentally through the accumulation of the small, local errors through the global map. There are many potential ways one could modify our approach to learn and correct the autoencoder's layers based on the global error. Further, the geometric model for real-world data can be improved and modernized. Outside of the flattenability assumptions (which excludes any closed loops in the manifold, like for example rotation groups for computer vision), it is still a strong assumption that the entire dataset is a single, connected manifold of a single dimension. One common example is a ``multiple manifolds hypothesis'' \citep{brown2022union,vidal2005generalized,yu2020learning}, but there is still more work to do for a geometric model that captures realistic, hierarchical structures. Nonetheless, we feel the presented work builds grounds for geometric-based learning methods in more practical environments which scale with modern data requirements.

\subsection*{Acknowledgements}

We thank Sam Buchanan of TTIC for helpful discussions around existing approaches to manifold flattening and learning.

\newpage
\appendix

\section{Additional experiments}\label{sec:ext_results}

We provide below additional figures to help the reader visualize the performance of FlatNet.

\begin{figure}[H]%
    \centering
    \subfloat[\centering FactorVAE]{{\includegraphics[width=0.35\textwidth]{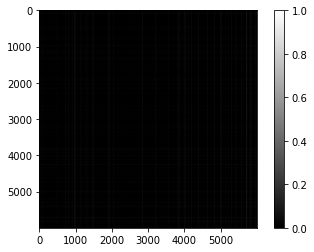} }}%
    \qquad
    \subfloat[\centering FlatNet \textbf{(ours)}]{{\includegraphics[width=0.35\textwidth]{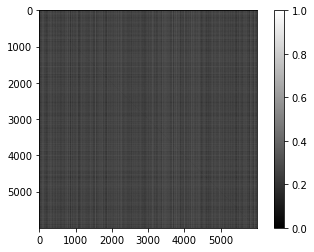} }}%
    \caption{Intrinsic distortion of FactorVAE vs. FlatNet (brighter picture is better), on a random manifold of embedding dimension \(D = 100\), intrinsic dimension \(d = 10\), and training set size \(N = 6000\). We measure the distortion rate by the elementwise ratio of matrices \(\mathrm{EDM}(\fl_{\CC}(X)) / \mathrm{EDM}(C)\), where \(\mathrm{EDM}(\cdot) \in \R^{N \times N}\) is the Euclidean distance matrix of a matrix of samples, \(X \in \R^{D\times N}\) is the training data, and \(C\in \R^{d\times N}\) are the intrinsic coordinates that generated \(X\). Depicted results are scaled by the maximum of the EDM ratio.}
    \label{fig:EDM_pics}
\end{figure}

We also test the intrinsic distortion of FactorVAE of a random manifold over various stopping times (\Cref{fig:EDM_fvae}).
\begin{figure}[]%
    \centering
    \subfloat[\centering 40 Epochs]{{\includegraphics[width=0.3\textwidth]{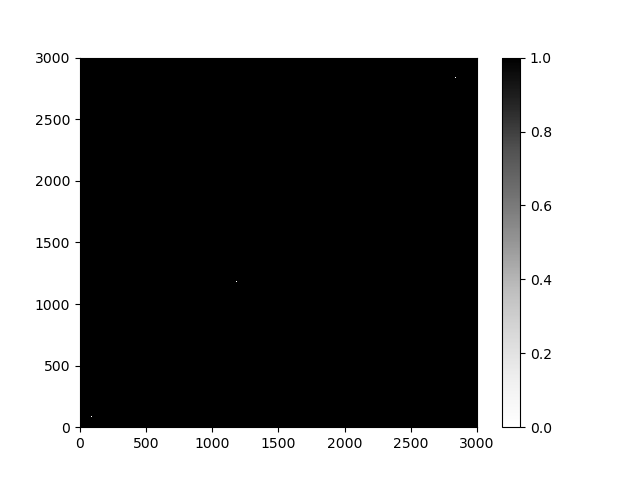} }}%
    \subfloat[\centering 60 Epochs]{{\includegraphics[width=0.3\textwidth]{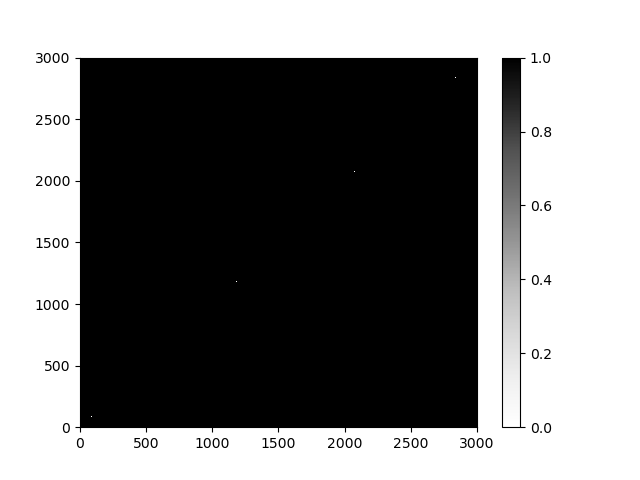} }}%
    \subfloat[\centering 80 Epochs]{{\includegraphics[width=0.3\textwidth]{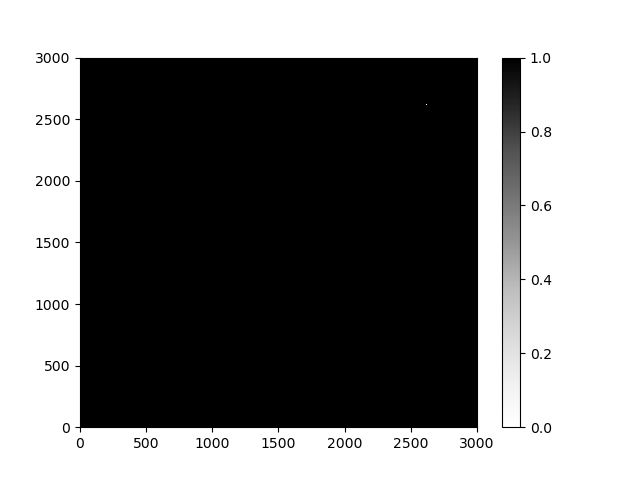} }}%
    \qquad
    \subfloat[\centering 100 Epochs]{{\includegraphics[width=0.3\textwidth]{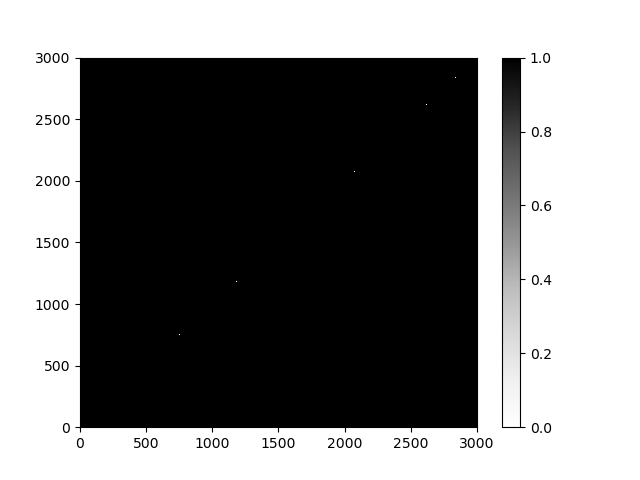} }}%
    \subfloat[\centering 120 Epochs]{{\includegraphics[width=0.3\textwidth]{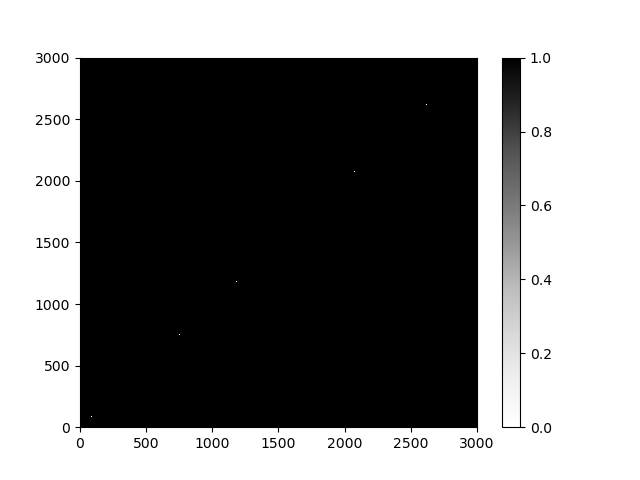} }}%
    \caption{Intrinsic distortion of FactorVAE, on a random manifold of embedding dimension \(D = 100\), intrinsic dimension \(d = 4\), and training set size \(N = 300\). We also observed similar performance with BetaVAE and VanillaVAE.}
    \label{fig:EDM_fvae}%
\end{figure}

\section{Differential geometry overview}\label{sec:diffgeo}

For the sake of being self-contained, we give a brief introduction to some fundamental definitions and constructs in differential geometry needed for this paper. These definitions are not meant to be mathematically complete, but communicate the main ideas used in this paper. Please see the appendix for a more detailed overview, and see \cite{boumal2020introduction,lee2012smoothmanifolds,lee2018introduction} for further introductions to differential geometry and embedded submanifolds.

\subsection{Embedded submanifolds}

While general differential geometry studies general manifolds, our focus for this paper are manifolds that pertain to the \textit{manifold hypothesis}; that is, lower dimensional manifolds explicitly embedded in higher-dimensional Euclidean spaces. The following is a typical definition for such manifolds:

\begin{definition}
    An \textit{embedded submanifold (or a manifold)} \(M\) is a subset of some Euclidean space \(\R^{D}\)  such that for every \(x \in M\), there is some nonempty neighborhood \(N_{\eps}(x) = M \cap B_{\eps}(x)\) that is diffeomorphic with a subset of \(\R^{d}\). The integer \(d\) is called the dimension of the manifold, denoted \(\dim(M)\).
\end{definition}

Intuitively, a an embedded submanifold is a continuous structure that is at every point locally invertible to a \(d\)-dimensional vector representation. For this paper, we will further require manifolds to be \textit{smooth} in order to discuss geodesics and curvature. Please see \cite{lee2012smoothmanifolds} for a comprehensive definition.

\subsection{Tangent space, extrinsic curvature}

Akin to how smooth functions can be locally approximated by a linear function via its derivative, smooth manifolds can be locally approximated by a linear subspace using its \textit{tangent space}; this is a construct commonly used in classical manifold learning. There are many ways to define the tangent space; we provide one here.

\begin{definition}
    Let \(M \subset \R^{D}\) be a manifold of dimension \(d\), and \(x \in M\). The \textit{tangent space} at \(x\), denoted \(\T_{x}M\) is defined as the collection of all velocity vectors of smooth curves \(\gamma\) on \(M\) from \(x\):
    \begin{equation}
        \T_{x}M \doteq \left\{\frac{\mathrm{d}}{\mathrm{d}t} \gamma(0) \middle| \gamma: [0,1] \to M, \gamma(0) = x\right\}.
    \end{equation}
\end{definition}

This is a linear space of dimension \(d\), and we call its orthogonal compliment \(\N_{x}M \doteq \{v \mid \ip{v}{w} = 0\ \mathrm{for all}\ w \in \T_{x}M\}\) the \textit{normal space}. Thus \(\T_{x}M\) encodes a linear approximation to local movement around $x$ on the manifold.

We now introduce curvature. Recall that for smooth functions, if the derivative is constant everywhere, i.e. \(\frac{\mathrm{d}}{\mathrm{d}x} f(x) = c\) for some \(c\), then \(f\) is a globally linear function. We can equivalently conclude this if \(\frac{\mathrm{d}^{2}}{\mathrm{d}x^{2}}f(x) = 0\) everywhere. If the tangent space is analogous to the derivative, then the \textit{extrinsic curvature} is analogous to the second derivative, and heuristically measures how much the tangent space is changing locally. There are again many definitions of extrinsic curvature, and one commonly used definition is through the \textit{second fundamental form}:

\begin{definition}\label{def:sff}
    Let \(M \subset \R^{D}\) be a manifold of dimension \(d\), and \(x \in M\). The second fundamental form at \(x\), denoted \(\sff_{x}\), is the bilinear map \(\sff_{x} \colon \T_{x}M \times \T_{x}M  \to \N_{x}M\) defined as the following:
    \begin{equation}
        \sff_{x}(v, w) \doteq \mathrm{D}_{v} \P_{\T_{x}M}\{w\},
    \end{equation}
    where \(\P_{\T_{x}M}\) is the orthogonal projector from \(\R^{D}\) onto \(\T_{x}M\), and \(\mathrm{D}_{v}\P_{\T_{x}M}\{w\}\) is the differential of the base point \(x\) along \(v\).
\end{definition}

Please see \cite[eq. (5.37)]{boumal2020introduction} for a more complete and rigorous version of the above definition. Intuitively, \(\sff_{x}\) measures how much the tangent space changes locally. Indeed, if \(\sff_{x}\) is identically 0 everywhere on \(M\), then the tangent space \(\T_{x}M\) doesn't change anywhere, and \(M\) is a globally linear subspace of dimension \(d\). This motivates our use of \(\sff\) to characterize the nonlinearity of a manifold \(M\), and motivates our definition of \textit{flatness} and the following immediate corollary:

\begin{definition}
    A manifold \(M \subset \R^{D}\) is said to be \textit{flat} if \(\sff_x(v, w) = 0\) for all \(x \in M\) and all \(v, w \in \T_{x}M\).
\end{definition}

\begin{corollary}
    \textit{If a manifold \(M \subset \R^D\) of dimension \(d\) is flat, then \(M\) is globally contained within an affine subspace of dimension \(d\), i.e., \(M \subset \T_{x}M + x\) for any \(x \in M\).}
\end{corollary}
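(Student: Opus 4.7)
The plan is to reduce the global affine containment to a statement about how the tangent space varies on \(M\). Fix an arbitrary base point \(\xz \in M\) and let \(V \doteq \T_{\xz}M\); I will show that \(y - \xz \in V\) for every \(y \in M\), which is exactly the claim \(M \subset \T_{\xz}M + \xz\). Since the underlying setting assumes \(M\) is connected, given any \(y \in M\) I can pick a smooth curve \(\gamma \colon [0,1] \to M\) with \(\gamma(0) = \xz\), \(\gamma(1) = y\), and analyze the vector-valued function \(\phi(t) \doteq \P_{V^{\perp}}\{\gamma(t) - \xz\}\). Clearly \(\phi(0) = 0\) and \(\phi'(t) = \P_{V^{\perp}}\{\gamma'(t)\}\), so it suffices to show \(\gamma'(t) \in V\) for all \(t\), i.e.\ that \(\T_{\gamma(t)}M = V\) all along the curve.

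The main technical step is therefore to upgrade the pointwise hypothesis \(\sff \equiv 0\) on \(M\) into a statement about the constancy of the full projector \(P(x) \doteq \P_{\T_{x}M}\). First I would unravel the definition \(\sff_{x}(v,w) = \mathrm{D}_{v}\P_{\T_{x}M}\{w\}\) as saying that for every tangent direction \(v \in \T_{x}M\), the differential operator \(\mathrm{D}P(x)(v) \colon \R^{D} \to \R^{D}\) annihilates the tangent subspace \(\T_{x}M\). Next I would differentiate the idempotence relation \(P^{2} = P\) along \(v\) to obtain \((\mathrm{D}P)P + P(\mathrm{D}P) = \mathrm{D}P\), which forces \(\mathrm{D}P(x)(v)\) to send \(\T_{x}M\) into \(\N_{x}M\) and \(\N_{x}M\) into \(\T_{x}M\). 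Finally I would exploit the self-adjointness of the orthogonal projector \(P\) (hence of \(\mathrm{D}P(x)(v)\)) to dualize the vanishing on \(\T_{x}M\): for any tangent \(u\) and any normal \(w\), \(\ip{\mathrm{D}P(x)(v)\,w}{u} = \ip{w}{\mathrm{D}P(x)(v)\,u} = 0\), which together with \(\mathrm{D}P(x)(v)\,w \in \T_{x}M\) gives \(\mathrm{D}P(x)(v)\,w = 0\). Thus \(\mathrm{D}P(x)(v) \equiv 0\) as an operator on all of \(\R^{D}\), for every \(x \in M\) and every \(v \in \T_{x}M\).

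With \(\mathrm{D}P\) vanishing along every tangent direction, the projector \(P(\gamma(t))\) is constant in \(t\) along any smooth curve in \(M\), so \(\T_{\gamma(t)}M = V\) for all \(t \in [0,1]\). This means \(\gamma'(t) \in V\) for all \(t\), hence \(\phi'(t) \equiv 0\), hence \(\phi \equiv 0\), giving \(y - \xz = \gamma(1) - \gamma(0) \in V\) as required. Since \(y\) was arbitrary, \(M \subset V + \xz = \T_{\xz}M + \xz\); and since \(\xz\) was arbitrary the same conclusion holds for any base point. The step I expect to be the main obstacle (and the only genuinely nontrivial piece) is the jump from \(\sff = 0\) to \(\mathrm{D}P = 0\) on all of \(\R^{D}\): the naive reading of \(\sff = 0\) only constrains \(\mathrm{D}P\) on the tangent subspace, and it takes the combination of idempotence and self-adjointness to eliminate the remaining normal-to-tangent component.
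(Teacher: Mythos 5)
Your proof is correct, and it takes a genuinely different route from the paper's. The paper treats this corollary as ``immediate'' and gives no direct argument; the nearest thing in the paper is the proof of part~2 of \Cref{thm:conv_flat} in \Cref{sec:proofs}, which establishes the same affine-containment fact by importing an external result (Lee, Prop.~8.12): a flat embedded submanifold has geodesics that are straight lines in the ambient space. With that lemma in hand, the paper uses compactness and connectedness to produce a straight-line geodesic between any two points, and straight lines obviously stay inside \(\T_{x}M + x\). Your argument instead works from first principles with the paper's own definition \(\sff_x(v,w) = \mathrm{D}_v \P_{\T_x M}\{w\}\): you differentiate the idempotence relation \(P^2 = P\) and combine it with the self-adjointness of the orthogonal projector to upgrade the literal content of \(\sff = 0\) --- that \(\mathrm{D}_v P\) annihilates \(\T_x M\) --- to the stronger statement that \(\mathrm{D}_v P\) vanishes on all of \(\R^D\), and then you integrate along a connecting curve to conclude \(\T_{\gamma(t)}M\) is constant. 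This buys you a self-contained argument with no appeal to geodesic theory, and notably it needs only connectedness and smoothness, whereas the paper's route additionally leans on compactness (via geodesic completeness) to guarantee the existence of geodesics between arbitrary point pairs. The trade-off is that the paper's argument is shorter once the geodesic lemma is taken for granted. You are also right that connectedness is essential and must be invoked explicitly --- two parallel lines in \(\R^2\) form a flat one-dimensional manifold not contained in any single affine line --- a hypothesis the corollary's statement leaves implicit, deferring to the blanket assumptions on the data manifold made earlier in the paper.
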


Finally, we introduce geodesics. One primary challenge of nonlinear datasets is the inability to interpolate via convex interpolation; indeed, for manifolds, the linear chord \(\gamma(t) \doteq (1-t)x + ty\) for \(x, y \in M\) rarely stays in the manifold for all \(t \in [0,1]\). While there are typically many interpolations \(\gamma(t)\) between \(x\) and \(y\) that stay within \(M\), there is at most one that minimizes the path length: this is called the \textit{geodesic} between \(x\) and \(y\).

\begin{definition}
    Let \(M \subset \R^{D}\) be a manifold of dimension \(d\). For any two \(x, y \in M\), if there exists a smooth curve \(\gamma: [0,1] \to \M\) such that \(\gamma(0) = x\) and \(\gamma(1) = y\), the \textit{geodesic} between \(x\) and \(y\) is the interpolating curve of minimal arc length:
    \begin{equation}
        \gamma^{\star} \doteq \argmin_{\substack{\gamma \colon [0, 1] \to M \\ \gamma(0) = x, \gamma(1) = y}} \int_{0}^{1} \norm{\frac{\mathrm{d}}{\mathrm{d}t} \gamma(t)}_{2}\ \mathrm{d}t.
    \end{equation}
\end{definition}

For this paper, we assume for all manifolds \(M\) that all pairs of points have geodesics between them; this is akin to assuming the manifold is connected.
\section{Geometric flows for manifold flattening}

We now draw a relation between \Cref{alg:ccnet_construction} and a more traditional methods of manifold flattening: \textit{geometric flows}. A common practice in differential geometry for manifold manipulation is to define a differential equation and study the evolution of \(\M\) through the corresponding dynamics. Such equations are often called \textit{geometric evolution equations}, or \textit{geometric flows}, and are typically designed to evolve complicated manifolds into simpler, more uniform ones. They are responsible for some powerful geometric theorems, such as the uniformization theorem, and are a heavily studied area in differential geometry. 

Arguably the most well-known of the geometric flows is the \textit{Ricci flow} \citep{hamilton1982three}. While commonly used in theoretical work, the Ricci flow is ill-suited for embedded submanifold flattening, as it is an open problem as to whether a realization of the Ricci flow even exists in the embedding space \citep{coll2020ricci}. There are also geometric flows that minimize extrinsic curvature (as opposed to intrinsic curvature), such as the \textit{curve-shortening flow} \citep{abresch1986normalized} and \textit{mean curvature flow} \citep{huisken1984flow}. While there is rich theory behind both of these flows, each requires restricted settings which are non-ideal for data manifolds, with the former being defined only on curves (\(d = 1\)) and the latter on hypersurfaces (\(d = D - 1\)).

In this section, we introduce a new geometric flow that is well-defined on general embedded submanifolds of a Euclidean space, thus extending to more realistic settings for data manifolds. Our flow will flatten the input manifold \(\M\). The encoding map \(\fl_{\mathrm{CC}} \colon \R^{D} \to \R^{D}\) we seek to learn will then act on a point \(\xz \in \R^{D}\) by approximating the flow starting at \(\xz\).

\subsection{Convexification flow}\label{sec:proposed_flow}

Recall from \Cref{thm:conv_flat} that flatness and convexity are intimately connected. Thus, our flow focuses on minimizing the volume of the difference between convex hull of the manifold and the manifold itself. Accordingly, we call it the \textit{convexification flow}.

We now discuss the mechanics of how the flow should behave. In order to compress \(\M\) until it becomes convex, it should point the velocity vector of all points \(\xz \in \M\) towards the boundary \(\partial \conv(\M)\). However, \(\conv(\M)\) is challenging to compute in high dimensions \cite[Chp. 3]{erickson1996lower,gale1963neighborly}, so the flow should only use local information that computes a direction from \(\xz\) towards \(\partial \conv(\M)\).

Fix a smooth integrable function \(\pou \colon \R^{D} \to [0, 1]\). Recall that the local average over \(\M\), defined as
\begin{equation}
    A_{\M}(\xz) \doteq \frac{\int_{\M}x\pou(x)\ \mathrm{d}x}{\int_{\M}\pou(x)\ \mathrm{d}x}
\end{equation}
is not necessarily contained in \(\M\), but when there is some nonzero curvature (i.e., the manifold is not locally flat) it is contained in the set difference \(\conv(\M) \setminus \M\). Thus, it seems that we can use local averages to direct the flow velocity towards the boundary of the convex hull. 

More precisely, we define the following flow, which (as previously stated) we call the \textit{convexification flow}.
\begin{definition}[Convexification Flow]
    Let \(\xz \in \M\). The trajectory under the convexification flow \(x \colon [0, \infty) \to \R^{D}\) of \(\xz\) is given by the initial value problem (IVP):
    \begin{align}\label{eq:unnormalized}
        \frac{\mathrm{d}}{\mathrm{d}t} x(t) 
        &= \bar{x}(t) - x(t),  \\
        x(0) 
        &= x_{0}. \label{eq:unnormalized_ivp}
    \end{align}
    Here \(\bar{x}(t)\) is a short-hand for the local average of \(x(t)\) at time \(t\):
    \begin{equation}
        \bar{x}(t) \doteq A_{\M(t)}(x(t)),
    \end{equation}
    where \(\M(t)\) is the transformation of the original manifold \(\M\) under the flow until time \(t\):
    \begin{equation}
        \M(t) \doteq \{x(t) \mid \text{eq. \eqref{eq:unnormalized} holds for \(x\) on \([0, t)\),}\ x(0) \in \M\}.
    \end{equation}
\end{definition}
Fix \(t \geq 0\). Suppose that \(\M(t)\) is convex. Then if \(x(t)\) is in the interior \(\M(t)^{\circ}\), we have \(x(t) = \bar{x}(t)\). This suggests that the above flow is stationary, or halts, whenever \(\M(t)\) is convex. However, even if \(\M(t)\) is convex, the equality \(\bar{x}(t) = x(t)\) does not necessarily hold on the boundary \(\partial \M(t)\). This phenomenon is important; as is the case with many unnormalized geometric flows, this ODE will evolve \(\M(t)\) into a singularity as \(t \to \infty\). Thus, we need to normalize this flow.

\subsection{Normalized convexification flow}\label{sec:normalized}

Analyzing the cause of singularity in \Cref{eq:unnormalized}, we see that fatal collapse only starts when \(\frac{\mathrm{d}}{\mathrm{d}t} x(t)\) points towards \(\M(t)\) itself. This can be alleviated by restricting the velocity vector \(\bar{x}(t) - x(t)\) to belong to the normal space \(\N_{x(t)}\M(t)\) at \(x(t)\). Thus, we define the following flow, which we call the \textit{normalized convexification flow}. 

\begin{definition}[Normalized Convexification Flow]
    Let \(\xz \in \M\). The trajectory under the normalized convexification flow \(x \colon [0, \infty) \to \R^{D}\) of \(\xz\) is given by the following IVP:
    \begin{align}\label{eq:normalized}
        \frac{\mathrm{d}}{\mathrm{d}t} x(t) 
        &= \P_{\N_{x(t)}\M(t)}\{\bar{x}(t) - x(t)\},  \\
        x(0) 
        &= \xz. \label{eq:normalized_ivp}
    \end{align}
    Here \(\P_{\N_{x(t)}\M(t)}\) is the orthogonal projection operator onto the subspace \(\N_{x(t)}\M(t)\), and \(\bar{x}(t)\) and \(\M(t)\) are defined analogously to before:
    \begin{align}
        \bar{x}(t) 
        &\coloneqq A_{\M(t)}(x(t)), \\
        \M(t) 
        &\coloneqq \{x(t) \mid \text{eq. \eqref{eq:normalized} holds for \(x\) on \([0, t)\),}\ x(0) \in \M\}.
    \end{align}
\end{definition}

This flow does not completely avoid singularities; we resolve this issue shortly when we discretize the flow, and in the next section when we make some algorithmic tweaks.

\subsection{Discretizing the normalized convexification flow}\label{sub:discretization}

We now show how to recover a simplified version of \Cref{alg:ccnet_construction} by discretizing the convexification flow. Fix \(h > 0\) to be our discretization interval, and let \(k \geq 0\) be a non-negative integer. From here on, we will write \(x[k]\) to denote \(x(kh)\), and similarly for \(\bar{x}\) and \(\M\).

Let \(x \colon [0, \infty) \to \R^{D}\) evolve according to the normalized convexification flow in \Cref{eq:normalized,eq:normalized_ivp}. Our discretization will write \(x[k + 1] = x((k + 1)h)\) (approximately) in terms of \(x[k] = x(kh)\). We use a first-order Taylor approximation around \(x[k + 1]\):
\begin{align}
    x[k + 1]
    &= x((k + 1)h) \\
    &\approx x(kh) + h\left[\frac{\mathrm{d}}{\mathrm{d}t} x(t)\right]_{t = kh} \\
    &= x(kh) + h\P_{\N_{x(kh)}\M(kh)}\{\bar{x}(kh) - x(kh)\} \\
    &= x[k] + h\P_{\N_{x[k]}\M[k]}\{\bar{x}[k] - x[k]\} \\
    &= x[k] + h(\id_{\R^{D}} - \P_{\T_{x[k]}\M[k]})\{\bar{x}[k] - x[k]\} \\
    &= x[k] + h\bar{x}[k] - hx[k] - h\P_{\T_{x[k]}\M[k]}\bar{x}[k] + h\P_{\T_{x[k]}\M[k]}x[k] \\
    &= (1 - h)x[k] + h(\bar{x}[k] + \P_{\T_{x[k]}\M[k]}\{x[k] - \bar{x}[k]\}), \\
    &= (1 - h)x[k] + h\P_{\T_{x[k]}\M[k] + \bar{x}[k]}\{x[k]\} \label{eq:discrete_flow_step}
\end{align}
with exact equality achieved as \(h \to 0\). If we extend this step in \Cref{eq:discrete_flow_step} to a neighborhood around \(x[k]\), fixing the projector \(\P_{\T_{x[k]}\M[k] + \bar{x}[k]}\) with respect to the central point \(x[k]\), then \Cref{eq:discrete_flow_step} matches the forward map defined in \Cref{eq:local_flattening,eq:part_unity}; in particular, the quantity \(h\) in \Cref{eq:discrete_flow_step} corresponds to the partition of unity \(\pou(x)\) in \Cref{eq:part_unity}.
\section{Proofs}\label{sec:proofs}

We provide below important proofs for theory included in the main body.

\vspace{6mm}

\begin{proof}\textbf{of Theorem \ref{thm:conv_flat}}

    As we are primarily studying the set $\fl(\M)$, we denote this manifold $\mathcal{Z}$ for brevity.
    \begin{enumerate}
        \item This comes as an immediate corollary to the following geometric lemma \cite[Proposition 8.12]{lee2018introduction}:

        \begin{lemma}
            An embedded submanifold $\M \subset \R^D$ is flat if and only if the geodesics of $\M$ are geodesics in the embedding space $\R^D$.
        \end{lemma}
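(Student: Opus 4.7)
The plan is to reduce both directions to the Gauss formula, which splits the ambient second derivative of a curve in $\M$ into its intrinsic acceleration and a second-fundamental-form term, and to derive this formula directly from \Cref{def:sff}. Concretely, for any smooth curve $\gamma \colon I \to \M$ viewed as a curve in $\R^{D}$, I would begin from the identity $\dot{\gamma}(t) = \P_{\T_{\gamma(t)}\M}\{\dot{\gamma}(t)\}$ (which holds because velocities of curves in $\M$ are tangent vectors), differentiate in $t$ using the product rule, and apply $\sff_{\gamma(t)}(\dot\gamma(t),\dot\gamma(t)) = \mathrm{D}_{\dot\gamma(t)}\P_{\T\M}\{\dot\gamma(t)\}$ from \Cref{def:sff}. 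This yields the decomposition
\[
\ddot{\gamma}(t) \;=\; \sff_{\gamma(t)}(\dot{\gamma}(t),\dot{\gamma}(t)) \;+\; \P_{\T_{\gamma(t)}\M}\{\ddot{\gamma}(t)\},
\]
where the first summand lies in $\N_{\gamma(t)}\M$ (as a short calculation from $\P^{2}=\P$ shows) and the second in $\T_{\gamma(t)}\M$. Establishing this identification is the one substantive piece of the argument; everything else is formal.

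For the forward direction, assume $\M$ is flat, so $\sff \equiv 0$. Let $\gamma$ be a geodesic of $\M$; by definition its intrinsic acceleration $\P_{\T_{\gamma(t)}\M}\{\ddot\gamma(t)\}$ vanishes, and the normal component $\sff_{\gamma(t)}(\dot\gamma(t),\dot\gamma(t))$ also vanishes by flatness. Hence $\ddot\gamma(t) \equiv 0$, which means $\gamma$ traces an affinely parametrized straight line in $\R^{D}$, a geodesic of the ambient space.

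For the converse, assume that every geodesic of $\M$ is also a geodesic of $\R^{D}$. Fix any $x \in \M$ and any $v \in \T_{x}\M$. By the standard existence theorem for the geodesic ODE on the smooth manifold $\M$, there is a geodesic $\gamma \colon (-\eps,\eps) \to \M$ with $\gamma(0) = x$ and $\dot\gamma(0) = v$. By hypothesis it is also a geodesic of $\R^{D}$, so $\ddot\gamma(0) = 0$. Projecting the Gauss identity onto $\N_{x}\M$ at $t = 0$ yields $\sff_{x}(v,v) = 0$. Since $v$ was arbitrary and $\sff_{x}$ is symmetric bilinear, polarization gives $\sff_{x}(u,w) = \tfrac{1}{2}[\sff_{x}(u{+}w,u{+}w) - \sff_{x}(u,u) - \sff_{x}(w,w)] = 0$ for all $u,w \in \T_{x}\M$, and arbitrariness of $x$ completes the proof.

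The main obstacle is pinning down the Gauss formula against the paper's projector-differentiation definition of $\sff$ in \Cref{def:sff}, rather than invoking it as a black box from Riemannian geometry: one needs to verify both that $\mathrm{D}_{v}\P_{\T_{x}\M}\{w\}$ lands in $\N_{x}\M$ and that it reproduces the normal component of $\ddot\gamma$ when $w = v = \dot\gamma(0)$. Once this translation is made, both implications are routine unpackings of definitions.
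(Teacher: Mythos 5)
Your proof is correct. Note that the paper does not actually prove this lemma: it is invoked as a black box, cited as Proposition~8.12 of \citet{lee2018introduction}, inside the proof of \Cref{thm:conv_flat}. What you have written is essentially the standard textbook argument (and, in substance, the one in the cited reference): derive the Gauss splitting $\ddot{\gamma} = \P_{\T_{\gamma}\M}\{\ddot{\gamma}\} + \sff_{\gamma}(\dot{\gamma},\dot{\gamma})$ by differentiating $\dot{\gamma} = \P_{\T_{\gamma}\M}\{\dot{\gamma}\}$, check via $\P^{2} = \P$ that the $\sff$ term is normal, and then read off both implications, with polarization and symmetry of $\sff$ handling the converse. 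Your extra care in tying the formula to the paper's projector-differentiation definition of $\sff$ in \Cref{def:sff} is exactly the right thing to do, since the paper never verifies that its definition agrees with the classical one. Two small points worth making explicit if you write this up: the paper defines geodesics variationally as arc-length minimizers between point pairs, so you need the standard equivalence between minimizers (in constant-speed parametrization) and solutions of the geodesic equation $\P_{\T_{\gamma}\M}\{\ddot{\gamma}\} = 0$, both to justify "intrinsic acceleration vanishes" in the forward direction and to obtain a geodesic with prescribed initial velocity $v$ in the converse; and the symmetry of $\sff$ used in the polarization step, while standard, is not stated in the paper and deserves a one-line justification or citation.
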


        As the geodesics of Euclidean space are straight lines, it follows that the geodesics of $\mathcal{Z}$ are likewise straight lines. Since $h$ is smooth and $\M$ is compact and connected, it follows that $\mathcal{Z}$ is compact and connected, and each pair of points $z_1, z_2$ have a geodesic between them. Thus, all point pairs $z_1, z_2 \in \mathcal{Z}$ are connected by straight lines, making the set $\mathcal{Z}$ convex.

        \item This claim is proven in two parts: (a) $\mathcal{Z} \subset z + \T_z \mathcal{Z}$ for any $z \in \mathcal{Z}$, and (b) $z_1 + \T_{z_1} \mathcal{Z} = z_2 + \T_{z_2} \mathcal{Z}$ for all $z_1, z_2 \in \mathcal{Z}$.
        
        Fix an arbitrary $z \in \mathcal{Z}$. As established in part 1, all pairs of points in $\mathcal{Z}$ are connected by a geodesic, and all geodesics of $\mathcal{Z}$ are straight lines. Thus, all points $z' \in \mathcal{Z}$ can be represented as $z' = z + tv$ for some $t \in \R$ and $v \in \T_z \mathcal{Z}$, and it follows that $\mathcal{Z} \subset z + \T_z \mathcal{Z}$.

        For the second claim, note that $\T_{z_1} \mathcal{Z} = \T_{z_2} \mathcal{Z}$ for all pairs of points $z_1, z_2 \in \mathcal{Z}$, as the second fundamental form is identically zero everywhere. Denoting the shared tangent space $\T_z \mathcal{Z}$, it then suffices to show that $z_1 - z_2 \in \T_z \mathcal{Z}$ for all $z_1, z_2 \in \mathcal{Z}$. This is indeed the case, as the geodesics of $\mathcal{Z}$ are of the form $\gamma(t) = z_1 + t(z_2 - z_1)$, and $\gamma'(0) = z_2 - z_1$.

        \item The autoencoding property $g(f(x)) = x$ for all $x \in \M$ holds trivially from construction, so what is left is to show is that there is no pair of functions $f : \R^D \to \R^p$ and $g : \R^p \to \R^D$ where $g(f(x)) = x$ for all $x \in \M$, and $p < d$.

        Denote $\mathrm{D}f(x)[v] := \lim_{t\to 0} \frac{f(x+ tv) - f(x)}{t}$, the differential of $f$ at $x$ along $v$. This notation is used to emphasize that $\mathrm{D}f(x)$ is a linear map. The following chain rule holds \citep{boumal2020introduction}:
        \begin{equation}
            \mathrm{D} (g \circ f) (x)[v] = \mathrm{D} g(f(x))[\mathrm{D} f(x)[v]].
        \end{equation}
        Using the autoencoding equality over $\M$, and the fact that $\mathrm{D} (\mathrm{id})[v] = v$, we get the following equality for any fixed $x \in \M$:
         \begin{equation}
             \mathrm{D} g(f(x))[\mathrm{D} f(x)[v]] = v,
         \end{equation}
        for any $v \in \T_x \M$. Since $\T_x \ M$ is a linear space of dimension $d$, it follows that the composite linear map $\mathrm{D} g(f(x))[\mathrm{D} f(x)[v]]$ must have rank of at least $d$. Since $\mathrm{D} f(x)[v]$ is a linear map from $\R^D$ to $\R^p$, this implies that $p \ge d$.
    \end{enumerate}
\end{proof}

\begin{proof}\textbf{of Lemma \ref{lem:local_inv}}

    Define \(\beta_{z}(x) \coloneqq (1 - e^{-\lambda z}e^{-\lambda x})^{2}x\), $\P_{U + \xc}\{z\} \coloneqq UU^\top(z - \xc) + \xc$, and $(I - \P_{U + \xc})\{z\} \coloneqq z - \P_{U + \xc}\{z\}$. Since $\beta_z(x)$ is a strictly monotonically increasing function for $x, z \ge 0$ (product of strictly monotonically increasing functions on positive input), $\beta_z(x)$ is an invertible scalar function for all $z \ge 0$. Denote $\eta_1 \coloneqq \|\P_{U + \xc}\{z\}\|_2^2 = \|\P_{U + \xc}\{x\}\|_2^2$, and $\eta_2 \coloneqq \|(I - \P_{U + \xc})\{z\}\|_2^2 = (1-\pou(x))^2 \|(I - \P_{U + \xc})\{x\}\|_2^2$. Further notate the original norms $\nu_1 \coloneqq \|\P_{U + \xc}\{x\}\|_2^2$ and $\nu_2 \coloneqq \|(I - \P_{U + \xc})\{x\}\|_2^2$. Note that under this notation, $\pou(x) = e^{-\lambda(\nu_1 + \nu_2)}$. Since $\nu_1 = \eta_1$, we simply need to find an equation for $\nu_2$ from $\eta_1, \eta_2$.
    \begin{align}   
        \eta_2 &= (1-\phi(x))^2 \|(I-\P_{U, x_c})x\|_2^2,\\
        &= (1-e^{-\lambda(\nu_1 + \nu_2)})^2 \nu_2,\\
        &= (1-e^{-\lambda\eta_1}e^{-\lambda\nu_2})^2 \nu_2,\\
        &= \beta_{\eta_1}(\nu_2).
    \end{align}
    Finally, we get the following equation for our partition of unity as a function of the output features $z$:
    \begin{equation} 
        \pou(x) = \pouinv(z) \coloneqq \alpha e^{-\gamma\left(\|\P_{U, x_c} z\|_2^2 + \beta_{\|\P_{U, x_c} z\|_2^2}^{-1}(\|(I - \P_{U, x_c}) z\|_2^2)\right)}.
    \end{equation}
    Computing the above function amounts to inverting a scalar function, which reduces to scalar root-finding.
\end{proof}

\begin{proof}\textbf{of Proposition \ref{prop:V_normal_space}}

    \begin{enumerate}
        \item For convenience, let \(\alpha_{ij} = \ip{\tu_{j}}{\tU\tU^{\top}(x_{i} - \xz)} = \ip{\tu_{j}}{x_{i} - \xz}\). We write
        \begin{align}
            &\mathcal{L}_{\pou}(\tU, \tV) \\
            &= \frac{1}{N}\sum_{i = 1}^{N}\pou(x_{i})^{2}\norm{\tV(\tU\tU^{\top}(x_{i} - \xz), \tU\tU^{\top}(x_{i} - \xz)) - (I - \tU\tU^{\top})(x_{i} - \xz)}_{2}^{2}, \\
            &= \frac{1}{N}\sum_{i = 1}^{N}\pou(x_{i})^{2}\norm{\sum_{j = 1}^{d}\sum_{k = 1}^{d}\alpha_{ij}\alpha_{ik}\tv_{jk} - (I - \tU\tU^{\top})(x_{i} - \xz)}_{2}^{2}, \\
            &= \frac{1}{N}\sum_{i = 1}^{N}\pou(x_{i})^{2}\norm{\tU\tU^{\top}\left(\sum_{j = 1}^{d}\sum_{k = 1}^{d}\alpha_{ij}\alpha_{ik}\tv_{jk}\right) - (I - \tU\tU^{\top})\left(x_{i} - \xz - \sum_{j = 1}^{d}\sum_{k = 1}^{d}\alpha_{ij}\alpha_{ik}\tv_{jk}\right)}_{2}^{2}, \\
            &= \frac{1}{N}\sum_{i = 1}^{N}\pou(x_{i})^{2}\norm{\tU\tU^{\top}\left(\sum_{j = 1}^{d}\sum_{k = 1}^{d}\alpha_{ij}\alpha_{ik}\tv_{jk}\right)}_{2}^{2} \\
            &\quad + \frac{1}{N}\sum_{i = 1}^{N}\pou(x_{i})^{2}\norm{(I - \tU\tU^{\top})\left(x_{i} - \xz - \sum_{j = 1}^{d}\sum_{k = 1}^{d}\alpha_{ij}\alpha_{ik}\tv_{jk}\right)}_{2}^{2} \nonumber \\
            &\geq \frac{1}{N}\sum_{i = 1}^{N}\pou(x_{i})^{2}\norm{(I - \tU\tU^{\top})\left(x_{i} - \xz - \sum_{j = 1}^{d}\sum_{k = 1}^{d}\alpha_{ij}\alpha_{ik}\tv_{jk}\right)}_{2}^{2}
        \end{align}
        with equality if and only if \(\tU\tU^{\top}\left(\sum_{j = 1}^{d}\sum_{k = 1}^{d}\alpha_{ij}\alpha_{ik}\tv_{jk}\right) = 0\) for all \(i \in \{1, \ldots, N\}\), which since $\tU$ are full rank is true if and only if $\tU^{\top}\left(\sum_{j = 1}^{d}\sum_{k = 1}^{d}\alpha_{ij}\alpha_{ik}\tv_{jk}\right) = 0$.

        \item This amounts to noticing the cost function $\mathcal{L}_{\psi}(\tU, \tV)$ can be written as a least squares problem, and accounting for trivial redundancy. We rewrite the formula of $\mathcal{L}_{\psi}(\tU, \tV)$ here for convenience:
        \begin{align}
            &\mathcal{L}_{\pou}(\tU, \tV) \\
            &=\frac{1}{N}\sum_{i = 1}^{N}\pou(x_{i})^{2}\norm{\tV(\tU\tU^{\top}(x_{i} - \xz), \tU\tU^{\top}(x_{i} - \xz)) - (I - \tU\tU^{\top})(x_{i} - \xz)}_{2}^{2},\\
            &= \frac{1}{N}\sum_{i = 1}^{N}\pou(x_{i})^{2}\norm{\sum_{j=1}^d\sum_{k=1}^d \tv_{jk}\ip{\tu_j}{x_i - \xz}\ip{\tu_k}{x_i - \xz} - (I - \tU\tU^{\top})(x_{i} - \xz)}_{2}^{2}.
        \end{align}
        Define $B$ again as in the proposition statement: $B_{ijk} = \ip{\tu_j}{x_i - \xz}\ip{\tu_k}{x_i - \xz}$. We can flatten out the last two indices to get a matrix $A\in \R^{N\times d^2}$, such that $A_{i, j + d\cdot k} = B_{ijk}$. If we further stack the vectors $\tv_{jk}$ into a matrix $M_{\tv} \in \R^{d^2\times D}$, where the $(j + d\cdot k)$\textsuperscript{th} row is $\tv_{jk}$, and a matrix $C \in \R^{N\times D}$ such that the $i$\textsuperscript{th} row of $C$ is $(I - \tU\tU^{\top})(x_{i} - \xz)$. Then we can write the following:
        \begin{equation}
            \mathcal{L}_{\pou}(\tU, \tV) = \|D_{\pou(x_i)}AM_{\tv} - D_{\pou(x_i)}C\|_F^2,
        \end{equation}
        where $D_{\pou(x_i)} \in \R^{N\times N}$ is the diagonal matrix such that $D_{\pou(x_i)}{}_{ii} = \pou(x_i)$.
        
        An important problem with the above formulation is that $A$ will never be full rank, since $B_{ijk} = B_{ikj}$; this will lead to trivial duplicate entries in the rows of $A$. This is where the proposition's construction comes in: if we instead construct $A'\in \R^{N\times \frac{1}{2}(d^2 + d)}$ such that the $i$\textsuperscript{th} row is only the flattened upper diagonal component of $B_i \in \R^{d\times d}$, and further modify $M_{\tv}' \in \R^{\frac{1}{2}(d^2 + d)\times D}$ to only contain upper-diagonal entries of the list $\tv_{jk}$, scaling each off-diagonal entry by $2$, then we can write the following:
        \begin{equation}
            \mathcal{L}_{\pou}(\tU, \tV) = \|D_{\pou(x_i)}A'M'_{\tv} - D_{\pou(x_i)}C\|_F^2,
        \end{equation}
        where the matrix $A'$ can now feasibly be full column-rank. As the above is a standard least squares problem, the solution $M'_{\hv}$ (and equivalently the solution for $\hV$) is unique only when $D_{\pou(x_i)}A'$ is of full column-rank, which since $\pou(x_i) > 0$ for all $x_i$ means $\hV$ is unique only when $A'$ is full rank.

        \item Note that, using the notation of part 2., part 1. implies that $A'M'_{\hv}\tU = 0$. Since $A'$ is of full column rank, this implies that $M'_{\hv}\tU = 0$, which implies $\tU^\top \hv_{jk} = 0$ for all $1 \le j, k, \le d$. Since any output of $\hV$ is a linear combination of $\hv_{jk}$, it follows that $\tU^\top \hV(w_1, w_2) = 0$ for all $w_1, w_2 \in \R^D$.
    \end{enumerate}
\end{proof}
\section{Algorithmic analysis}\label{sec:alg_analysis}

We provide here some algorithmic analysis to give the reader an idea for the computational burden that computing FlatNet entails. While we currently do not have rigorous computational guarantees, we can give ideas for scaling by providing critical point characterization, and time complexity of computing the cost function (including all network evaluations).

\subsection{Time complexity}\label{sec:time_complexity}

To give the reader an idea of how FlatNet scales with data size and complexity, we provide some basic asymptotic time complexity analysis on the main bottleneck computation for constructing a FlatNet pair $f, g$: the optimization in eq. \eqref{eq:autoencoding_loss}. The following theorem encapsulates this time complexity analysis:

\begin{theorem}
    The loss function given in eq. \eqref{eq:autoencoding_loss} of the main body can be computed in $O(NDd^2)$ flops.
\end{theorem}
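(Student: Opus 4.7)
The plan is a direct operation count: we show each of the $N$ summands in the loss \eqref{eq:autoencoding_loss} can be evaluated in $O(Dd^2)$ flops by carefully ordering the matrix and tensor contractions, and then the factor of $N$ comes from summing. The crucial upfront observation is that while $\tU\tU^\top$ and $I - \tU\tU^\top$ are $D\times D$ operators, one must never materialize them as $D\times D$ matrices, since that alone would cost $\Omega(D^2)$ and violate the target bound; instead every expression involving them should be parsed so that $\tU^\top$ acts first on a $D$-vector, leaving only $d$-dimensional intermediates until one must re-lift to $\R^D$.

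Concretely, for each sample index $i$ I would carry out the following steps:
\begin{enumerate}
    \item Form $w_i \doteq x_i - \xz \in \R^D$ in $O(D)$ flops, and along the way accumulate $\norm{w_i}_2^2$ in $O(D)$.
    \item Form $\alpha_i \doteq \tU^\top w_i \in \R^d$ in $O(Dd)$ flops via a matrix--vector product.
    \item Form $\tU\alpha_i \in \R^D$ in $O(Dd)$ flops, and the normal-space residual $(I - \tU\tU^\top) w_i = w_i - \tU\alpha_i$ in a further $O(D)$ flops.
    \item Evaluate the bilinear term
    \[
        \tV(\tU\tU^\top w_i,\ \tU\tU^\top w_i) \;=\; \sum_{j=1}^{d}\sum_{k=1}^{d} \alpha_{ij}\alpha_{ik}\,\tv_{jk}
    \]
    by first forming the outer product $\alpha_i\alpha_i^\top \in \R^{d\times d}$ in $O(d^2)$ flops, and then computing the associated weighted sum of the $d^2$ vectors $\tv_{jk} \in \R^D$ in $O(Dd^2)$ flops.
    \item Subtract and take the squared $\ell_2$ norm in $O(D)$ flops.
    \item Evaluate $\pou(x_i)$ (a constant times $\exp(-\lambda\norm{w_i}_2^2)$) in $O(1)$ additional work since $\norm{w_i}_2^2$ was already computed, and multiply the squared norm by $\pou(x_i)^2$ in $O(1)$ flops.
\end{enumerate}
The dominant term per sample is step (4), giving $O(Dd^2)$ per sample.

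Summing over $i = 1,\dots,N$ contributes $O(NDd^2)$ flops, and the outer $1/N$ averaging adds only $O(N)$. The only subtlety, which is cosmetic rather than technical, is step (4): one might be tempted to exploit the symmetry $\tv_{jk} = \tv_{kj}$ to cut the constant, but since the statement bounds evaluation cost for an arbitrary $\tV \in \R^{D\times d\times d}$ (symmetry is only enforced at the least-squares optimum, as recorded in Proposition~\ref{prop:V_normal_space}), we retain the full $d^2$-term sum and obtain $O(NDd^2)$ without any additional assumption. No nontrivial analysis beyond this bookkeeping is required, which is why I expect the ``hard part'' here to be purely notational, namely verifying that the listed contraction order never instantiates a $D\times D$ or $D\times D\times d$ intermediate.
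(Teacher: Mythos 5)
Your proposal is correct and follows essentially the same approach as the paper's proof: a direct operation count in which $\tU^\top$ acts first to produce $d$-dimensional intermediates, the normal-space projection costs $O(Dd)$, the double sum over $j,k$ of the $v_{jk}\in\R^D$ terms dominates at $O(Dd^2)$, and the outer sum contributes the factor of $N$. The paper's write-up is slightly terser (it does not explicitly remark on avoiding the $D\times D$ materialization or on accumulating $\norm{w_i}_2^2$ for the $\pou(x_i)$ evaluation), but the contraction order and resulting bound are identical.
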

\begin{proof}
    For reference, the cost function in question is the following:

    \begin{equation}\label{eq:proof_cost}
        \mathcal{L}_{\pou}(\tU, \tV) = \frac{1}{N}\sum_{i = 1}^{N}\pou(x_{i})^{2}\norm{\sum_{j=1}^d\sum_{k=1}^d \tv_{jk}\ip{\tu_j}{x_i - \xz}\ip{\tu_k}{x_i - \xz} - (I - \tU\tU^{\top})(x_{i} - \xz)}_{2}^{2}.
    \end{equation}

    The outer sum's size of eq. \eqref{eq:proof_cost} will be at most $N$, so we can incur an $O(N)$ cost and focus our analysis on the summand for fixed $x \in X$:

    \begin{equation}\label{eq:proof_g_doublesum}
        \norm{\sum_{j=1}^d\sum_{k=1}^d \tv_{jk}\ip{\tu_j}{x_i - \xz}\ip{\tu_k}{x_i - \xz} - (I - \tU\tU^{\top})(x_{i} - \xz)}_{2}^{2}
    \end{equation}

    Since $U\in \R^{D \times d}$, $U^\top (x_{i} - \xz)$ is computable in $O(Dd)$ flops, and the subsequent multiplication of $U$ requires another $O(Dd)$ flops. Thus, $(I - \tU\tU^{\top})(x_{i} - \xz)$ is computable in $O(Dd)$ multiplications. For the remaining double sum $\sum_{j=1}^d\sum_{k=1}^d \tv_{jk}\ip{\tu_j}{x_i - \xz}\ip{\tu_k}{x_i - \xz}$, we can reduce to a single summand as before and incur a cost of $O(d^2)$, reducing time complexity analysis to the following:

    \begin{equation}
        \tv_{jk}\ip{\tu_j}{x_i - \xz}\ip{\tu_k}{x_i - \xz}
    \end{equation}

    Since $\ip{\tu_j}{x_i - \xz}$ have already been computed from $U^\top (x_i - \xz)$ and $\tv_{jk} \in \R^{D}$, the resulting expression requires $O(D)$ multiplications to compute. Thus, eq. \eqref{eq:proof_g_doublesum} requires $O(Dd^2)$ flops to compute. The last uncomputed operation is the final $L^2$ norm of the $D$-dimensional difference vector, which requires $O(D)$ flops. Finally, the overall computation takes $O(N(Dd^2 + Dd + D)) = O(NDd^2)$ flops.
\end{proof}

\bibliography{cc}

\begin{thebibliography}{74}
\providecommand{\natexlab}[1]{#1}
\providecommand{\url}[1]{\texttt{#1}}
\expandafter\ifx\csname urlstyle\endcsname\relax
  \providecommand{\doi}[1]{doi: #1}\else
  \providecommand{\doi}{doi: \begingroup \urlstyle{rm}\Url}\fi

\bibitem[Abresch and Langer(1986)]{abresch1986normalized}
U.~Abresch and J.~Langer.
\newblock The normalized curve shortening flow and homothetic solutions.
\newblock \emph{Journal of Differential Geometry}, 23\penalty0 (2):\penalty0
  175--196, 1986.

\bibitem[Arora et~al.(2018)Arora, Risteski, and Zhang]{arora2018gans}
S.~Arora, A.~Risteski, and Y.~Zhang.
\newblock Do gans learn the distribution? some theory and empirics.
\newblock In \emph{International Conference on Learning Representations}, 2018.

\bibitem[Belkin and Niyogi(2003)]{belkin2003laplacian}
M.~Belkin and P.~Niyogi.
\newblock Laplacian eigenmaps for dimensionality reduction and data
  representation.
\newblock \emph{Neural computation}, 15\penalty0 (6):\penalty0 1373--1396,
  2003.

\bibitem[Bengio et~al.(2013)Bengio, Courville, and
  Vincent]{bengio2013representation}
Y.~Bengio, A.~Courville, and P.~Vincent.
\newblock Representation learning: A review and new perspectives.
\newblock \emph{IEEE transactions on pattern analysis and machine
  intelligence}, 35\penalty0 (8):\penalty0 1798--1828, 2013.

\bibitem[Birsan and Tiba(2005)]{birsan2005one}
T.~Birsan and D.~Tiba.
\newblock One hundred years since the introduction of the set distance by
  dimitrie pompeiu.
\newblock \emph{System Modelling and Optimization}, 199:\penalty0 35--39, 2005.

\bibitem[Boumal(2020)]{boumal2020introduction}
N.~Boumal.
\newblock An introduction to optimization on smooth manifolds.
\newblock \emph{Available online, Aug}, 2020.

\bibitem[Brown et~al.(2022)Brown, Caterini, Ross, Cresswell, and
  Loaiza-Ganem]{brown2022union}
B.~C. Brown, A.~L. Caterini, B.~L. Ross, J.~C. Cresswell, and G.~Loaiza-Ganem.
\newblock The union of manifolds hypothesis and its implications for deep
  generative modelling.
\newblock \emph{arXiv preprint arXiv:2207.02862}, 2022.

\bibitem[Campadelli et~al.(2015)Campadelli, Casiraghi, Ceruti, and
  Rozza]{campadelli2015intrinsic}
P.~Campadelli, E.~Casiraghi, C.~Ceruti, and A.~Rozza.
\newblock Intrinsic dimension estimation: Relevant techniques and a benchmark
  framework.
\newblock \emph{Mathematical Problems in Engineering}, 2015:\penalty0 1--21,
  2015.

\bibitem[Carter et~al.(2009)Carter, Raich, and Hero~III]{carter2009local}
K.~M. Carter, R.~Raich, and A.~O. Hero~III.
\newblock On local intrinsic dimension estimation and its applications.
\newblock \emph{IEEE Transactions on Signal Processing}, 58\penalty0
  (2):\penalty0 650--663, 2009.

\bibitem[Chan et~al.(2022)Chan, Yu, You, Qi, Wright, and Ma]{chan2022redunet}
K.~Chan, Y.~Yu, C.~You, H.~Qi, J.~Wright, and Y.~Ma.
\newblock Redunet: A white-box deep network from the principle of maximizing
  rate reduction.
\newblock \emph{Journal of machine learning research}, 23\penalty0 (114), 2022.

\bibitem[Chen(1996)]{chen1996rapid}
C.~P. Chen.
\newblock A rapid supervised learning neural network for function interpolation
  and approximation.
\newblock \emph{IEEE Transactions on Neural Networks}, 7\penalty0 (5):\penalty0
  1220--1230, 1996.

\bibitem[Chen et~al.(2022)Chen, Chewi, Li, Li, Salim, and
  Zhang]{chen2022sampling}
S.~Chen, S.~Chewi, J.~Li, Y.~Li, A.~Salim, and A.~R. Zhang.
\newblock Sampling is as easy as learning the score: theory for diffusion
  models with minimal data assumptions.
\newblock \emph{arXiv preprint arXiv:2209.11215}, 2022.

\bibitem[Coll et~al.(2020)Coll, Dodd, and Johnson]{coll2020ricci}
V.~E. Coll, J.~Dodd, and D.~L. Johnson.
\newblock Ricci flow on surfaces of revolution: an extrinsic view.
\newblock \emph{Geometriae Dedicata}, 207\penalty0 (1):\penalty0 81--94, 2020.

\bibitem[Costa and Hero(2006)]{costa2006determining}
J.~A. Costa and A.~O. Hero.
\newblock Determining intrinsic dimension and entropy of high-dimensional shape
  spaces.
\newblock In \emph{Statistics and analysis of shapes}, pages 231--252.
  Springer, 2006.

\bibitem[Deng(2012)]{deng2012mnist}
L.~Deng.
\newblock The mnist database of handwritten digit images for machine learning
  research.
\newblock \emph{IEEE Signal Processing Magazine}, 29\penalty0 (6):\penalty0
  141--142, 2012.

\bibitem[Devlin et~al.(2018)Devlin, Chang, Lee, and Toutanova]{devlin2018bert}
J.~Devlin, M.-W. Chang, K.~Lee, and K.~Toutanova.
\newblock Bert: Pre-training of deep bidirectional transformers for language
  understanding.
\newblock \emph{arXiv preprint arXiv:1810.04805}, 2018.

\bibitem[Dieudonn{\'e}(1937)]{dieudonne1937fonctions}
J.~Dieudonn{\'e}.
\newblock Sur les fonctions continues num{\'e}rique d{\'e}finies dans une
  produit de deux espaces compacts.
\newblock \emph{Comptes Rendus Acad. Sci. Paris}, 205:\penalty0 593--595, 1937.

\bibitem[Erickson(1996)]{erickson1996lower}
J.~G. Erickson.
\newblock \emph{Lower bounds for fundamental geometric problems}.
\newblock University of California, Berkeley, 1996.

\bibitem[Facco et~al.(2017)Facco, d’Errico, Rodriguez, and
  Laio]{facco2017estimating}
E.~Facco, M.~d’Errico, A.~Rodriguez, and A.~Laio.
\newblock Estimating the intrinsic dimension of datasets by a minimal
  neighborhood information.
\newblock \emph{Scientific reports}, 7\penalty0 (1):\penalty0 1--8, 2017.

\bibitem[Fefferman et~al.(2016)Fefferman, Mitter, and
  Narayanan]{fefferman2016testing}
C.~Fefferman, S.~Mitter, and H.~Narayanan.
\newblock Testing the manifold hypothesis.
\newblock \emph{Journal of the American Mathematical Society}, 29\penalty0
  (4):\penalty0 983--1049, 2016.

\bibitem[Fraikin et~al.(2007)Fraikin, H{\"u}per, and
  Dooren]{fraikin2007optimization}
C.~Fraikin, K.~H{\"u}per, and P.~V. Dooren.
\newblock Optimization over the stiefel manifold.
\newblock In \emph{PAMM: Proceedings in Applied Mathematics and Mechanics},
  volume~7, pages 1062205--1062206. Wiley Online Library, 2007.

\bibitem[Gale(1963)]{gale1963neighborly}
D.~Gale.
\newblock Neighborly and cyclic polytopes.
\newblock In \emph{Proc. Sympos. Pure Math}, volume~7, pages 225--232, 1963.

\bibitem[Goodfellow et~al.(2014)Goodfellow, Pouget-Abadie, Mirza, Xu,
  Warde-Farley, Ozair, Courville, and Bengio]{goodfellow2014generative}
I.~Goodfellow, J.~Pouget-Abadie, M.~Mirza, B.~Xu, D.~Warde-Farley, S.~Ozair,
  A.~Courville, and Y.~Bengio.
\newblock Generative adversarial nets.
\newblock \emph{Advances in neural information processing systems}, 27, 2014.

\bibitem[Hamilton(1982)]{hamilton1982three}
R.~S. Hamilton.
\newblock Three-manifolds with positive ricci curvature.
\newblock \emph{Journal of Differential geometry}, 17\penalty0 (2):\penalty0
  255--306, 1982.

\bibitem[He et~al.(2022)He, Chen, Xie, Li, Doll{\'a}r, and
  Girshick]{he2022masked}
K.~He, X.~Chen, S.~Xie, Y.~Li, P.~Doll{\'a}r, and R.~Girshick.
\newblock Masked autoencoders are scalable vision learners.
\newblock In \emph{Proceedings of the IEEE/CVF Conference on Computer Vision
  and Pattern Recognition}, pages 16000--16009, 2022.

\bibitem[Hein and Audibert(2005)]{hein2005intrinsic}
M.~Hein and J.-Y. Audibert.
\newblock Intrinsic dimensionality estimation of submanifolds in rd.
\newblock In \emph{Proceedings of the 22nd international conference on Machine
  learning}, pages 289--296, 2005.

\bibitem[Higgins et~al.(2017)Higgins, Matthey, Pal, Burgess, Glorot, Botvinick,
  Mohamed, and Lerchner]{higgins2017beta}
I.~Higgins, L.~Matthey, A.~Pal, C.~Burgess, X.~Glorot, M.~Botvinick,
  S.~Mohamed, and A.~Lerchner.
\newblock beta-vae: Learning basic visual concepts with a constrained
  variational framework.
\newblock In \emph{International conference on learning representations}, 2017.

\bibitem[Hinton(2022)]{hinton2022forward}
G.~Hinton.
\newblock The forward-forward algorithm: Some preliminary investigations.
\newblock \emph{arXiv preprint arXiv:2212.13345}, 2022.

\bibitem[Ho et~al.(2020)Ho, Jain, and Abbeel]{ho2020denoising}
J.~Ho, A.~Jain, and P.~Abbeel.
\newblock Denoising diffusion probabilistic models.
\newblock \emph{Advances in Neural Information Processing Systems},
  33:\penalty0 6840--6851, 2020.

\bibitem[Hoppe et~al.(1993)Hoppe, DeRose, Duchamp, McDonald, and
  Stuetzle]{hoppe1993mesh}
H.~Hoppe, T.~DeRose, T.~Duchamp, J.~McDonald, and W.~Stuetzle.
\newblock Mesh optimization.
\newblock In \emph{Proceedings of the 20th annual conference on Computer
  graphics and interactive techniques}, pages 19--26, 1993.

\bibitem[Huisken(1984)]{huisken1984flow}
G.~Huisken.
\newblock Flow by mean curvature of convex surfaces into spheres.
\newblock \emph{Journal of Differential Geometry}, 20\penalty0 (1):\penalty0
  237--266, 1984.

\bibitem[Jansen et~al.(2017)Jansen, Sell, and Lyzinski]{jansen2017scalable}
A.~Jansen, G.~Sell, and V.~Lyzinski.
\newblock Scalable out-of-sample extension of graph embeddings using deep
  neural networks.
\newblock \emph{Pattern Recognition Letters}, 94:\penalty0 1--6, 2017.

\bibitem[Kim and Mnih(2018)]{kim2018disentangling}
H.~Kim and A.~Mnih.
\newblock Disentangling by factorising.
\newblock In \emph{International Conference on Machine Learning}, pages
  2649--2658. PMLR, 2018.

\bibitem[Kingma and Welling(2013)]{kingma2013auto}
D.~P. Kingma and M.~Welling.
\newblock Auto-encoding variational bayes.
\newblock \emph{arXiv preprint arXiv:1312.6114}, 2013.

\bibitem[Kobyzev et~al.(2019)Kobyzev, Prince, and
  Brubaker]{kobyzev2019normalizing}
I.~Kobyzev, S.~Prince, and M.~Brubaker.
\newblock Normalizing flows: An introduction and review of current methods,
  arxiv e-prints.
\newblock \emph{arXiv preprint arXiv:1908.09257}, 2019.

\bibitem[Kramer(1991)]{kramer1991nonlinear}
M.~A. Kramer.
\newblock Nonlinear principal component analysis using autoassociative neural
  networks.
\newblock \emph{AIChE journal}, 37\penalty0 (2):\penalty0 233--243, 1991.

\bibitem[Lahiri et~al.(2016)Lahiri, Gao, and Ganguli]{lahiri2016random}
S.~Lahiri, P.~Gao, and S.~Ganguli.
\newblock Random projections of random manifolds.
\newblock \emph{arXiv preprint arXiv:1607.04331}, 2016.

\bibitem[Lawrence and Hyv{\"a}rinen(2005)]{lawrence2005probabilistic}
N.~Lawrence and A.~Hyv{\"a}rinen.
\newblock Probabilistic non-linear principal component analysis with gaussian
  process latent variable models.
\newblock \emph{Journal of machine learning research}, 6\penalty0 (11), 2005.

\bibitem[Lee(2012)]{lee2012smoothmanifolds}
J.~Lee.
\newblock \emph{Introduction to Smooth Manifolds}, volume 218 of \emph{Graduate
  Texts in Mathematics}.
\newblock Springer-Verlag New York, 2nd edition, 2012.
\newblock \doi{10.1007/978-1-4419-9982-5}.

\bibitem[Lee(2018)]{lee2018introduction}
J.~M. Lee.
\newblock \emph{Introduction to Riemannian manifolds}, volume 176.
\newblock Springer, 2018.

\bibitem[Levina and Bickel(2004{\natexlab{a}})]{levina2004maximum}
E.~Levina and P.~Bickel.
\newblock Maximum likelihood estimation of intrinsic dimension.
\newblock \emph{Advances in neural information processing systems}, 17,
  2004{\natexlab{a}}.

\bibitem[Levina and Bickel(2004{\natexlab{b}})]{levinaMLE}
E.~Levina and P.~Bickel.
\newblock Maximum likelihood estimation of intrinsic dimension.
\newblock In L.~Saul, Y.~Weiss, and L.~Bottou, editors, \emph{Advances in
  Neural Information Processing Systems}, volume~17. MIT Press,
  2004{\natexlab{b}}.
\newblock URL
  \url{https://proceedings.neurips.cc/paper/2004/file/74934548253bcab8490ebd74afed7031-Paper.pdf}.

\bibitem[Li et~al.(2019)Li, Li, and Zhang]{li2019le}
B.~Li, Y.-R. Li, and X.-L. Zhang.
\newblock A survey on laplacian eigenmaps based manifold learning methods.
\newblock \emph{Neurocomputing}, 335:\penalty0 336--351, 2019.
\newblock ISSN 0925-2312.
\newblock \doi{https://doi.org/10.1016/j.neucom.2018.06.077}.
\newblock URL
  \url{https://www.sciencedirect.com/science/article/pii/S0925231218312645}.

\bibitem[Li et~al.(2020)Li, Fuxin, and Todorovic]{li2020efficient}
J.~Li, L.~Fuxin, and S.~Todorovic.
\newblock Efficient riemannian optimization on the stiefel manifold via the
  cayley transform.
\newblock \emph{arXiv preprint arXiv:2002.01113}, 2020.

\bibitem[Li et~al.(2008)Li, Pan, and Chu]{li2008klpp}
J.-B. Li, J.-S. Pan, and S.-C. Chu.
\newblock Kernel class-wise locality preserving projection.
\newblock \emph{Information Sciences}, 178\penalty0 (7):\penalty0 1825--1835,
  2008.
\newblock ISSN 0020-0255.
\newblock \doi{https://doi.org/10.1016/j.ins.2007.12.001}.
\newblock URL
  \url{https://www.sciencedirect.com/science/article/pii/S0020025507005658}.

\bibitem[Lucas et~al.(2019)Lucas, Tucker, Grosse, and
  Norouzi]{lucas2019understanding}
J.~Lucas, G.~Tucker, R.~Grosse, and M.~Norouzi.
\newblock Understanding posterior collapse in generative latent variable
  models.
\newblock 2019.

\bibitem[Ma et~al.(2022)Ma, Tsao, and Shum]{ma2022principles}
Y.~Ma, D.~Tsao, and H.-Y. Shum.
\newblock On the principles of parsimony and self-consistency for the emergence
  of intelligence.
\newblock \emph{Frontiers of Information Technology \& Electronic Engineering},
  23\penalty0 (9):\penalty0 1298--1323, 2022.

\bibitem[McInnes et~al.(2018)McInnes, Healy, and Melville]{mcinnes2018umap}
L.~McInnes, J.~Healy, and J.~Melville.
\newblock Umap: Uniform manifold approximation and projection for dimension
  reduction.
\newblock \emph{arXiv preprint arXiv:1802.03426}, 2018.

\bibitem[Monera et~al.(2014)Monera, Montesinos-Amilibia, and
  Sanabria-Codesal]{monera2014taylor}
M.~G. Monera, A.~Montesinos-Amilibia, and E.~Sanabria-Codesal.
\newblock The taylor expansion of the exponential map and geometric
  applications.
\newblock \emph{Revista de la Real Academia de Ciencias Exactas, Fisicas y
  Naturales. Serie A. Matematicas}, 108\penalty0 (2):\penalty0 881--906, 2014.

\bibitem[Oue(1996)]{oue1996asymptotics}
S.~Oue.
\newblock On asymptotics of local principal component analysis.
\newblock \emph{Hitotsubashi journal of commerce and management}, pages 1--11,
  1996.

\bibitem[Qu et~al.()Qu, Zhai, Li, Zhang, and Zhu]{qu2020geometric}
Q.~Qu, Y.~Zhai, X.~Li, Y.~Zhang, and Z.~Zhu.
\newblock Geometric analysis of nonconvex optimization landscapes for
  overcomplete learning.
\newblock In \emph{International Conference on Learning Representations}.

\bibitem[Rezende and Viola(2018)]{rezende2018taming}
D.~J. Rezende and F.~Viola.
\newblock Taming vaes.
\newblock \emph{arXiv preprint arXiv:1810.00597}, 2018.

\bibitem[Rombach et~al.(2022)Rombach, Blattmann, Lorenz, Esser, and
  Ommer]{rombach2022high}
R.~Rombach, A.~Blattmann, D.~Lorenz, P.~Esser, and B.~Ommer.
\newblock High-resolution image synthesis with latent diffusion models.
\newblock In \emph{Proceedings of the IEEE/CVF Conference on Computer Vision
  and Pattern Recognition}, pages 10684--10695, 2022.

\bibitem[Ronneberger et~al.(2015)Ronneberger, Fischer, and
  Brox]{ronneberger2015u}
O.~Ronneberger, P.~Fischer, and T.~Brox.
\newblock U-net: Convolutional networks for biomedical image segmentation.
\newblock In \emph{International Conference on Medical image computing and
  computer-assisted intervention}, pages 234--241. Springer, 2015.

\bibitem[Roweis and Saul(2000)]{roweis2000nonlinear}
S.~T. Roweis and L.~K. Saul.
\newblock Nonlinear dimensionality reduction by locally linear embedding.
\newblock \emph{science}, 290\penalty0 (5500):\penalty0 2323--2326, 2000.

\bibitem[Schmidt et~al.(1992)Schmidt, Kraaijveld, Duin,
  et~al.]{schmidt1992feed}
W.~F. Schmidt, M.~A. Kraaijveld, R.~P. Duin, et~al.
\newblock Feed forward neural networks with random weights.
\newblock In \emph{International conference on pattern recognition}, pages
  1--1. IEEE Computer Society Press, 1992.

\bibitem[Sohl-Dickstein et~al.(2015)Sohl-Dickstein, Weiss, Maheswaranathan, and
  Ganguli]{sohl2015deep}
J.~Sohl-Dickstein, E.~Weiss, N.~Maheswaranathan, and S.~Ganguli.
\newblock Deep unsupervised learning using nonequilibrium thermodynamics.
\newblock In \emph{International Conference on Machine Learning}, pages
  2256--2265. PMLR, 2015.

\bibitem[Song and Ermon(2019)]{song2019generative}
Y.~Song and S.~Ermon.
\newblock Generative modeling by estimating gradients of the data distribution.
\newblock \emph{Advances in neural information processing systems}, 32, 2019.

\bibitem[Tak{\'a}cs and Tikk(2012)]{takacs2012alternating}
G.~Tak{\'a}cs and D.~Tikk.
\newblock Alternating least squares for personalized ranking.
\newblock In \emph{Proceedings of the sixth ACM conference on Recommender
  systems}, pages 83--90, 2012.

\bibitem[Tenenbaum et~al.(2000)Tenenbaum, Silva, and
  Langford]{tenenbaum2000global}
J.~B. Tenenbaum, V.~d. Silva, and J.~C. Langford.
\newblock A global geometric framework for nonlinear dimensionality reduction.
\newblock \emph{science}, 290\penalty0 (5500):\penalty0 2319--2323, 2000.

\bibitem[Turian et~al.(2010)Turian, Ratinov, and Bengio]{turian2010word}
J.~Turian, L.~Ratinov, and Y.~Bengio.
\newblock Word representations: a simple and general method for semi-supervised
  learning.
\newblock In \emph{Proceedings of the 48th annual meeting of the association
  for computational linguistics}, pages 384--394, 2010.

\bibitem[Vahdat et~al.(2021)Vahdat, Kreis, and Kautz]{vahdat2021score}
A.~Vahdat, K.~Kreis, and J.~Kautz.
\newblock Score-based generative modeling in latent space.
\newblock \emph{Advances in Neural Information Processing Systems},
  34:\penalty0 11287--11302, 2021.

\bibitem[Van Den~Oord et~al.(2017)Van Den~Oord, Vinyals, et~al.]{van2017neural}
A.~Van Den~Oord, O.~Vinyals, et~al.
\newblock Neural discrete representation learning.
\newblock \emph{Advances in neural information processing systems}, 30, 2017.

\bibitem[Van~der Maaten and Hinton(2008)]{van2008visualizing}
L.~Van~der Maaten and G.~Hinton.
\newblock Visualizing data using t-sne.
\newblock \emph{Journal of machine learning research}, 9\penalty0 (11), 2008.

\bibitem[Vandereycken(2013)]{vandereycken2013low}
B.~Vandereycken.
\newblock Low-rank matrix completion by riemannian optimization.
\newblock \emph{SIAM Journal on Optimization}, 23\penalty0 (2):\penalty0
  1214--1236, 2013.

\bibitem[Vidal et~al.(2005)Vidal, Ma, and Sastry]{vidal2005generalized}
R.~Vidal, Y.~Ma, and S.~Sastry.
\newblock Generalized principal component analysis (gpca).
\newblock \emph{IEEE transactions on pattern analysis and machine
  intelligence}, 27\penalty0 (12):\penalty0 1945--1959, 2005.

\bibitem[Wakin et~al.(2005)Wakin, Donoho, Choi, and
  Baraniuk]{wakin2005multiscale}
M.~B. Wakin, D.~L. Donoho, H.~Choi, and R.~G. Baraniuk.
\newblock The multiscale structure of non-differentiable image manifolds.
\newblock In \emph{Wavelets XI}, volume 5914, page 59141B. International
  Society for Optics and Photonics, 2005.

\bibitem[Wright and Ma(2022)]{wright2022high}
J.~Wright and Y.~Ma.
\newblock \emph{High-dimensional data analysis with low-dimensional models:
  Principles, computation, and applications}.
\newblock Cambridge University Press, 2022.

\bibitem[Xu et~al.(2008)Xu, Paiva, Park, and Principe]{xu2008reproducing}
J.-W. Xu, A.~R. Paiva, I.~Park, and J.~C. Principe.
\newblock A reproducing kernel hilbert space framework for
  information-theoretic learning.
\newblock \emph{IEEE Transactions on Signal Processing}, 56\penalty0
  (12):\penalty0 5891--5902, 2008.

\bibitem[Yu et~al.(2020)Yu, Chan, You, Song, and Ma]{yu2020learning}
Y.~Yu, K.~H.~R. Chan, C.~You, C.~Song, and Y.~Ma.
\newblock Learning diverse and discriminative representations via the principle
  of maximal coding rate reduction.
\newblock \emph{Advances in Neural Information Processing Systems},
  33:\penalty0 9422--9434, 2020.

\bibitem[Zhai et~al.()Zhai, Mehta, Zhou, and Ma]{zhai2020understanding}
Y.~Zhai, H.~Mehta, Z.~Zhou, and Y.~Ma.
\newblock Understanding l4-based dictionary learning: Interpretation,
  stability, and robustness.
\newblock In \emph{International conference on learning representations}.

\bibitem[Zhai et~al.(2020)Zhai, Yang, Liao, Wright, and Ma]{zhai2020complete}
Y.~Zhai, Z.~Yang, Z.~Liao, J.~Wright, and Y.~Ma.
\newblock Complete dictionary learning via l 4-norm maximization over the
  orthogonal group.
\newblock \emph{The Journal of Machine Learning Research}, 21\penalty0
  (1):\penalty0 6622--6689, 2020.

\bibitem[Zhang et~al.(2011)Zhang, Qiao, and Zhang]{zhang2011improved}
P.~Zhang, H.~Qiao, and B.~Zhang.
\newblock An improved local tangent space alignment method for manifold
  learning.
\newblock \emph{Pattern Recognition Letters}, 32\penalty0 (2):\penalty0
  181--189, 2011.

\bibitem[Zhang and Zha(2003)]{zhang2003nonlinear}
Z.~Zhang and H.~Zha.
\newblock Nonlinear dimension reduction via local tangent space alignment.
\newblock In \emph{Intelligent Data Engineering and Automated Learning: 4th
  International Conference, IDEAL 2003, Hong Kong, China, March 21-23, 2003.
  Revised Papers 4}, pages 477--481. Springer, 2003.

\end{thebibliography}
\end{document}